\newcommand{\defeq}{\stackrel{\text{\tiny \text{def}}}{=}}
\newtheorem{theorem}{Theorem}
\newtheorem{proposition}{Proposition}
\newtheorem{lemma}{Lemma}
\begin{document}
	\title{Fundamental Limits of Reinforcement Learning in Environment with Endogeneous and Exogeneous Uncertainty}
	\author{\IEEEauthorblockN{Rongpeng Li}
	\thanks{R. Li is with College of Information Science \& Electronic Engineering, Zhejiang University, Hangzhou 310027, China (email: lirongpeng@zju.edu.cn). Part of this work is done when R. Li was a visiting scholar at College of Computer Science and Technology, The University of Cambridge, UK.}}	
	\maketitle
	\begin{abstract}
		Online reinforcement learning (RL) has been widely applied in information processing scenarios, which usually exhibit much uncertainty due to the intrinsic randomness of channels and service demands. In this paper, we consider an un-discounted RL in general Markov decision processes (MDPs) with both endogeneous and exogeneous uncertainty, where both the rewards and state transition probability are unknown to the RL agent and evolve with the time as long as their respective variations do not exceed certain dynamic budget (i.e., upper bound). We first develop a variation-aware Bernstein-based upper confidence reinforcement learning (VB-UCRL), which we allow to restart according to a schedule dependent on the variations. We successfully overcome the challenges due to the exogeneous uncertainty and establish a regret bound of saving at most $\sqrt{S}$ or $S^{\frac{1}{6}}T^{\frac{1}{12}}$ compared with the latest results in the literature, where $S$ denotes the state size of the MDP and $T$ indicates the iteration index of learning steps.
	\end{abstract}
	
	\begin{IEEEkeywords}
		 Reinforcement learning, regret bound, Markov decision process, endogeneous and exogeneous uncertainty
	\end{IEEEkeywords}
	
	\section{Introduction}
	Reinforcement learning (RL) \cite{sutton_reinforcement_1998}, one category of machine learning, has manifested itself in empowering an agent to interact with an environment with much uncertainty. Besides its success in AlphaGo \cite{silver_mastering_2016}, RL has been widely applied in solving information processing problems in areas including wireless communications \& networking \cite{xu_experience-driven_2019,he_trust-based_2020,somuyiwa_reinforcement-learning_2018,nasir_multi-agent_2019,he_software-defined_2017,yu_deep-reinforcement_2019,istepanian_medical_2009,xu_reinforcement_2020,li_smartcc_2019,he_joint_2019,chen_multi-tenant_2019,tang_deep_2020,luong_applications_2019,hua_gan-powered_2020,li_intelligent_2017}, computer vision and robot navigation \cite{bernstein_reinforcement_2018,bernstein_reinforcement_2018-1}. Typically, the RL formalizes this interaction between the agent and the environment in these scenarios through an ``economic" perspective \cite{fruit_exploration-exploitation_2019}. In other words, RL often models the problem as a Markov decision process (MDP), consisting of a tuple of state, action, reward (or equivalently loss) and transition probability, and an RL agent tries to maximize the cumulative rewards or minimizing the cumulative loss, by observing the environment as a state and taking an action accordingly. For example, \cite{xu_experience-driven_2019} and \cite{li_smartcc_2019} try to achieve larger throughput, by considering the networking factors (e.g., goodput, average round-trip time (RTT), data rate) as the state and learning a RL policy to adjust the congestion window of transmission control protocol (TCP). Meanwhile, \cite{he_joint_2019} aims to maximize the rate by regarding the channel information as the state and leveraging RL to make channel assignment and power allocation actions. Note-worthily, in these scenarios, online RL emerges as a popular option. Therefore, it naturally raises a question what is the fundamental performance limit of online RL-based solutions regardless of the specific RL applications?

	The difficulty to know this limit mainly lies in the \emph{endogeneous} and \emph{exogeneous} uncertainty in the MDP. Specifically, in the classical time-homogeneous MDP settings, only endogeneous uncertainty is considered. In other words, at each time-step, the reward follows a reward distribution and the subsequent state follows a state transition distribution. Both distributions solely depend on the current state and action and remain fixed along with the temporal variations. Unfortunately, the aforementioned scenarios often face time-varying reward and transition probability distributions, due to non-stationary channels \cite{hua_gan-powered_2020} and service demands. Therefore, the exogeneous uncertainty has to be taken into account. Typically, in order to unveil the uncertainty in the MDP, the RL agent has to explore the MDP to accumulate the related knowledge of those poorly-visited states and actions. As any decision of RL affects the subsequent observations, more exploration usually produces long-term impact yet affects short-term exploitation efficiency, which is also termed as the \emph{exploration-exploitation dilemma} \cite{sutton_reinforcement_1998} originally discussed in the literature of multi-arm bandit (MAB) \cite{lai_asymptotically_1985}. 
	
	There has been intense research interest towards understanding the performance limit of online RL-based solutions for an time-homogeneous MDP. For example, \cite{kearns_near-optimal_2002} talked about the performance of a learned policy, while Jaksch \textit{et al.} gave the performance limit of an RL algorithm during the learning \cite{jaksch_near-optimal_2010}, which is more meaningful for online RL in information processing scenarios. Specifically, Jaksch \textit{et al.} proposed a UCRL2 algorithm (\textit{upper confidence bound for reinforcement learning}) for un-discounted reinforcement learning in communicating MDPs. In other words, UCRL2 implements the paradigm of ``optimism in the face of uncertainty'' and construct plausible MDPs in confidence interval based on the Hoeffding inequality \cite{bartlett_regal_2009} and proves that the total regret of an RL algorithm with respect to an optimal policy could be bounded by $\tilde{O}(DS\sqrt{AT})$, where $\tilde{\mathcal{O}}(\cdot)$ hides the logarithmic factors, $S$ and $A$ denote the size of the state space and action space of the MDP, respectively. $D$ is the diameter of the communicating MDP, indicating the minimal expected number of steps from each state to another state in MDP. Besides, $T$ denotes the iteration of learning steps. Based on UCRL2, many variants have been proposed to generate tighter bounds. \cite{osband_more_2013} proved a more efficient posterior sampling for episodic reinforcement learning and established an $\tilde{O}(\iota S\sqrt{AT})$ bound on the expected regret with the episode length $\iota$. \cite{hao_bootstrapping_2019} proposed a non-parametric and data-dependent algorithm based on the multiplier bootstrap for MAB. 
	\cite{ortner_online_2012} proposed a UCCRL algorithm to derive sublinear regret bounds for finite-horizon un-discounted reinforcement learning in continuous state space. Later, \cite{qian_exploration_2019} focused on an infinite-horizon un-discounted setting and used an exploration bonus to achieve the same regret bound as UCCRL \cite{ortner_online_2012}. 
	
	Until recently, there emerges few light shed on MDP with both endogeneous and exogeneous uncertainty. \cite{li_online_2019} and \cite{li_online_2019-1} talked about online learning for MDP in this non-stationary environment and provided the dynamic regret analysis for exogeneous uncertainty only. \cite{gajane_variational_2019} extended UCRL2 to a variation-aware algorithm and provided performance guarantees for the regret evaluated against the optimal non-stationary policy. \cite{cheung_learning_2019} and \cite{cheung_reinforcement_2020} gave a more comprehensive study of the dynamic regret, and derived a bound of $\tilde{O} \left((V_r^T + V_p^T)^{1/4} S^{2/3} A^{1/2} T^{3/4}\right)$, where $V_r$ and $V_p$ are the dynamic budget (i.e., upper bound) of variations in reward and transition probability functions. 
	
	Compared with the aforementioned research, the contribution of this paper can be summarized as follows.
	\begin{itemize}
		\item We focus on the RL for MDP with both endogeneous and exogeneous uncertainty, which has significant applications in information processing scenarios.
		\item We propose a variation-aware Bernstein-based upper confidence reinforcement learning (VB-UCRL) algorithm, which restarts according to a schedule dependent on the variations in
		the MDP and leverages the empirical Bernstein inequality \cite{audibert_explorationexploitation_2009} to give a tighter bound.
		\item We prove that the VB-UCRL gives a regret bound of $\tilde{O} \left( (V_r + V_p)^{1/3} T^{2/3}  \sqrt{\Gamma S A} \right)$, where $\Gamma$ denotes the maximal number of reachable states for any state-action pair in the MDP. As discussed in Section \ref{sec:discussion}, this bound is tighter than the latest results in \cite{gajane_variational_2019,cheung_reinforcement_2020}.
	\end{itemize}

	The remainder of the paper is organized as follows. In Section \ref{sec:preliminaries}, we introduce some fundamentals of MDPs and useful concentration inequalities. In Section \ref{sec:proSolu} and Section \ref{sec:result}, we formulate the regret problem of RL for MDP with both endogeneous and exogeneous uncertainty, and prove the related results for the proposed VB-UCRL. Section \ref{sec:discussion} discusses the aforementioned results, by comparing with the state-of-the-art results in the literature. We conclude the paper in Section \ref{sec:conclusion}.

	\section{Preliminaries}
	\label{sec:preliminaries}
	\subsection{Fundamentals of MDPs}

	In a time-homogeneous MDP $M = <\mathcal{S},\mathcal{A},r,p,s_1>$ with state space $\mathcal{S}$, action space $\mathcal{A}$ and the initial state $s_1$. Every state-action-pair is characterized by a reward distribution with mean $r(s,a) \in [0,r_{\max}]$ over next states. For simplicity of representation, we denote the size of state space and action space as $S = \vert \mathcal{S} \vert$ and $A = \vert \mathcal{A} \vert$, respectively. Furthermore, we assume the number of reachable states for a state-action pair $(s,a)$ as $\Gamma(s,a) = \Vert p(\cdot|s,a) >0 \Vert_0$ and $\Gamma = \max_{s,a}\Gamma(s,a)$. Notably, in the time-homogeneous MDP, the mean rewards and transition probabilities only depend on the current state and the chosen action. An MDP is called \textit{communicating}, if for any two states $s$, $s^{\prime}$, when starting in $s$ it is possible to reach $s^{\prime}$ with positive probability choosing appropriate actions.
	
	We primarily focus on the infinite-horizon un-discounted MDP settings and try to learn a policy $\pi$ that maximizes 
	\begin{equation}
		\label{eq:gain_initial_def}
		\sup_{\pi \in \Pi} \left\{\liminf_{T\rightarrow+\infty}\mathbb{E}_{s}^{\pi}\left[\frac{1}{T}\sum_{t=1}^{T} r(s_t,a_t) \bigg|s_1 \sim \mu_1 \right] \right\} 
	\end{equation}
	where $\mu_1$ is the state probability of the starting state $s_1$. The set of stationary randomized (resp. deterministic) policies is denoted by $\Pi^{\text{SR}}$ (resp. $\Pi^{\text{SD}})$\footnote{A Markov randomized decision rule $d: \mathcal{S} \rightarrow \mathcal{P}(\mathcal{A})$ maps states to distributions over actions while a Markov deterministic decision rule $d: \mathcal{S} \rightarrow \mathcal{A}$ maps states to actions.}.
	The subset of Markov randomized decision rules is denoted $D^{\text{MR}}$, while the subset of Markov deterministic decision rules is denoted $D^{\text{MD}}$. For any Markov decision rule $d \in D^{\text{MR}}$, $P_d \in \mathbb{R}^{S\times S}$ and $r_d \in \mathbb{R}^S$ denote the transition matrix and reward vector associated with $d$ i.e.,
	$P_d(s'|s) \defeq \sum_{a\in\mathcal{A}_s} d(a|s) p(s'|s,a)$ and $r_d(s) \defeq \sum_{a\in\mathcal{A}_s} d(a|s) r(s,a)$, for all $s, s \in \mathcal{S}$, where $d(a|s)$ is the probability to sample $a$ in state $s$ when using $d$.

	For a stationary policy, i.e., $\pi \in \Pi^{\text{SR}}$, the lim inf in \eqref{eq:gain_initial_def} actually matches the lim sup and the limit is well defined (Section 8.2.1, \cite{puterman_markov_1994}). In other words, any policy $\pi \in \Pi^{\text{SR}}$ has an associated \textit{long-term average reward} (or gain) $g^{\pi}(s)$ and a \textit{bias function} $h(s)$, which is defined as 
	\begin{align}
		g^{\pi}(s) \defeq \lim_{T\rightarrow+\infty}\mathbb{E}_{s}^{\pi}\big[\frac{1}{T}\sum_{t=1}^{T} r(s_t,a_t)\big]
	\end{align}
	and
	\begin{align}
		h^{\pi}(s) & \defeq \underset{T \rightarrow+\infty}{C\textendash\lim } \mathbb{E}_{s}^{\pi}\left[\sum_{t=1}^{T}\left(r\left(s_{t}, a_{t}\right)-g^{\pi}\left(s_{t}\right)\right)\right] \\
		& = \lim_{T\rightarrow+\infty} \frac{1}{T}\sum_{k=1}^{T} \mathbb{E}_{s}^{\pi}\left[\sum_{t=1}^{k}\left(r\left(s_{t}, a_{t}\right)-g^{\pi}\left(s_{t}\right)\right)\right],
	\end{align}
	respectively. Here, ${E}_{s}^{\pi}$ takes an expectation over trajectories generated starting from $s_1 = s$ with action $a_t \sim \pi(s_t)$. The bias $h^{\pi}(s)$ measures the expected total diﬀerence between the reward and the stationary reward in \textit{Cesaro-limit} (denoted by $C\textendash\lim $)\footnote{The Cesaro-limit is always well-deﬁned unlike the “classical” limit as the series may cycle i.e., have several accumulation points or “cluster points”. Note that for policies with an aperiodic chain, the standard limit exists. Another useful theorem for this limit is the theorem of Cesaro Means, that is, if the limit of a sequence $a_1,\cdots,a_n$ exists $\lim_{n\rightarrow\infty} a_n =a$, let $b_n = n^{-1} \sum_i^n a_i$, then $\lim_{n\rightarrow\infty} b_n =a$.}. 

	Accordingly, the difference of bias $h^{\pi}(s') - h^{\pi}(s)$ quantifies the (dis-)advantage of starting from the state $s'$ rather than $s$ under policy $\pi$. Denote $sp(h^{\pi}) \defeq \max_s h^{\pi}(s) - \min_s  h^{\pi}(s)$. We have the following useful propositions.
	\begin{proposition}[Theorem 8.2.6 of \cite{puterman_markov_1994}]
		For any policy $\pi = d^{\infty} \in \Pi^{\text{SR}}$, the gain $g^{\pi}$ and bias $h^{\pi}$ satisfy the following system of \emph{Bellman evaluation equations}:
		\begin{equation}
			g = P_d g \ \text{and}\ h + g = L_d h
			\label{eq:bellman_equation}
		\end{equation}
		where $L_d h \defeq r + P_d h$. Conversely, if $(g,h) \in \mathbb{R}\times\mathbb{R}^S$ is a solution to \eqref{eq:bellman_equation}, then $g = g^{\pi}$ and $h = h^{\pi} + u$ where $u = P_d u$. Finally, if $P_d^{}h = 0$, then $h = h^{\pi}$.
	\end{proposition}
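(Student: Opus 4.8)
The plan is to build everything on the Cesaro-limit stationary matrix $P_d^* \defeq \lim_{N\to+\infty}\frac{1}{N}\sum_{n=0}^{N-1}P_d^n$, which exists for any stochastic $P_d$, together with the associated deviation (fundamental) matrix $H_d \defeq \sum_{n=0}^{\infty}(P_d^n - P_d^*)$. First I would record the algebraic identities that drive the whole argument: $P_d P_d^* = P_d^* P_d = P_d^* P_d^* = P_d^*$, together with $(I - P_d)H_d = I - P_d^*$ and $P_d^* H_d = 0$. By the very definition of the gain as a Cesaro average of rewards one has $g^{\pi} = P_d^* r$, and it is a standard fact that the bias admits the representation $h^{\pi} = H_d r$.

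For the forward direction the first Bellman equation is then immediate: $P_d g^{\pi} = P_d P_d^* r = P_d^* r = g^{\pi}$, so $g = P_d g$. For the second, I would apply $(I - P_d)H_d = I - P_d^*$ to $r$, which gives $(I - P_d)h^{\pi} = (I - P_d^*)r = r - g^{\pi}$; rearranging yields $h^{\pi} + g^{\pi} = r + P_d h^{\pi} = L_d h^{\pi}$, i.e. exactly $h + g = L_d h$.

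For the converse, suppose $(g,h)$ solves \eqref{eq:bellman_equation}. Iterating $g = P_d g$ gives $P_d^n g = g$ for all $n$, hence $P_d^* g = g$. Applying $P_d^*$ to the second equation, written as $(I - P_d)h = r - g$, and using $P_d^*(I - P_d) = P_d^* - P_d^* P_d = 0$, I obtain $0 = P_d^* r - P_d^* g = g^{\pi} - g$, so $g = g^{\pi}$. Substituting this back gives $(I - P_d)(h - h^{\pi}) = (r - g) - (r - g^{\pi}) = 0$, so that $u \defeq h - h^{\pi}$ satisfies $u = P_d u$ and $h = h^{\pi} + u$, as claimed. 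Finally, since $P_d^* h^{\pi} = P_d^* H_d r = 0$ while $u = P_d u$ forces $P_d^* u = u$, the normalization $P_d^* h = 0$ yields $0 = P_d^* h = u$, i.e. $h = h^{\pi}$.

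The main obstacle is not the algebra above but justifying the two structural identities it rests on: the existence of the Cesaro limit $P_d^*$ and of the deviation matrix $H_d$, and in particular the relation $(I - P_d)H_d = I - P_d^*$. These follow from the ergodic decomposition of the Markov chain induced by $P_d$ and from the Laurent expansion of the resolvent $(zI - P_d)^{-1}$ near $z = 1$; establishing them rigorously, rather than quoting them, is the technical heart of the argument, after which both directions collapse to the one-line manipulations sketched here.
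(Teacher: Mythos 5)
Your proposal is correct, and it is essentially the proof of the result the paper is importing: this proposition is stated without proof, as a citation of Theorem 8.2.6 of Puterman, and your argument via the stationary matrix $P_d^*$, the deviation matrix $H_d$, and the identities $P_d P_d^* = P_d^* P_d = P_d^*$, $(I-P_d)H_d = I - P_d^*$, $P_d^* H_d = 0$ is exactly the route taken there (the structural facts you defer are what Puterman establishes through the Laurent expansion of the resolvent, so deferring them is the honest thing to do; just note that the series $\sum_{n\ge 0}(P_d^n - P_d^*)$ must be read in the Cesaro/Abel sense when the chain is periodic, or $H_d$ defined as $(I - P_d + P_d^*)^{-1} - P_d^*$). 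Two small points worth flagging. First, the paper's final clause reads ``if $P_d^{}h = 0$'' with an empty superscript; the intended condition is the standard normalization $P_d^* h = 0$, which is what you proved — your reading is the right one, since $P_d^* h = 0$ is always satisfiable (by $h^\pi$ itself, as $P_d^* H_d = 0$) whereas the literal condition $P_d h = 0$ is satisfiable only in degenerate cases. Second, the paper writes $(g,h) \in \mathbb{R}\times\mathbb{R}^S$, i.e.\ scalar gain, for which the equation $g = P_d g$ is vacuous; your vector treatment $g^\pi = P_d^* r$ is the more general multichain form and subsumes the stated one.
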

	\begin{proposition}[Chapter 9 of \cite{puterman_markov_1994} and Theorem 1 of \cite{schweitzer_undiscounted_1985}]
		\label{prop:weak_mdp_conv}
		Let $M$ be a weakly communicating MDP and denote $\Pi^{*} \in \Pi^{\text{SD}}$ be the set of policies maximizing \eqref{eq:gain_initial_def}. If any of the following assumptions hold:

		\begin{itemize}
			\item the action space $A$ is \emph{finite},
			\item $\Pi^{*} \neq \emptyset$, and $\sup_{\pi \in \Pi^{*}} < +\infty$,
		\end{itemize}	
		then there exists a solution $(g^{*},h^{*}) \in \mathbb{R}\times\mathbb{R}^S$ to be the fixed point equation $h^{*} + g^{*}e = Lh^{*}$, where $e$ denotes an all-one vector. Moreover, for any such solution  $(g^{*},h^{*})$ and for all $s \in \mathcal{S}$,
		\begin{equation}
			g^{*} = \max_{\pi \in \Pi} \left\{\liminf_{T\rightarrow+\infty}\mathbb{E}_{s}^{\pi}\left[\frac{1}{T}\sum_{t=1}^{T} r(s_t,a_t) \bigg| s_1 \sim \mu_1 \right] \right\} 
		\end{equation} 
		Finally, any stationary policy $\pi^{*} = (d^{*})^{\infty}$ satisfying $d^{*} \in \arg \max_{d} \{r_d + P_d h^{*}\}$ (i.e., greedy policy) is optimal, i.e., $\pi^{*} \in \Pi^{*}$.
	\end{proposition}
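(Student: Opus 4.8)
The plan is to obtain the optimality equation $h^{*} + g^{*}e = Lh^{*}$ (with $Lh \defeq \max_{d \in D^{\text{MD}}}\{r_d + P_d h\}$ taken componentwise) as the limit of the discounted Bellman optimality equations as the discount factor tends to one --- the classical \emph{vanishing-discount} argument. First I would invoke standard discounted dynamic programming: for each $\lambda \in (0,1)$ there is a unique bounded $V_\lambda \in \mathbb{R}^S$ solving $V_\lambda = \max_{d}\{r_d + \lambda P_d V_\lambda\}$, attained by a stationary deterministic maximizer. Fixing a reference state $s_0$ and setting $h_\lambda \defeq V_\lambda - V_\lambda(s_0)e$, subtracting $V_\lambda(s_0)$ from both sides rewrites this as $(1-\lambda)V_\lambda(s_0)\, e + h_\lambda = \max_{d}\{r_d + \lambda P_d h_\lambda\}$.

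The next step is to extract limits. The scalars $(1-\lambda)V_\lambda(s_0)$ lie in $[0, r_{\max}]$, so some subsequence converges to a limit $g^{*}$. Provided the family $\{h_\lambda\}_{\lambda \to 1}$ is uniformly bounded (see below), a further subsequence $\lambda_n \to 1$ gives $h_{\lambda_n} \to h^{*}$ in $\mathbb{R}^S$. Passing to the limit in the rewritten equation --- using continuity of the finite maximum over decision rules and of the stochastic matrices $P_d$, together with $\lambda_n \to 1$ --- yields $g^{*}e + h^{*} = \max_{d}\{r_d + P_d h^{*}\} = Lh^{*}$, the desired fixed-point equation.

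To identify $g^{*}$ with the optimal gain and prove greedy optimality, take any solution $(g^{*},h^{*})$ and any $d^{*} \in \arg\max_{d}\{r_d + P_d h^{*}\}$, so that $g^{*}e = r_{d^{*}} + P_{d^{*}}h^{*} - h^{*}$. Left-multiplying by $P_{d^{*}}^k$ (recall $P_{d^{*}}e = e$), summing over $k = 0,\dots,T-1$, and dividing by $T$ gives the telescoping identity $g^{*}e = \frac{1}{T}\sum_{k=0}^{T-1} P_{d^{*}}^k r_{d^{*}} + \frac{1}{T}\big(P_{d^{*}}^T h^{*} - h^{*}\big)$; since $h^{*}$ is bounded and $P_{d^{*}}^T$ is stochastic, the last term vanishes in Cesaro limit, and comparison with the gain formula of Proposition 1 gives $g^{\pi^{*}} = g^{*}e$. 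For an arbitrary decision rule $d$, the inequality $r_d + P_d h^{*} \le Lh^{*} = h^{*} + g^{*}e$ drives the same iteration to yield $g^{d^{\infty}} \le g^{*}e$ entrywise; together with the standard reduction (contained in the cited results) that stationary deterministic policies suffice for the average-reward criterion, this shows $g^{*}$ equals the optimal value of \eqref{eq:gain_initial_def} and that $\pi^{*} = (d^{*})^{\infty}$ is optimal.

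I expect the main obstacle to be the \textbf{uniform boundedness of the spans} $\max_s h_\lambda(s) - \min_s h_\lambda(s)$ as $\lambda \to 1$, which is exactly where the weakly-communicating hypothesis and the two alternative assumptions enter. The natural route is to bound $|V_\lambda(s) - V_\lambda(s')|$ by the reward accrued while steering the discounted-optimal process from $s$ to $s'$: coupling the two trajectories and stopping at the first meeting time $\tau$ gives $|V_\lambda(s) - V_\lambda(s')| \le r_{\max}\,\mathbb{E}[\tau]$, and in a weakly communicating MDP $\mathbb{E}[\tau]$ is controlled by the diameter restricted to the recurrent class --- finiteness of $A$, or nonemptiness of $\Pi^{*}$ with bounded span, guaranteeing the relevant hitting times stay finite uniformly in $\lambda$. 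Making this coupling/hitting-time estimate rigorous, and verifying that the chosen subsequential limits satisfy the equation rather than merely an inequality, is the technical heart of the argument; the remaining algebra is the routine telescoping sketched above.
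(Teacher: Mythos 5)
You should know at the outset that the paper contains no proof of this proposition: it is imported wholesale from Chapter 9 of \cite{puterman_markov_1994} and Theorem 1 of \cite{schweitzer_undiscounted_1985}, so your attempt can only be judged on its own merits rather than against an in-paper argument. The skeleton you chose --- the vanishing-discount argument --- is the classical route to this result and matches the finite-action development in the cited textbook. Your algebra producing $(1-\lambda)V_\lambda(s_0)e + h_\lambda = \max_d\{r_d + \lambda P_d h_\lambda\}$ is correct, the subsequential extraction of $(g^{*},h^{*})$ is correct once the spans are uniformly bounded, and the telescoping identification $g^{*}e = \frac{1}{T}\sum_{k=0}^{T-1}P_{d^{*}}^{k}r_{d^{*}} + \frac{1}{T}\bigl(P_{d^{*}}^{T}h^{*}-h^{*}\bigr)$, together with the comparison inequality for arbitrary $d$, correctly yields both the identification of $g^{*}$ with the optimal gain and the optimality of the greedy policy (modulo the reduction from history-dependent to stationary policies, which is fair to cite).

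There are, however, two genuine gaps, both at the step you yourself flag as the technical heart. First, the span bound as you state it fails: coupling two copies of the chain under the $\lambda$-optimal policy and waiting for a ``first meeting time'' does not work, because under that policy the two trajectories need never meet --- the induced chain can be periodic or have several recurrent classes, in which case $\mathbb{E}[\tau]$ is infinite. The correct device is one-sided steering: from $s'$, follow a $\lambda$-independent policy that reaches $s$ in finite expected time (this is where communication enters), then switch to the $\lambda$-optimal policy; combining $\mathbb{E}[\lambda^{\tau}] \geq 1-(1-\lambda)\mathbb{E}[\tau]$ with $0 \leq V_\lambda \leq r_{\max}/(1-\lambda)$ gives $V_\lambda(s') \geq V_\lambda(s) - r_{\max}\mathbb{E}[\tau]$. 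Moreover, the proposition only assumes $M$ is \emph{weakly} communicating, so pairwise hitting times need not be finite at all: states outside the closed communicating class are transient under every policy and may be unreachable from other states, so the steering bound controls the span only on the communicating class, and the transient states require a separate argument (bounding their values through the reward accrued before absorption into that class). Second, your passage to the limit invokes ``continuity of the finite maximum over decision rules,'' which covers only the first assumption (finite $A$). The second assumption is precisely the compact-action setting of \cite{schweitzer_undiscounted_1985}, where the maximum ranges over infinitely many decision rules; exchanging the limit $\lambda_n \to 1$ with that maximum, and even guaranteeing maximizers exist, needs semicontinuity and compactness hypotheses your sketch never supplies, so that case is not established by your argument.
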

	Notably, \cite{puterman_markov_1994} shows that the optimal average reward $g^{*}$ in communicating MDPs is independent of the initial state $s_1$ and cannot be increased when using non-stationary policies.
	
	\begin{proposition}[Theorem 9.4.5 of \cite{puterman_markov_1994} and extension by Theorem 7 of \cite{jaksch_near-optimal_2010}]
		\label{prop:value_iter_converge}
		Consider the sequences of vectors $(v_n)_{n\in \mathbb{N}}$ and Markov decision rules $(d_n)_{n\in \mathbb{N}}$ obtained while executing the value interactions in Alg. \ref{al:relative_value-iteration}. If Prop. \ref{prop:weak_mdp_conv} holds and either of the following conditions:
		\begin{itemize}
			\item every average optimal stationary deterministic policy has an aperiodic transition matrix,
			\item the transition matrices $P_{d_n}$ are aperiodic for all $n \geq 1$
		\end{itemize}
		for all $n \geq 1$, then there exists $h^{*} \in \mathbb{R}^S$ such that the limit of the value function 
		$\lim_{n\rightarrow\infty} v_n = h^{*}$ (where $v$ is defined using operators $L:\mathbb{R}^S\rightarrow\mathbb{R}^S$ as $Lv\defeq \max_{d \in D^{\text{MR}}} \{r_d+P_d v\}$) and $Lh^{*} = h^{*} + g^{*} e$, where $e$ denotes a all-one vector. 
	\end{proposition}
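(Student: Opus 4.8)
The plan is to establish convergence through the contraction properties of the Bellman optimality operator $L$ measured in the span seminorm $sp(\cdot)$, exploiting the fact that aperiodicity upgrades the generic non-expansiveness of $L$ into a genuine contraction on the quotient space $\mathbb{R}^S / \mathbb{R}e$ in which vectors differing by a multiple of the all-one vector $e$ are identified. On that quotient space $sp(\cdot)$ is an honest norm, so a Banach-type argument will deliver a unique limit once a strict contraction is in hand.

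First I would invoke Proposition \ref{prop:weak_mdp_conv} to guarantee that the optimality equation $Lh^{*} = h^{*} + g^{*}e$ admits a solution $(g^{*},h^{*})$; this fixes both the candidate limit and the associated gain, and reduces the task to showing that the iterates $v_n$ of Alg. \ref{al:relative_value-iteration} actually converge to such a solution. Next I would record the two structural facts that drive everything: $L$ is monotone, i.e. $v \le w$ componentwise implies $Lv \le Lw$, and $L$ commutes with constant shifts, $L(v + ce) = Lv + ce$ for every $c \in \mathbb{R}$. The shift-invariance shows that $L$ descends to a well-defined self-map of the quotient space, and together with monotonicity it yields immediately that $L$ is non-expansive for $sp(\cdot)$.

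The crux is to upgrade non-expansiveness to a strict contraction using the aperiodicity hypothesis. The relevant quantity is the Doeblin ergodicity coefficient of a stochastic matrix $P$, namely $\gamma(P) \defeq 1 - \min_{s,s'} \sum_{j} \min\{P(j|s), P(j|s')\}$, for which one has $sp(Pv) \le \gamma(P)\, sp(v)$. An aperiodic transition matrix need not itself be scrambling, but some finite power $P^{J}$ is, so that $\gamma(P^{J}) < 1$ uniformly over the family of relevant decision rules — either every average-optimal stationary deterministic rule under the first branch of the hypothesis, or the rules $d_n$ generated along the iteration under the second branch. Propagating this bound through the $\max$ defining $L$ produces a $J$-stage span contraction $sp(L^{J}v - L^{J}w) \le \alpha\, sp(v - w)$ with some $\alpha < 1$, uniformly in $v,w$.

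Finally, the $J$-stage contraction makes $(v_n)$ a $sp$-Cauchy sequence on the quotient space, so it converges, modulo constants, to the unique span-fixed point of $L$; by Proposition \ref{prop:weak_mdp_conv} that fixed point is a shift of $h^{*}$, and the relative normalization built into Alg. \ref{al:relative_value-iteration} (subtracting the value at a reference state) pins down the representative, giving $\lim_{n\to\infty} v_n = h^{*}$ together with $Lh^{*} = h^{*} + g^{*}e$ once the per-iteration offset is identified as $g^{*}$. I expect the third step to be the main obstacle: securing a \emph{uniform} contraction coefficient $\alpha < 1$ is exactly where aperiodicity is indispensable, since for periodic chains $L$ is merely non-expansive and the iterates may cycle rather than converge. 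The extension of Jaksch \textit{et al.} (Theorem 7 of \cite{jaksch_near-optimal_2010}) amounts precisely to verifying this uniform contraction when only the data-dependent rules $d_n$, rather than all optimal rules, are assumed aperiodic.
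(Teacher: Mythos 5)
First, a point of comparison: the paper does not prove this proposition at all --- it is imported verbatim as background (Theorem 9.4.5 of \cite{puterman_markov_1994}, extended by Theorem 7 of \cite{jaksch_near-optimal_2010}), so your proposal has to be judged against those standard proofs, and it diverges from them in a way that matters.

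The central step of your argument --- that the aperiodicity hypothesis yields a uniform $J$-stage span contraction $sp(L^{J}v - L^{J}w) \leq \alpha\, sp(v-w)$ with $\alpha < 1$ --- is not just unproven but false under the stated hypotheses. ``Aperiodic transition matrix'' here means every recurrent class is aperiodic; it does \emph{not} exclude several disjoint recurrent classes, and if $P_d$ has two disjoint recurrent classes then rows of $P_d^{J}$ belonging to different classes have disjoint supports for every $J$, so the Doeblin coefficient satisfies $\gamma(P_d^{J}) = 1$ forever. Your claim that ``some finite power $P^{J}$ is scrambling'' is true only for \emph{unichain} aperiodic (primitive) matrices. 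Multichain decision rules do arise in communicating MDPs, and they break the argument. Concretely: take $\mathcal{S}=\{1,2\}$, with actions \emph{stay} (self-loop, reward $1$) and \emph{switch} (move to the other state, reward $0$). This MDP is communicating, every average-optimal stationary deterministic policy ($\{$stay,stay$\}$, $\{$stay,switch$\}$, $\{$switch,stay$\}$) has an aperiodic transition matrix, so the first bullet of the hypothesis holds; yet for $v=(c,0)$, $w=(0,0)$ with $0<c\leq 1$ one computes $L^{J}v - L^{J}w = (c,0)$ for all $J$, i.e.\ $sp(L^{J}v-L^{J}w)=sp(v-w)$, so no power of $L$ is a strict span contraction on the quotient space. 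Value iteration nevertheless converges trivially in this example ($v^{n}=n e + v^{0}$), which shows the proposition's conclusion holds while your mechanism provably cannot establish it: under these hypotheses convergence is not driven by contraction. What your sketch actually proves is the different, stronger-hypothesis result (Puterman's Theorem 9.4.4, which \emph{assumes} a $J$-stage span contraction / scrambling-type condition), not Theorem 9.4.5. A secondary gap, even in the unichain case, is that value iteration generates inhomogeneous products $P_{d_n}P_{d_{n-1}}\cdots$ of \emph{different} matrices, and primitivity is not preserved under such products, so ``propagating the bound through the max'' needs a genuine argument about pairs of policy sequences, not single matrices.

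The actual proofs avoid contraction entirely. Puterman's proof of Theorem 9.4.5 uses the fact that for an aperiodic stochastic matrix --- including multichain ones --- the powers $P^{n}$ converge to a limiting matrix $P^{\infty}$, and runs a $\liminf$/$\limsup$ sandwich on $v^{n} - n g^{*}$: the inequality $v^{n} \geq r_{d^{*}} + P_{d^{*}} v^{n-1}$ for an average-optimal aperiodic $d^{*}$ controls one envelope via convergence of $\sum_{k} P_{d^{*}}^{k} r_{d^{*}} - n g^{*} e$ and $P_{d^{*}}^{n} v^{0}$, while the opposite envelope is pinned down using the optimality equation supplied by Prop.~\ref{prop:weak_mdp_conv} and the decision rules selected along the iteration; the two envelopes are then shown to coincide. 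Theorem 7 of \cite{jaksch_near-optimal_2010} handles the second bullet (aperiodicity of the realized $P_{d_n}$, which the present paper enforces through the aperiodic transformation $\alpha$ in \eqref{eq:extended_opt_bellman_operator_definition}) by reducing to this same machinery. If you want to salvage a contraction-style proof, you must add a scrambling/unichain assumption --- but that changes the theorem being proved.
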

	\begin{algorithm*}
		\caption{(Relative) Value Iteration}
		\label{al:relative_value-iteration}
		\hspace*{\algorithmicindent} \textbf{Input}: Operators $L:\mathbb{R}^S\rightarrow\mathbb{R}^S$ as $Lv\defeq \max_{d \in D^{\text{MR}}} \{r_d+P_d v\}$ and $G:\mathbb{R}^S\rightarrow D$ as $G v \in \arg \max_{d \in D^{\text{MR}}} \{L_d v\}$, accuracy $\epsilon \in (0, r _{\max})$, initial vector $v_0 \in \mathbb{R}^S$, arbitrary reference state $s \in \mathcal{S}$,
		\begin{algorithmic}[1]
			\State Initialize $n=0$, $v_1 \defeq Lv_0$ 
			\While{$sp(v_{n+1} - v_{n}) > \epsilon$} \algorithmiccomment{Loop until termination}
				\State Increment $n \leftarrow n+1$
				\State Update $v_n \leftarrow v_n - v_n(\bar{s})e $ \algorithmiccomment{Avoids numerical instability ($v_n \nrightarrow \infty$)}
				\State $(v_{n+1},d_n) \defeq (Lv_n, Gv_n)$ \algorithmiccomment{$Lv_n$ and $Gv_n$ can be computed simultaneously}
			\EndWhile
			\State Set $g \defeq \frac{1}{2} \max\{v_{n+1} - v_n\} + \min\{v_{n+1} - v_n\})$, $h \defeq v_n$ and $\pi \defeq (d_n)^{\infty}$.		
		\end{algorithmic}
		\hspace*{\algorithmicindent} \textbf{Output}: Gain $g \in [0, r_{\max}]$, bias vector $h \in \mathbb{R}^S$ and stationary deterministic policy $\pi \in \Pi^{\text{SD}}$.
	\end{algorithm*}
	Also, the operator $L$ is denoted as the optimal Bellman operator. Finally, $D = \max_{s\neq s'}\{ \tau (s \rightarrow s')\}$ denotes the diameter of $M$, where $\tau (s \rightarrow s') = \inf\{t\geq 1:s_t =s^{\prime}|s\}$ is the minimal expected steps required from $s$ to $s'$. Furthermore, for all $ s \in \mathcal{S}$ and the related value function $v \in \mathbb{R}^{S}$,  we define the extended optimal Bellman operator with aperiodic transformation (Proposition 8.5.8, \cite{puterman_markov_1994}) as
	\begin{align}
		\label{eq:extended_opt_bellman_operator_definition}
		L^{\alpha}_{k} v(s)&\defeq \max _{a \in \mathcal{A}_{s}}\left\{\max _{r \in B_{r}^{k}(s, a)}\{r\}+\alpha \cdot \max _{p \in B_{p}^{k}(s, a)}\left\{p^{T} v\right\}\right\}  +(1-\alpha) \cdot v(s)
	\end{align}
	where $\alpha$ is the coefficient of the aperiodic transformation. By properties of the aperiodic transformation, the optimal gains of $\mathcal{M}_k^{\alpha}$ and $\mathcal{M}_k$ are equal (denoted by $g_k^{*}$). The aperiodic transformation makes the extended value iteration in Alg. \ref{al:relative_value-iteration} meet the condition in Prop. \ref{prop:value_iter_converge}. Also, as shown by Prop. 8.5.8 of \cite{puterman_markov_1994}, this transformation does not affect the gain of any stationary policy. In other words, for any $\pi \in \Pi^{\text{SR}}$, $g^{\alpha,\pi} = g^{\pi}$\footnote{The transformation introduced by (Section 8.5.4, \cite{puterman_markov_1994}) is slightly diﬀerent as the rewards are all multiplied by $\alpha$. Therefore, Proposition 8.5.8 of \cite{puterman_markov_1994} states that the gain is also multiplied by $\alpha$. i.e., $g^{\alpha,\pi} = \alpha \cdot g^{\pi}$. However, it is straightforward to adapt the proof of Proposition 8.5.8 of \cite{puterman_markov_1994} to our case.}.

	\begin{proposition}[Theorem 4 of \cite{bartlett_regal_2009}]
		\label{prop:span_h_k}
		Let $M$ be a communicating MDP with non-negative rewards and $(g^{*},h^{*})$ a solution of the Bellman optimality equation (i.e., $Lh^{*} = h^{*}+g^{*}e$). For any states $s$ and $s^{\prime}$ and any stationary policy $\pi \in \Pi^{\text{SR}}$, we have
		\begin{align}
			h^{*}(s^{\prime}) - h^{*}(s) \leq g^{*} \mathbb{E}^{\pi} [\tau(s^{\prime} -1)|s].
		\end{align}
	\end{proposition}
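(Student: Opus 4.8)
The plan is to convert the Bellman optimality equation into a one-step drift inequality, recognize a centered version of the bias evaluated along the trajectory of $\pi$ as a supermartingale, and then apply the optional stopping theorem at the first visit to $s'$. First I would extract a pointwise inequality from $Lh^{*} = h^{*} + g^{*}e$. Since $Lh^{*}(s) = \max_{d \in D^{\text{MR}}}\{r_d(s) + (P_d h^{*})(s)\}$, for the decision rule $d$ underlying the stationary policy $\pi = d^{\infty}$ we necessarily have $r_d(s) + (P_d h^{*})(s) \leq h^{*}(s) + g^{*}$ for every $s$. Because the rewards are non-negative, $r_d(s) \geq 0$, and this collapses to the clean drift bound $(P_d h^{*})(s) - h^{*}(s) \leq g^{*}$.

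Next I would consider the Markov chain that $\pi$ induces when started at $s$, writing $X_n$ for the state reached after $n$ transitions (so that $X_0 = s$). The drift bound reads precisely $\mathbb{E}^{\pi}[h^{*}(X_{n+1}) \mid X_n] \leq h^{*}(X_n) + g^{*}$, so the process $M_n \defeq h^{*}(X_n) - g^{*} n$ is a supermartingale with respect to the natural filtration. Let $N \defeq \inf\{n \geq 1 : X_n = s'\}$ count the transitions needed to reach $s'$; if $\mathbb{E}^{\pi}[N \mid s] = +\infty$ the asserted bound is vacuous, so I may assume this expectation is finite.

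Then I would invoke optional stopping. Since $\mathcal{S}$ is finite, $h^{*}$ is bounded, hence the increments $|M_{n+1} - M_n| = |h^{*}(X_{n+1}) - h^{*}(X_n) - g^{*}|$ are uniformly bounded; together with $\mathbb{E}^{\pi}[N \mid s] < +\infty$ this justifies the optional stopping theorem for supermartingales, yielding $\mathbb{E}^{\pi}[M_N \mid s] \leq M_0 = h^{*}(s)$. Substituting $X_N = s'$ gives $h^{*}(s') - g^{*}\,\mathbb{E}^{\pi}[N \mid s] \leq h^{*}(s)$, that is, $h^{*}(s') - h^{*}(s) \leq g^{*}\,\mathbb{E}^{\pi}[N \mid s]$. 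Finally I would reconcile the counting convention: with $\tau(s \to s') = \inf\{t \geq 1 : s_t = s'\}$ and $s_1 = s$, the number of transitions obeys $N = \tau(s \to s') - 1$, so $\mathbb{E}^{\pi}[N \mid s]$ is exactly the right-hand side $\mathbb{E}^{\pi}[\tau(s') - 1 \mid s]$ of the claim.

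The step I expect to be the main obstacle is the rigorous justification of the optional stopping rather than its formal application: one must check the integrability hypotheses, which here reduce to the boundedness of $h^{*}$ on the finite state space together with the reduction to the case $\mathbb{E}^{\pi}[N \mid s] < +\infty$. It is worth stressing that the non-negativity of the rewards is not cosmetic here: it is precisely what turns the raw drift $g^{*} - r_d(s)$ into the uniform bound $g^{*}$, so that telescoping through the hitting time returns the expected hitting time itself rather than a reward-penalized surrogate.
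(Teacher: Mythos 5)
The paper never proves this proposition---it is imported verbatim as Theorem~4 of the cited REGAL reference \cite{bartlett_regal_2009}---so there is no internal proof to compare against; your argument instead reconstructs essentially the same hitting-time argument used in that reference. Your proof is correct: the inequality $r_d + P_d h^{*} \leq Lh^{*} = h^{*} + g^{*}e$ for the decision rule of $\pi$, the use of non-negative rewards to drop $r_d$, the supermartingale $h^{*}(X_n) - g^{*}n$, and optional stopping (justified by bounded increments on a finite state space plus finite expected hitting time) deliver exactly the claimed bound, with the index bookkeeping $N = \tau(s\to s')-1$ handled correctly for $s \neq s'$. One small imprecision: your dismissal of the case $\mathbb{E}^{\pi}[N\mid s]=+\infty$ as ``vacuous'' tacitly assumes $g^{*}>0$; if $g^{*}=0$ the right-hand side is $0\cdot\infty$, not $+\infty$. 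This costs nothing, since in a communicating MDP with non-negative rewards $g^{*}=0$ forces all rewards to vanish, in which case any solution $h^{*}$ of the optimality equation is constant and the left-hand side is $0$ as well, but it deserves a sentence rather than the word ``vacuous.''
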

	
	Proposed in Chapter 2, \cite{bertsekas_dynamic_2012}, the stochastic shortest path helps to understand the difficulty for an agent to navigate between the states of an MDP. Let us consider an MDP $M^{\prime} \defeq \{\mathcal{S},\mathcal{A},p,r^{\prime}\}$ with identical state and action space, and transition probabilities as the true MDP $M$ but a reward $r^{\prime} \defeq -r $ for all state-action pairs, which could be further interpreted as the required time or distance before reaching the next state in the MDP. The goal of the stochastic shortest path problem is to find the shortest expected distance between states $s$ and $s^{\prime}$ in the MDP, that is, 
	\begin{align}
		\label{eq:shortest_path_opt}
		\sup\limits_{\pi\in\Pi} \bigg\{\mathbb{E}^{\pi} \big[ \sum_{t=1}^{\tau(s^{\prime})-1} r^{\prime} \big\vert s \big]\bigg\}
	\end{align}
	Though \eqref{eq:shortest_path_opt} appears quite different from the original problem in \eqref{eq:gain_initial_def}, these two problems are related through the Bellman optimality equation with the shortest path problem is regarded as finding a bias-optimal policy with the optimal gain $g^{*} = 0$, thus re-writing the optimality equation as $Lh^{*} = h^{*}$. We have the following proposition.
	\begin{proposition}[Proposition 2.8 of \cite{fruit_exploration-exploitation_2019}]
		\label{prop:bellman_stoc_shortest}
		Let $M^{\prime} \defeq \{\mathcal{S},\mathcal{A},p,r^{\prime}\}$ be communicating MDP (finite or compact $\mathcal{A}$) with $r^{\prime} \in [-r_{\max},0]$ for all $(s,a) \in \mathcal{S}\times\mathcal{A}$. For any state $s \in \mathcal{S}$, consider the \emph{Bellman shortest path operator} $L_{\mapsto s}: \mathbb{R}^S \mapsto \mathbb{R}^S$ defined for all $v \in \mathbb{R}^S$ as $\forall x \in \mathcal{S}$, we have \eqref{eq:bellman_stoc_shortest}, and define $h_{\mapsto s^{\prime}}^{*}$ as the (component-wise) maximal non-positive solution of the Bellman shortest path optimality equation $L_{\mapsto s^{\prime}} h_{\mapsto s^{\prime}}^{*} = h_{\mapsto s^{\prime}}^{*}$. Moreover, if $d_{\mapsto s^{\prime}}^{*}$ is a greedy decision rule with respect to $h_{\mapsto s^{\prime}}^{*}$, namely, $d_{\mapsto s^{\prime}}^{*} = \arg\max_{a \in \mathcal{A}_{s}}\left\{r^{\prime}(s, a)+\sum_{y \in S} p(y | s, a) h_{\mapsto s^{\prime}}^{*}(y)\right\}$ for all $s \neq s^{\prime}$, then $\pi_{\mapsto s^{\prime}}^{*} \defeq (d_{\mapsto s^{\prime}}^{*})^{\infty}$ is an optimal solution to \eqref{eq:shortest_path_opt}.

		\begin{figure*}[!t]
			\begin{align}
				\label{eq:bellman_stoc_shortest}
				L_{\mapsto s^{\prime}} v(s)\defeq \left\{\begin{array}{ll}
					\max _{a \in \mathcal{A}_{s}}\left\{r(s, a)+\sum_{y \in S} p(y | s, a) v(y)\right\} &\text { if } s \neq s^{\prime} \\
					v(s) & \text{ otherwise }
					\end{array}\right.
			\end{align}
			\hrulefill
		\end{figure*}
	\end{proposition}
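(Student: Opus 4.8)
The plan is to recast the stochastic shortest path problem as an average-reward MDP $\tilde{M}$ in which the target $s^{\prime}$ is turned into an absorbing state earning reward $0$, and then to exploit the monotonicity of $L_{\mapsto s^{\prime}}$ together with the gain/bias machinery of Proposition~\ref{prop:weak_mdp_conv} and Proposition~\ref{prop:value_iter_converge}. Since every reward satisfies $r^{\prime}\le 0$ and $M^{\prime}$ is communicating, every \emph{proper} policy (one reaching $s^{\prime}$ with probability one) is eventually absorbed and thereafter collects reward $0$, so the optimal gain of $\tilde{M}$ is $g^{*}=0$. Hence the bias-optimality equation $L\tilde{h}=\tilde{h}+g^{*}e$ collapses to the fixed-point equation $L_{\mapsto s^{\prime}}h=h$ stated in the proposition, which is exactly the regime the accompanying text identifies with $g^{*}=0$.

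Next I would establish existence and maximality of a non-positive fixed point by value iteration started from $v_0=0$. The operator $L_{\mapsto s^{\prime}}$ is monotone, being a maximum of affine maps with non-negative coefficients $p(\cdot\mid s,a)$, and $0$ is a super-solution: for $s\ne s^{\prime}$ one has $(L_{\mapsto s^{\prime}}0)(s)=\max_a r^{\prime}(s,a)\le 0$ and $(L_{\mapsto s^{\prime}}0)(s^{\prime})=0$, so $L_{\mapsto s^{\prime}}0\le 0=v_0$. By monotonicity and induction the iterates $v_{n+1}=L_{\mapsto s^{\prime}}v_n$ are non-increasing. They are bounded below, because for any proper policy $\pi_p=d_p^{\infty}$ with evaluation operator $L_{d_p}\le L_{\mapsto s^{\prime}}$ and value $h_p\le 0$ one has $h_p=(L_{d_p})^n h_p\le (L_{\mapsto s^{\prime}})^n h_p\le (L_{\mapsto s^{\prime}})^n 0=v_n$, where finiteness of $h_p$ follows since the communicating assumption guarantees $\mathbb{E}^{\pi_p}[\tau(s^{\prime})]<\infty$. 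A bounded monotone sequence converges, and its limit $h^{*}_{\mapsto s^{\prime}}\le 0$ is a fixed point. Maximality is then immediate: if $w\le 0$ solves $L_{\mapsto s^{\prime}}w=w$, then $w=(L_{\mapsto s^{\prime}})^n w\le (L_{\mapsto s^{\prime}})^n 0=v_n$ for all $n$, so $w\le h^{*}_{\mapsto s^{\prime}}$ in the limit, establishing the component-wise maximal non-positive solution.

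For optimality of the greedy rule I would unroll the fixed-point equation along trajectories. For an arbitrary policy $\pi$ and $s\ne s^{\prime}$, sub-optimality of $\pi$'s action inside the maximum gives, after $n$ steps and halting accumulation upon absorption at $s^{\prime}$,
\begin{align}
h^{*}_{\mapsto s^{\prime}}(s)\;\ge\;\mathbb{E}^{\pi}\!\left[\sum_{t=1}^{\min(n,\tau(s^{\prime})-1)} r^{\prime}(s_t,a_t)+h^{*}_{\mapsto s^{\prime}}(s_{n+1})\,\mathbbm{1}\{\tau(s^{\prime})>n\}\,\Big|\,s\right].
\end{align}
Because $h^{*}_{\mapsto s^{\prime}}\le 0$, the boundary term is non-positive and may be dropped; letting $n\to\infty$ and using $r^{\prime}\le 0$ with monotone convergence yields $h^{*}_{\mapsto s^{\prime}}(s)\ge\mathbb{E}^{\pi}[\sum_{t=1}^{\tau(s^{\prime})-1} r^{\prime}(s_t,a_t)\mid s]$, so $h^{*}_{\mapsto s^{\prime}}$ dominates the value of every policy. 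For the greedy rule $d^{*}_{\mapsto s^{\prime}}$ the maximum is attained with equality at each step, so the inequality becomes the identity $h^{*}_{\mapsto s^{\prime}}(s)=\mathbb{E}^{\pi^{*}_{\mapsto s^{\prime}}}[\sum_{t=1}^{\tau(s^{\prime})-1} r^{\prime}(s_t,a_t)\mid s]$ \emph{provided} the boundary term vanishes in the limit, i.e. provided $\pi^{*}_{\mapsto s^{\prime}}$ is proper. Combining the identity with the domination shows $\pi^{*}_{\mapsto s^{\prime}}$ attains the supremum in \eqref{eq:shortest_path_opt}.

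The main obstacle is precisely this properness step: proving that the greedy policy with respect to the maximal non-positive solution reaches $s^{\prime}$ almost surely, so that $\mathbb{E}^{\pi^{*}_{\mapsto s^{\prime}}}[h^{*}_{\mapsto s^{\prime}}(s_{n+1})\,\mathbbm{1}\{\tau(s^{\prime})>n\}]\to 0$. Here one must rule out that $\pi^{*}_{\mapsto s^{\prime}}$ traps some states in a recurrent class avoiding $s^{\prime}$; I would argue that on such a class, where all collected rewards are $\le 0$, the restriction of $h^{*}_{\mapsto s^{\prime}}$ would be a fixed point that could be strictly raised toward $0$ by a communicating deviation toward $s^{\prime}$, contradicting its maximality. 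The subtle points underpinning this are the genuine non-uniqueness of solutions of $L_{\mapsto s^{\prime}}h=h$ (which is exactly why the maximal non-positive selection is required) and the careful treatment of the absorbing boundary at $s^{\prime}$ during the unrolling; the remaining monotonicity and limit-exchange computations are routine.
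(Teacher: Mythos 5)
First, a point of reference: the paper never proves this proposition --- it is imported wholesale as Proposition~2.8 of \cite{fruit_exploration-exploitation_2019} --- so your attempt stands or falls on its own merits. Most of it stands: the monotone value iteration from $v_{0}=0$, the lower bound through a proper policy (which exists since $M^{\prime}$ is communicating), the maximality argument $w=(L_{\mapsto s^{\prime}})^{n}w\leq (L_{\mapsto s^{\prime}})^{n}0=v_{n}$, and the domination step showing $h^{*}_{\mapsto s^{\prime}}(s)\geq \mathbb{E}^{\pi}\big[\sum_{t=1}^{\tau(s^{\prime})-1}r^{\prime}\mid s\big]$ for every policy $\pi$ are all correct. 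The genuine gap is your final step, and it is not a fixable technicality along the route you propose: properness of the greedy policy is simply \emph{false} in general, so no contradiction-with-maximality argument can establish it. Concretely, take $\mathcal{S}=\{x,s^{\prime}\}$, where at $x$ action $a$ self-loops with reward $0$ and action $b$ moves to $s^{\prime}$ with reward $-1$, and some action at $s^{\prime}$ returns to $x$ (so the MDP is communicating). The maximal non-positive fixed point is $h^{*}_{\mapsto s^{\prime}}\equiv 0$, and the greedy rule at $x$ strictly prefers the self-loop (since $0+0>-1+0$), hence never reaches $s^{\prime}$; yet it is optimal, because its total reward $0$ attains the supremum in \eqref{eq:shortest_path_opt}. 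Your sketched rescue fails precisely on this example: on the trapping class $h^{*}_{\mapsto s^{\prime}}$ already equals $0$ and cannot be ``strictly raised.''

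The classical negative-dynamic-programming argument closes the proof without any properness, and it reuses the very comparison trick from your own existence step. Let $L_{d^{*}}$ denote the evaluation operator of the greedy rule (with $s^{\prime}$ absorbing at reward $0$). Greediness gives $L_{d^{*}}h^{*}_{\mapsto s^{\prime}}=L_{\mapsto s^{\prime}}h^{*}_{\mapsto s^{\prime}}=h^{*}_{\mapsto s^{\prime}}$, so by monotonicity of $L_{d^{*}}$ and $h^{*}_{\mapsto s^{\prime}}\leq 0$ we get $h^{*}_{\mapsto s^{\prime}}=(L_{d^{*}})^{n}h^{*}_{\mapsto s^{\prime}}\leq (L_{d^{*}})^{n}0$ for every $n$. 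But $(L_{d^{*}})^{n}0$ is exactly the expected reward of $\pi^{*}_{\mapsto s^{\prime}}$ over the first $n$ steps stopped at $s^{\prime}$, which (rewards being non-positive) decreases monotonically to the objective value of $\pi^{*}_{\mapsto s^{\prime}}$ in \eqref{eq:shortest_path_opt}. Hence $h^{*}_{\mapsto s^{\prime}}$ is at most that value, and combining this with your domination bound gives equality: $\pi^{*}_{\mapsto s^{\prime}}$ is optimal, and the boundary term $\mathbb{E}\big[h^{*}_{\mapsto s^{\prime}}(s_{n+1})\mathbbm{1}\{\tau(s^{\prime})>n\}\big]$ you were trying to control never needs to appear.
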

	\begin{proposition}[Theorem 7.3.2. of \cite{puterman_markov_1994}]
		\label{prop:stoc_shortest_dominance}
		Let $M^{\prime} \defeq \{\mathcal{S},\mathcal{A},p,r^{\prime}$ be communicating MDP (finite or compact $\mathcal{A}$) with negative rewards $r^{\prime}$. For any state $s \in \mathcal{S}$, consider the Bellman shortest path operator $L_{\mapsto s}$ in \eqref{eq:bellman_stoc_shortest} with maximal non-positive fixed point $h_{\mapsto s}^{*}$ in Prop. \ref{prop:bellman_stoc_shortest}. If there exists $h \in \mathbb{R}^S$ such that $h \leq 0$ and $L_{\mapsto s} h \geq h$, then $h_{\mapsto s}^{*} \geq h$.
	\end{proposition}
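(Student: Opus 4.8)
The plan is to exploit the monotonicity of the Bellman shortest path operator $L_{\mapsto s}$ together with the sub-solution hypothesis $L_{\mapsto s} h \geq h$, so as to generate a monotonically non-decreasing sequence of iterates whose limit is a non-positive fixed point of $L_{\mapsto s}$; the conclusion then follows at once from the \emph{maximality} of $h_{\mapsto s}^{*}$ recorded in Prop.~\ref{prop:bellman_stoc_shortest}.

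First I would verify that $L_{\mapsto s}$ is monotone, i.e., $u \leq v$ component-wise implies $L_{\mapsto s} u \leq L_{\mapsto s} v$. This is immediate from the definition in \eqref{eq:bellman_stoc_shortest}: on the branch $x \neq s$ the operator is a maximum over $a$ of affine maps with non-negative coefficients $p(y|x,a) \geq 0$, while on the branch $x = s$ it is the identity. Starting from $L_{\mapsto s} h \geq h$ and applying $L_{\mapsto s}$ repeatedly, monotonicity then yields $h \leq L_{\mapsto s} h \leq L_{\mapsto s}^{2} h \leq \cdots$, so that $(L_{\mapsto s}^{n} h)_{n \geq 0}$ is non-decreasing.

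The second ingredient is that non-positivity is preserved along the iteration. Since $r' \leq 0$ and $h \leq 0$, for $x \neq s$ we have $r'(x,a) + \sum_{y} p(y|x,a) h(y) \leq 0$, hence $L_{\mapsto s} h \leq 0$ on that branch; combined with the identity branch at $x = s$ this gives $L_{\mapsto s} h \leq 0$, and by induction $L_{\mapsto s}^{n} h \leq 0$ for all $n$. Thus each coordinate of the sequence is non-decreasing, bounded above by $0$ and below by $h$, and therefore convergent; I set $h_\infty \defeq \lim_{n} L_{\mapsto s}^{n} h$, which satisfies $h_\infty \leq 0$ and $h_\infty \geq h$.

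Finally, I would pass to the limit. Because $\mathcal{A}$ is finite (or compact with the usual continuity of $r'$ and $p$ in $a$), $L_{\mapsto s}$ is continuous as a finite maximum of affine maps, so $L_{\mapsto s} h_\infty = \lim_{n} L_{\mapsto s}^{n+1} h = h_\infty$; that is, $h_\infty$ is a non-positive fixed point of $L_{\mapsto s}$. By the maximality of $h_{\mapsto s}^{*}$ among non-positive solutions of $L_{\mapsto s} h_{\mapsto s}^{*} = h_{\mapsto s}^{*}$ (Prop.~\ref{prop:bellman_stoc_shortest}), we conclude $h_{\mapsto s}^{*} \geq h_\infty \geq h$, which is the desired inequality. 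The main obstacle is precisely this limiting step: one must guarantee both that the monotone iterates remain non-positive (so that their limit is admissible for the maximality comparison) and that the limit may legitimately be interchanged with $L_{\mapsto s}$, and it is here that the sign condition $r' \leq 0$ and the finiteness/compactness of $\mathcal{A}$ are essential.
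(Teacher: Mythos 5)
Your proof is correct. Note, however, that the paper itself offers no proof of this proposition: it is imported verbatim as Theorem 7.3.2 of Puterman's book, so there is no internal argument to compare yours against. Your monotone-iteration argument is the standard one for negative (total-reward) dynamic programming models, and every step holds: monotonicity of $L_{\mapsto s}$ yields the non-decreasing sequence $h \leq L_{\mapsto s} h \leq L_{\mapsto s}^{2} h \leq \cdots$, the sign condition $r^{\prime} \leq 0$ keeps every iterate non-positive, finite-dimensionality of $\mathbb{R}^S$ turns coordinate-wise monotone bounded convergence into convergence of the vector sequence, and the maximality clause of Prop.~\ref{prop:bellman_stoc_shortest} then gives $h_{\mapsto s}^{*} \geq h_{\infty} \geq h$. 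One simplification is available at the limiting step, which you flag as the main obstacle: you do not actually need finiteness or compactness of $\mathcal{A}$ (nor continuity of $r^{\prime}$ and $p$ in $a$) to interchange $L_{\mapsto s}$ with the limit, because $L_{\mapsto s}$ is non-expansive in the sup norm; indeed $\vert \sup_a f_a(u) - \sup_a f_a(v)\vert \leq \sup_a \vert f_a(u) - f_a(v)\vert \leq \Vert u - v \Vert_{\infty}$, since each $f_a(v) = r^{\prime}(x,a) + \sum_y p(y|x,a) v(y)$ is affine with non-negative coefficients summing to one, and non-expansive maps are continuous. The compactness/finiteness hypothesis is only needed so that the suprema defining the operator are attained (e.g.\ for the greedy decision rule in Prop.~\ref{prop:bellman_stoc_shortest}), not for your convergence argument.
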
	

	\subsection{Useful Concentration Inequalities}

	Ihe the following, we will repeatedly use the following concentration inequalities.
	\begin{proposition}[Hoeffding Inequality, Theorem 2.8 of \cite{boucheron_concentration_2013}]
		\label{prop:hoeffding}
		Let $(X_i)_{1\leq i \leq n}$ be a collection of independent random variables subject to $i \in {1,\cdots,n}$, $\mathbb{P}(X_i \in [a_i,b_i]) = 1$, and $\mathbb{E}[X_i] = \mu_i$, then with a probability at least $1-\delta$ it holds that 
		\begin{equation}
			\bigg|\sum_{i=1}^n(X_i - \mu_i) \bigg| \leq \sqrt{\frac{1}{2}\sum_{i=1}^{n}(b_i - a_i)^2 \ln \bigg(\frac{2}{\delta}\bigg)}
		\end{equation}
	\end{proposition}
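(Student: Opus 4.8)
The plan is to prove the two-sided deviation bound via the standard Cram\'er--Chernoff method and then invert it. First I would introduce the centered sum $S \defeq \sum_{i=1}^n (X_i - \mu_i)$ and, for any $\lambda > 0$, apply the exponential Markov inequality to get $\mathbb{P}(S \geq t) \leq e^{-\lambda t}\, \mathbb{E}[e^{\lambda S}]$. Because the $X_i$ are independent, the moment generating function factorizes as $\mathbb{E}[e^{\lambda S}] = \prod_{i=1}^n \mathbb{E}[e^{\lambda (X_i - \mu_i)}]$, which reduces the task to controlling a single bounded, centered factor.

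The key ingredient --- and the step I expect to be the main obstacle --- is Hoeffding's lemma: for a random variable $X$ with $\mathbb{P}(X \in [a,b]) = 1$ and $\mathbb{E}[X] = \mu$, one has $\mathbb{E}[e^{\lambda(X-\mu)}] \leq \exp\!\big(\lambda^2 (b-a)^2 / 8\big)$. I would establish this by studying the log-moment-generating function $\psi(\lambda) \defeq \ln \mathbb{E}[e^{\lambda(X-\mu)}]$. A direct computation gives $\psi(0) = 0$ and $\psi'(0) = 0$, while $\psi''(\lambda)$ equals the variance of $X$ under the exponentially tilted distribution whose density is proportional to $e^{\lambda x}$. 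Since this tilted variable is still supported in $[a,b]$, its variance is at most $(b-a)^2/4$, the maximal variance of a random variable confined to an interval (attained by the two-point distribution at the endpoints). A second-order Taylor expansion of $\psi$ then yields the claimed sub-Gaussian bound on each factor.

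Plugging Hoeffding's lemma into the product gives $\mathbb{P}(S \geq t) \leq \exp\!\big(-\lambda t + \lambda^2 \sum_{i=1}^n (b_i - a_i)^2 / 8\big)$. Optimizing the right-hand side over $\lambda > 0$ with the choice $\lambda = 4t / \sum_{i=1}^n (b_i - a_i)^2$ produces the tail bound $\mathbb{P}(S \geq t) \leq \exp\!\big(-2t^2 / \sum_{i=1}^n (b_i - a_i)^2\big)$. Applying the identical argument to $-S$, whose summands lie in $[-b_i, -a_i]$ with the same widths $b_i - a_i$, controls the lower tail symmetrically, so a union bound gives $\mathbb{P}(|S| \geq t) \leq 2\exp\!\big(-2t^2 / \sum_{i=1}^n (b_i - a_i)^2\big)$.

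Finally I would invert the bound: setting the right-hand side equal to $\delta$ and solving $2\exp\!\big(-2t^2 / \sum_{i=1}^n (b_i-a_i)^2\big) = \delta$ for $t$ yields $t = \sqrt{\tfrac{1}{2}\sum_{i=1}^n (b_i - a_i)^2 \ln(2/\delta)}$, which is precisely the stated threshold. Hence $|S|$ exceeds this value with probability at most $\delta$, i.e.\ the displayed inequality holds with probability at least $1-\delta$. I anticipate the only genuinely delicate point to be the variance bound for the tilted distribution inside Hoeffding's lemma; the factorization, the $\lambda$-optimization, and the final inversion are all routine bookkeeping.
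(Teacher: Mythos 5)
Your proposal is correct: the Cram\'er--Chernoff bound, Hoeffding's lemma via the tilted-distribution variance argument, the optimization over $\lambda$, the symmetric treatment of the lower tail, and the final inversion to the $\sqrt{\tfrac{1}{2}\sum_i (b_i-a_i)^2 \ln(2/\delta)}$ threshold all go through exactly as you describe. The paper gives no proof of this proposition --- it imports the result by citation (Theorem 2.8 of the concentration-inequalities reference) --- and your argument is precisely the standard proof used there, so it matches the intended derivation.
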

	
	\begin{proposition}[Azuma’s inequality]
		\label{prop:azuma}
		Let $(X_n,\mathcal{F}_n)_{n\in\mathbb{N}}$ be a Martingale difference sequence (MDS) such as $\vert X_n \vert \leq a$ almost sure for all $n\in\mathbb{N}$. Then for all $\delta \in (0,1]$,
		\begin{equation}
			\mathbb{P} \left(\sum_{i=1}^{n} X_{i} \geq a \sqrt{2 n \ln \left(\frac{1}{\delta}\right)}\right) \leq \delta
		\end{equation}
	\end{proposition}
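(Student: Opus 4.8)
The plan is to apply the Chernoff (exponential Markov) method and then exploit the martingale structure to control the moment generating function increment by increment. First I would fix $t>0$ and an auxiliary parameter $\lambda>0$, exponentiate, and use Markov's inequality to obtain
\begin{equation}
\mathbb{P}\left(\sum_{i=1}^n X_i \geq t\right) \leq e^{-\lambda t}\, \mathbb{E}\left[\exp\left(\lambda \sum_{i=1}^n X_i\right)\right].
\end{equation}
Everything then reduces to bounding the moment generating function on the right and choosing $\lambda$ well.

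The key step is a subgaussian control of the \emph{conditional} moment generating function of each increment. Because $(X_n,\mathcal{F}_n)$ is a martingale difference sequence we have $\mathbb{E}[X_n\mid\mathcal{F}_{n-1}]=0$, and by hypothesis $X_n\in[-a,a]$ almost surely, so $X_n$ is a zero-mean variable supported on an interval of length $2a$. Applying Hoeffding's lemma (the same convexity/Taylor argument that underlies Prop.~\ref{prop:hoeffding}) conditionally on $\mathcal{F}_{n-1}$ yields
\begin{equation}
\mathbb{E}\left[e^{\lambda X_n}\mid\mathcal{F}_{n-1}\right] \leq \exp\left(\frac{\lambda^2 (2a)^2}{8}\right) = \exp\left(\frac{\lambda^2 a^2}{2}\right).
\end{equation}
I expect this conditional bound to be the main obstacle: it requires establishing that the log-moment generating function of any zero-mean distribution on $[-a,a]$ is at most $\lambda^2(2a)^2/8$, which comes from a second-order Taylor expansion of the cumulant function together with the fact that the variance of such a distribution never exceeds $a^2$.

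With the per-step estimate in hand, I would peel off the increments using the tower property: conditioning on $\mathcal{F}_{n-1}$ and factoring out the $\mathcal{F}_{n-1}$-measurable term gives the recursion
\begin{equation}
\mathbb{E}\left[\exp\left(\lambda \sum_{i=1}^n X_i\right)\right] \leq e^{\lambda^2 a^2/2}\,\mathbb{E}\left[\exp\left(\lambda \sum_{i=1}^{n-1} X_i\right)\right].
\end{equation}
Iterating this down to the empty sum produces $\mathbb{E}[\exp(\lambda\sum_{i=1}^n X_i)]\leq \exp(n\lambda^2 a^2/2)$, and substituting back yields $\mathbb{P}(\sum_{i=1}^n X_i\geq t)\leq \exp(-\lambda t + n\lambda^2 a^2/2)$.

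Finally I would optimize and invert. Minimizing the exponent over $\lambda>0$ gives $\lambda^\star = t/(na^2)$ and hence $\mathbb{P}(\sum_{i=1}^n X_i\geq t)\leq \exp(-t^2/(2na^2))$. Setting the right-hand side equal to $\delta$ and solving for the threshold gives $t=a\sqrt{2n\ln(1/\delta)}$, which is exactly the claimed bound. The Chernoff wrapper, the telescoping over the filtration, and the closing optimization are all routine; only the conditional Hoeffding estimate needs genuine care.
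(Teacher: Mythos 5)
Your proof is correct: the Chernoff bound, the conditional Hoeffding lemma $\mathbb{E}[e^{\lambda X_n}\mid\mathcal{F}_{n-1}]\leq e^{\lambda^2 a^2/2}$, the tower-property peeling, and the optimization $\lambda^\star = t/(na^2)$ followed by inverting $e^{-t^2/(2na^2)}=\delta$ all go through exactly as you describe, and they deliver precisely the stated one-sided bound. The paper itself states Azuma's inequality as a standard concentration result without including a proof, so there is no in-paper argument to compare against; your argument is the canonical one and would serve as a valid proof of the proposition as stated.
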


	\begin{proposition}[Emperical Bernstein's Inequality, Theorem 1 of \cite{audibert_explorationexploitation_2009}]
		\label{prop:bernstein}
		Let $(X_i)_{1\leq i \leq n}$ be a collection of independent identical distributed random variables subject to $i \in {1,\cdots,n}$, $\mathbb{P}(X_i \in [a,b]) = 1$, and $\mathbb{E}[X_i] = \mu$, then with a probability at least $1-\delta$ it holds that 
		\begin{equation}
			\bigg|\frac{1}{n}\sum_{i=1}^n(X_i - \mu_i) \bigg| \leq \sqrt{\frac{2 V_{n}(X) \ln (3 / \delta)}{n}}+\frac{3(b-a) \ln (3 / \delta)}{n}
		\end{equation}
		where $V_n(X)$ is the probability variance\footnote{Notably, the probability variance is not an unbiased estimator while the sample variance $V_{n}^{\prime}(X)\defeq \frac{1}{n} \sum_{i=1}^{n-1}\left(X_{i}-\frac{1}{n} \sum_{i=1}^{n} X_{i}\right)^{2}$ is unbiased. } $V_{n}(X)\defeq \frac{1}{n} \sum_{i=1}^{n}\left(X_{i}-\frac{1}{n} \sum_{i=1}^{n} X_{i}\right)^{2}$.
	\end{proposition}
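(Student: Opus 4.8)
The plan is to obtain the empirical Bernstein inequality in two stages: first bound the deviation of the sample mean using the \emph{true} variance $\sigma^2 \defeq \mathrm{Var}(X_1)$ via the classical Bernstein inequality, then replace $\sigma$ by the empirical quantity $\sqrt{V_n(X)}$ through a separate concentration argument, and finally combine the two via a union bound. For the first stage, since the $X_i$ are i.i.d.\ and bounded in $[a,b]$, the two-sided Bernstein inequality gives, with probability at least $1-\tfrac{2}{3}\delta$,
\begin{equation*}
	\bigg|\frac{1}{n}\sum_{i=1}^n (X_i-\mu)\bigg| \leq \sqrt{\frac{2\sigma^2\ln(3/\delta)}{n}} + \frac{(b-a)\ln(3/\delta)}{3n},
\end{equation*}
where the budget $\tfrac{2}{3}\delta$ is split as $\tfrac{1}{3}\delta$ for the upper tail and $\tfrac{1}{3}\delta$ for the lower tail. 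This already matches the target shape, so the remaining task is to upper-bound $\sigma$ by $\sqrt{V_n(X)}$ up to a lower-order correction.

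The heart of the proof is establishing that, with probability at least $1-\tfrac{1}{3}\delta$,
\begin{equation*}
	\sigma \leq \sqrt{V_n(X)} + c\,(b-a)\sqrt{\frac{\ln(3/\delta)}{n}}
\end{equation*}
for an absolute constant $c$. I would view $\sqrt{V_n(X)}$ as a function $f(X_1,\dots,X_n)$ of the independent samples and verify that it satisfies a bounded-differences condition: perturbing one coordinate changes $\sqrt{V_n}$ by at most $O\big((b-a)/\sqrt{n}\big)$. McDiarmid's inequality then concentrates $\sqrt{V_n}$ below its mean, namely $\sqrt{V_n}\geq \mathbb{E}[\sqrt{V_n}] - O\big((b-a)\sqrt{\ln(3/\delta)/n}\big)$ with probability at least $1-\tfrac13\delta$. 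It remains to relate $\mathbb{E}[\sqrt{V_n}]$ to $\sigma$: using $\mathbb{E}[V_n] = \tfrac{n-1}{n}\sigma^2$ together with an Efron--Stein bound $\mathrm{Var}(\sqrt{V_n}) = O((b-a)^2/n)$ gives $(\mathbb{E}\sqrt{V_n})^2 = \mathbb{E}[V_n]-\mathrm{Var}(\sqrt{V_n}) \geq \sigma^2 - O((b-a)^2/n)$, hence $\mathbb{E}[\sqrt{V_n}] \geq \sigma - O((b-a)/\sqrt{n})$. Chaining these two facts yields the displayed variance-replacement bound.

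Finally I would substitute the variance bound into the Bernstein term and use subadditivity of the square root, $\sqrt{\tfrac{2\sigma^2\ln(3/\delta)}{n}} \le \sqrt{\tfrac{2V_n\ln(3/\delta)}{n}} + O\big(\tfrac{(b-a)\ln(3/\delta)}{n}\big)$, after which the three $O\big((b-a)\ln(3/\delta)/n\big)$-type terms (the Bernstein residual plus the two contributions from the square-root splitting) coalesce into the single $\tfrac{3(b-a)\ln(3/\delta)}{n}$ term in the statement once the constants are bounded. A union bound over the three events --- two for the tails of the sample mean and one for the variance concentration --- each of probability at most $\tfrac13\delta$, controls the total failure probability by $\delta$, which accounts for the factor $3$ appearing inside every logarithm. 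I expect the main obstacle to be the variance-replacement step: one must bound $\sigma-\sqrt{V_n}$ in the correct direction and at the correct $O(1/\sqrt{n})$ rate, which forces a careful Lipschitz/bounded-differences analysis of the \emph{square root} of the sample variance rather than of $V_n$ itself, since it is $\sqrt{V_n}$ --- not $V_n$ --- that must be compared with $\sigma$ inside the Bernstein term.
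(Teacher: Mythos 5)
Your two-stage architecture --- classical Bernstein with the true variance $\sigma^2$, a separate ``variance replacement'' bound $\sigma \le \sqrt{V_n(X)} + O\bigl((b-a)\sqrt{\ln(3/\delta)/n}\bigr)$, and a union bound over three events (which is indeed where the $3$ inside every logarithm comes from) --- is the standard route to this result; note the paper itself offers no proof, importing the statement as Theorem 1 of \cite{audibert_explorationexploitation_2009}. However, your execution of the variance-replacement step fails quantitatively. The sample standard deviation does satisfy the bounded-differences property with constant $c_i = (b-a)/\sqrt{n}$ per coordinate (it equals $n^{-1/2}$ times the Euclidean distance of $(X_1,\dots,X_n)$ to the diagonal, and distance to a subspace is $1$-Lipschitz), but McDiarmid's inequality then yields a deviation of order $\sqrt{\tfrac{1}{2}\sum_{i=1}^n c_i^2 \,\ln(3/\delta)} = (b-a)\sqrt{\tfrac{1}{2}\ln(3/\delta)}$, because the $n$ squared constants $c_i^2 = (b-a)^2/n$ sum to $(b-a)^2$. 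There is no $1/\sqrt{n}$ decay: McDiarmid concentrates $\sqrt{V_n(X)}$ only at scale $(b-a)$, a factor $\sqrt{n}$ worse than what you claim. The same objection hits your Efron--Stein step: with worst-case bounded differences it gives $\mathrm{Var}\bigl(\sqrt{V_n(X)}\bigr) \le (b-a)^2/2$, not $O((b-a)^2/n)$. Propagating the correct McDiarmid width through your final square-root splitting leaves a residual term of order $(b-a)\ln(3/\delta)/\sqrt{n}$ rather than the required $3(b-a)\ln(3/\delta)/n$; an inequality of that weaker form would also be useless for the paper's purposes, since the $1/N_k^{+}(s,a)$ scaling of the second terms in $\beta_{r,k}^{sa}$ and $\beta_{p,k}^{sas'}$ would degrade to $1/\sqrt{N_k^{+}(s,a)}$ and wreck the regret analysis.

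The missing idea is that the \emph{lower} tail of the sample variance concentrates with variance proxy proportional to $\mathbb{E}[V_n(X)] \propto \sigma^2$, not to the worst-case sum of squared ranges; bounded-differences tools are structurally incapable of seeing this. Two standard repairs: (i) apply Bernstein's inequality to the i.i.d.\ variables $(X_i-\mu)^2$, whose variance is at most $(b-a)^2\sigma^2$, combine with the identity $\tfrac{1}{n}\sum_{i=1}^n (X_i-\mu)^2 = V_n(X) + \bigl(\tfrac{1}{n}\sum_{i=1}^n X_i-\mu\bigr)^2$, and solve the resulting quadratic inequality in $\sigma$ --- this is essentially the proof in \cite{audibert_explorationexploitation_2009}; or (ii) use the entropy-method concentration inequality for self-bounding functions: $Z \defeq \sum_{i=1}^n (X_i - \tfrac{1}{n}\sum_j X_j)^2/(b-a)^2$ is self-bounding, hence $\mathbb{P}(Z \le \mathbb{E}Z - t) \le \exp\bigl(-t^2/(2\mathbb{E}Z)\bigr)$, which after taking square roots yields exactly a bound of the form $\sigma \le \sqrt{V_n(X)} + O\bigl((b-a)\sqrt{\ln(3/\delta)/n}\bigr)$ with probability at least $1-\delta/3$ (this is the Maurer--Pontil argument). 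With either replacement for the McDiarmid/Efron--Stein step, the rest of your outline --- the tail splitting, the subadditivity of the square root, and the collection of the $O((b-a)\ln(3/\delta)/n)$ residuals into the constant $3$ --- goes through.
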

	
	\section{Problem Formulation and VB-UCRL Solution}
	\label{sec:proSolu}
	\subsection{Problem Formulation}
		In the problem setting, we consider the underlying MDP contains both endogeneous and exogeneous uncertainty. Specifically, the mean rewards and transition probabilities depend on the current step $t$, which are denoted as $r_t(s,a)$ and $p_t(s^{\prime}|s,a)$ respectively. Accordingly, the time-heterogeneous MDP at step $t$ can be written as $M_t = <\mathcal{S},\mathcal{A},r_t,p_t,s_1>$. All MDPs $M_t$ are communicating with diameter $D_t \leq D$, where $D$ denotes a common upper bound. 

		Let $M_t$ be the true MDP. We consider the learning problem where $\mathcal{S}$, $\mathcal{A}$ and $r_{\max}$ are known, where reward $r_t$ and transitions $p_t$ are unknown and need to be estimated online. We try to evaluate the performance of a learning algorithm $\mathfrak{A}$ after $T$ time-steps by its cumulative regret
		\begin{equation}
			\label{eq:regret_bound_def}
			\Delta(\mathfrak{A},T) = v^{*,T}(s_1) - \sum_{t=1}^T r_t(s_t,a_t)
		\end{equation} 
		where $v^{*,T}(s_1)$ denotes the optimal $T$-step average reward starting from $s_1$\footnote{Interesting readers could refer to Page 338 of \cite{puterman_markov_1994} for the relationship between $v$ and $h$.}. 
		
		
		We assume that the \emph{variations} in mean rewards and transition probabilities are bounded in the $T$ steps.
		$V^T_r \defeq \sum_{t=1}^{T-1} \max_{s,a} \vert r_{t+1} (s,a) - r_{t} (s,a) \vert$, and $V_T^p \defeq \sum_{t=1}^{T-1} \max_{s,a} \Vert \bar{p}_{t+1} (\cdot|s,a) - \bar{p}_{t} (\cdot|s,a) \Vert_1$. 
	\subsection{Variation-aware Bernstein-based Upper Confidence Reinforcement Learning (VB-UCRL)}
	For RL in the changing MDP settings, we propose the variation-aware Bernstein-based upper confidence reinforcement learning (VB-UCRL), which is a variant of UCRL2 \cite{jaksch_near-optimal_2010}, implements the paradigm of “\textit{optimism in the face of uncertainty}” and constructs MDPs in confidence interval based on the empirical Bernstein inequality in Prop. \ref{prop:bernstein} \cite{audibert_explorationexploitation_2009} rather than the Hoeffding inequality for UCRL2 in Prop. \ref{prop:hoeffding} \cite{boucheron_concentration_2013}. 
	
	VB-UCRL proceeds through episodes $k = 1, 2, \cdots$. Without loss of generality, $t_{k}$ is the starting time of episode $k$. $N_k(s,a)$ is the number of visits in $(s,a)$ before episode $k$. Here, VB-UCRL enters into a new episode $k+1$ after once there exists one state-action pair $(s,a)$ having just been played satisfies $\nu_k(s,a) = N_k^{+}(s,a)$, where $\nu_k(s,a)$ denotes the number of visits to $(s,a)$ in episode $k$ and $N_k^{+}(s,a) = \max\{1,N_k(s,a)\}$. For episode $k+1$, for all state-action pairs, $N_{k+1}(s,a) = N_{k}(s,a) + \nu_k(s,a)$. Besides, $t_k$ is defined as the starting time of episode $k$, that is, $t_{k+1} \defeq \inf\bigg\{T \geq t > t_k: \sum_{\tau =1}{t-1} \mathbbm{1}\{(s_{\tau},a_{\tau}) = (s_t,a_t)\} \geq \max\big\{1,2\mathbbm{1}\{(s_{\tau},a_{\tau}) = (s_t,a_t)\}  \big\}\bigg\} $ and $t_1 =1$.

	At the beginning of each episode $k$, VB-UCRL computes a set $\mathcal{M}$ of statistically plausible MDPs given the observations so far, that is, 
	\begin{align}
		\mathcal{M}_k =& \bigg\{M = <\mathcal{S}, \mathcal{A}, \tilde{r}, \tilde{p}>: \tilde{r}(s,a) \in \mathcal{B}_r^{k}(s,a), \label{eq:extend_m} \\
		&\quad \tilde{p}(s,a) \in \mathcal{B}_p^{k}(s,a), \sum_{s'}\tilde{p}(s'|s,a)=1\bigg\}, \nonumber
	\end{align}
	where $\mathcal{B}_r^{k}$ and $\mathcal{B}_p^{k}$ are high-probability (adapted) confidence intervals on the rewards and transition probabilities of the true MDP $M$. Specifically, 
	\begin{align}
		& \mathcal{B}_p^{k}(s,a,s') =  [0,1]\cap  \left[\hat{p}(s'|s,a) - \beta_{p,k}^{sas'}-\hat{V}_{p},\hat{p}(s'|s,a) + \beta_{p,k}^{sas'}+\hat{V}_{p}\right] \label{eq:extend_p}
	\end{align}
	where $\hat{p}$ is set as an estimate of transitions, that is,
	\begin{equation}
		\hat{p}_k(s,a) = \frac{1}{N_k(s,a)} \sum_{t=1}^{t_k-1} \mathbbm{1}\{(s_t,a_t) = (s,a)\}
	\end{equation}
	and $\hat{V}_{p} \leq V_p^T$ is an estimate over the variations on the transition probabilities. Moreover, using empirical Bernstein inequality \cite{audibert_explorationexploitation_2009}, 
	\begin{align}
		\beta_{p,k}^{sas'} \defeq & 2\sqrt{\frac{\hat{\sigma}_{p,k}^2(s'|s,a)}{N_k^{+}(s,a)}\ln\big(\frac{6SAN_k^{+}(s,a)}{\delta}\big)}  + \frac{6\ln\big(\frac{6SAN_k^{+}(s,a)}{\delta}\big)}{N_k^{+}(s,a)}
	\end{align}
	where $\delta \in (0,1) $. Similarly,
	\begin{equation}
		\label{eq:extend_r}
		\mathcal{B}_r^{k}(s,a) = \left[\hat{r}(s,a) - \beta_{r,k}^{sa} - \hat{V}_r,\hat{r}(s,a) + \beta_{r,k}^{sa} + \hat{V}_r \right] \cap [0,r_{\max}]
	\end{equation}
	where 
	\begin{align}
		\beta_{r,k}^{sa} \defeq & 2\sqrt{\frac{\hat{\sigma}_{r,k}^2(s,a)}{N_k^{+}(s,a)}\ln\big(\frac{6SAN_k^{+}(s,a)}{\delta}\big)}  + \frac{6r_{\max}\ln\big(\frac{6SAN_k^{+}(s,a)}{\delta}\big)}{N_k^{+}(s,a)}
	\end{align}
	and $\hat{V}_r$ is an estimate over the variations on the mean rewards. 
	$\hat{r}_k$ is the empirical average of rewards, namely
	\begin{equation}
		\hat{r}_k(s,a) = \frac{1}{N_k(s,a)} \sum_{t=1}^{t_k-1} \mathbbm{1}\{(s_t,a_t) = (s,a)\}\cdot r_t
	\end{equation}
	The estimated transition probability $\hat{p}_k(s'|s,a)$ corresponds to the sample mean of independent identical Bernouilli random variable with mean $p(s'|s,a)$ and the population variance can be approximately computed as $\hat{\sigma}_{p,k}^2(s'|s,a) = \hat{p}_k(s'|s,a) (1-\hat{p}_k(s'|s,a))$. The population variance of the reward can be computed recursively at the end of every episode as 
	\begin{align}
		\hat{\sigma}_{r, k+1}^{2}(s, a) &\defeq\frac{1}{N_{k+1}^{+}(s, a)}\left(\sum_{l=1}^{k} S_{l}(s, a)\right)-\left(\hat{r}_{k+1}(s, a)\right)^{2} \nonumber \\
		&=\frac{N_{k}(s, a)}{N_{k+1}^{+}(s, a)}\left(\hat{\sigma}_{r, k}^{2}(s, a)+\left(\hat{r}_{k}(s, a)\right)^{2}\right)  + \frac{S_{k}(s, a)}{N_{k+1}^{+}(s, a)} -\left(\hat{r}_{k+1}(s, a)\right)^{2} 
	\end{align}
	where $S_{k}(s, a) \defeq \sum_{t=1}^{t_k-1}\mathbbm{1}\{(s_t,a_t) = (s,a)\}\cdot r_t^2$. As pointed out by Section 3.1.1 of \cite{jaksch_near-optimal_2010}, any bounded parameter MDP can be equivalently represented by an \textit{extended MDP} $\tilde{M}_k^{+}$, by combining all plausible MDPs constructed above into a single MDP with identical state space $\mathcal{S}$ but with an extended compact action space $\mathcal{A}^{+} = \cup_{a\in \mathcal{A}_s} {a} \times B_r(s,a) \times B_p (s,a)$.

	Moreover, the extended greedy operator is defined as 
	\begin{equation}
		G_{k} v(s) \in \underset{a \in \mathcal{A}_{s}}{\max }\left\{\max _{r \in B_{r}^{k}(s, a)} r+\max _{p \in B_{p}^{k}(s, a)} p^{T} v\right\}
	\end{equation}

	Afterwards, VB-UCRL chooses an optimistic MDP $M_k$ (with respect to the achievable average reward) among these plausible MDPs $\mathcal{M}_k$, and executes a policy $\pi_k$  which is (nearly) optimal for the optimistic MDP $M_k$, that is,
	\begin{equation}
		\label{eq:optimal_policy_extend}
		\max _{\pi \in \Pi^{\mathrm{SD}}}\left\{\sup _{M^{\prime} \in \mathcal{M}_{k}} g_{M^{\prime}}^{\pi}\right\}=\sup _{M^{\prime} \in \mathcal{M}_{k}}\left\{\max _{\pi \in \Pi^{\mathrm{SD}}} g_{M^{\prime}}^{\pi}\right\}=\sup _{M^{\prime} \in \mathcal{M}_{k}} g_{M^{\prime}}^{*}
	\end{equation}
	
	\begin{proposition}[Proposition 2.7 of \cite{fruit_exploration-exploitation_2019}]
		\label{prop:evi_gap}
		Consider the gain $g$ and bias $h$ returned by Alg. \ref{al:relative_value-iteration}. Under the same assumptions as Prop. \ref{prop:value_iter_converge}, $|g - g^{*}| \leq \epsilon/2$ and for all $s \in \mathcal{S}$, $|Lh(s)-h(s)-g|\leq \epsilon$, where $\epsilon \in (0,r_{\max})$ is the accuracy given as input of Alg. \ref{al:relative_value-iteration}.
	\end{proposition}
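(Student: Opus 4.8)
The plan is to exploit two structural properties of the optimal Bellman operator $L$---monotonicity ($u \leq w \Rightarrow Lu \leq Lw$) and invariance under constant shifts ($L(v+ce) = Lv + ce$, which holds because $P_d e = e$)---together with the termination test of Alg. \ref{al:relative_value-iteration} and the midpoint choice of $g$. First I would fix the notation at termination: write $h = v_n$ and $Lh = v_{n+1}$, set $\overline{d} \defeq \max_s\{v_{n+1}(s) - v_n(s)\}$ and $\underline{d} \defeq \min_s\{v_{n+1}(s) - v_n(s)\}$, and note that the re-centering in line 4 shifts both iterates by the same constant, so by shift-invariance the increment $v_{n+1} - v_n = Lv_n - v_n$ is unchanged; hence $Lh - h = v_{n+1} - v_n$ and the loop-exit condition gives $\overline{d} - \underline{d} = sp(v_{n+1} - v_n) \leq \epsilon$. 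By construction $g = \tfrac12(\overline{d} + \underline{d})$ is the midpoint of $[\underline{d}, \overline{d}]$.

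The bias estimate is then immediate: since $Lh(s) - h(s) \in [\underline{d}, \overline{d}]$ for every $s$ and $g$ is the midpoint of that interval, $|Lh(s) - h(s) - g| \leq \tfrac12(\overline{d} - \underline{d}) \leq \epsilon/2 \leq \epsilon$, which is in fact sharper than the claimed bound.

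The gain estimate is the crux, and I would establish the sandwich $\underline{d} \leq g^{*} \leq \overline{d}$. For the upper bound, the inequality $Lh \leq h + \overline{d}\, e$ combined with monotonicity and shift-invariance yields by induction $L^m h \leq h + m\overline{d}\, e$ for all $m \geq 1$; dividing by $m$, letting $m \to \infty$, and using $\lim_{m\to\infty} \tfrac1m L^m h = g^{*} e$ gives $g^{*} \leq \overline{d}$. This limit is exactly what the convergence of (relative) value iteration furnishes under the hypotheses of Prop. \ref{prop:value_iter_converge}: the fixed-point relation $Lh^{*} = h^{*} + g^{*} e$ forces $L^m h^{*} = h^{*} + m g^{*} e$, while the normalized iterates converge to $h^{*}$, so $\tfrac1m L^m h \to g^{*} e$ for the $h$ at hand. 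The lower bound $g^{*} \geq \underline{d}$ follows symmetrically from $Lh \geq h + \underline{d}\, e$. Since $g^{*} \in [\underline{d}, \overline{d}]$ and $g$ is its midpoint, $|g - g^{*}| \leq \tfrac12(\overline{d} - \underline{d}) \leq \epsilon/2$.

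The main obstacle I anticipate is rigorously justifying the limit $\tfrac1m L^m h \to g^{*} e$ that underlies the gain sandwich, since it is precisely here that the communicating and aperiodicity hypotheses inherited from Prop. \ref{prop:value_iter_converge} are indispensable; without the aperiodic transformation one would be forced to argue through the Ces\`aro limit, complicating the induction-and-limit passage. Everything else reduces to the midpoint bookkeeping recorded above.
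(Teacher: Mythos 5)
The paper never proves this proposition: it is imported as Proposition 2.7 of \cite{fruit_exploration-exploitation_2019} and used as a black box, so your attempt can only be compared with the standard argument behind that citation (essentially Theorem 8.5.6 of \cite{puterman_markov_1994}). Your proof follows exactly that route and is correct: the loop-exit test gives $sp(Lh-h)\leq\epsilon$ (your remark that the re-centering in line 4 is harmless is right, since both $sp(\cdot)$ and the increment $Lv-v$ are invariant under constant shifts, as $P_d e=e$); the midpoint choice of $g$ then gives the bias bound $|Lh(s)-h(s)-g|\leq\tfrac12(\overline{d}-\underline{d})\leq\epsilon/2\leq\epsilon$; and the gain bound reduces to the sandwich $\underline{d}\leq g^{*}\leq\overline{d}$ via monotonicity and shift-invariance of $L$. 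The one step you leave informal is the limit $\tfrac1m L^{m}h\to g^{*}e$, and your worry about it is misplaced in an instructive way: it needs neither the convergence of the re-centered iterates nor any aperiodicity, but only the \emph{existence} of a solution $(g^{*},h^{*})$ of the optimality equation, which Prop.~\ref{prop:weak_mdp_conv} supplies. Indeed, shift-invariance gives $L^{m}h^{*}=h^{*}+mg^{*}e$, so with $c\defeq\Vert h-h^{*}\Vert_{\infty}$ monotonicity yields $h^{*}+mg^{*}e-ce\leq L^{m}h\leq h^{*}+mg^{*}e+ce$ for all $m$, and dividing by $m$ produces the limit. Alternatively, one can bypass iterating $L$ altogether by policy evaluation: the greedy rule $d_n$ satisfies $r_{d_n}+P_{d_n}h=Lh\geq h+\underline{d}e$, so its Ces\`aro gain is at least $\underline{d}$, giving $g^{*}\geq g^{d_n}\geq\underline{d}$, while an optimal rule $d^{*}$ satisfies $r_{d^{*}}+P_{d^{*}}h\leq Lh\leq h+\overline{d}e$, giving $g^{*}\leq\overline{d}$. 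With either patch your proof is complete; the aperiodicity hypotheses inherited from Prop.~\ref{prop:value_iter_converge} are needed only to guarantee that the while loop terminates, not for the error bounds that hold at termination.
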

	
	By Prop. \ref{prop:evi_gap}, if we run extended value iteration in Alg. \ref{al:relative_value-iteration} on $M_{k}^{+}$ with accuracy $\epsilon_k = r_{\max}/t_k$, we have that
	\begin{equation}
		|g_k - g_k^{*}| \leq \epsilon_k/2 = \frac{r_{\max}}{2t_k}
	\end{equation}
	and 
	\begin{equation}
		\Vert L^{\alpha}_{k} h_k - h_k - g_k e\Vert_{\infty}\leq \epsilon_k = \frac{r_{\max}}{t_k}
	\end{equation}
	where $(g_k,h_k,\pi_k) = EVI(L_k^{\alpha},G_k^{\alpha},\frac{r_{\max}}{t_k},0,s_1)$. $r_k$ and $p_k$ are denoted as the optimistic reward and transitions at episode $k$.

	We first summarize the VB-UCRL without variation-aware restarts as Alg. \ref{al:vb-ucrl}.
	\begin{algorithm}
		\caption{VB-UCRL without Variation-Aware Restarts}
		\label{al:vb-ucrl}
		\hspace*{\algorithmicindent} \textbf{Input}: Confidence $\delta \in (0,1)$, $r_{\max}$, $\mathcal{S}$, $\mathcal{A}^{+}$.
		\begin{algorithmic}[1]
			\State Initialize $t\defeq 1$ and observe $s_{1}$ and for any $\left(s, a, s^{\prime}\right) \in \mathcal{S} \times \mathcal{A} \times \mathcal{S}$: $N_{1}(s, a)=0$, $\hat{p}_{1}\left(s^{\prime} | s, a\right)=0$,
			$\hat{r}_{1}(s, a)=0$, $\hat{\sigma}_{p, 1}^{2}\left(s^{\prime} | s, a\right)=0$, $\hat{\sigma}_{r, 1}^{2}(s, a)=0$.
			\For{episodes $k=1,2, \ldots$} 
			\State Set $t_{k} \leftarrow t$ and episode counters $\nu_{k}(s, a) \leftarrow 0$.
			\State Compute the upper-confidence bounds \eqref{eq:extend_p} and \eqref{eq:extend_r} and the extended MDP $\mathcal{M}_{k}^{+}$ as in \eqref{eq:extend_m}.
			\State Compute an $r_{\max } / t_{k}$-approximation $\pi_{k}$ of \eqref{eq:optimal_policy_extend} $(g_k,h_k,\pi_k) = EVI\left(\mathcal{L}^{\alpha}_{k}, \mathcal{G}^{\alpha}_{k}, \frac{r_{\max}}{t_{k}}, 0, s_{1}\right)$.
			\State Sample action $a_{t} \sim \pi_{k}\left(\cdot | s_{t}\right)$.
			\While{ $t_{k}=t$ or $\nu_{k}\left(s_{t}, a_{t}\right) \leq \max \left\{1, N_{k}\left(s_{t}, a_{t}\right)\right\}$ }
			\State Execute $a_{t},$ obtain reward $r_{t},$ and observe $s_{t+1}$.
			\State Sample action $a_{t+1} \sim \pi_{k}\left(\cdot | s_{t+1}\right)$.
			\State $\operatorname{Set} \nu_{k}\left(s_{t}, a_{t}\right) \leftarrow \nu_{k}\left(s_{t}, a_{t}\right)+1$ and set $t \leftarrow t+1$.
			\EndWhile 
			\State $\operatorname{Set} N_{k+1}(s, a) \leftarrow N_{k}(s, a)+\nu_{k}(s, a)$.
			\State Update statistics (i.e., $\left(\hat{p}_{k+1}, \hat{r}_{k+1}, \hat{\sigma}_{p, k+1}^{2}, \hat{\sigma}_{r, k+1}^{2}\right)$.
			\EndFor
		\end{algorithmic}
	\end{algorithm}
	
	Next, we can formally give VB-UCRL in Alg. \ref{al:vb-ucrl-restart}. In particular, we restart Alg. \ref{al:vb-ucrl} in phases by continuously tuning the confidence parameter $\frac{\delta}{2t^2}$ according to a schedule dependent on the variations.
	\begin{algorithm}
		\caption{VB-UCRL with Restarts}
		\label{al:vb-ucrl-restart}
		\hspace*{\algorithmicindent} \textbf{Input}: State space $\mathcal{S}$, action space $\mathcal{A}$, confidence parameter $\delta$, variation terms $V_r^T$ and $V_p^T$.
		\begin{algorithmic}[1]
			\State Initialization: Set current time step $\tau \defeq 1$.
			\For{phase $i=1,2, \ldots$} 
			\State Perform VB-UCRL in Algorithm \ref{al:vb-ucrl} with confidence parameter $\delta/2{\tau}^2$ for $\theta_i \defeq \lceil \frac{i^2}{(2V_r^T + V_p^T)^2}\rceil$ steps.
			\State Update $\tau \leftarrow \tau + \theta_i $.
			\EndFor
		\end{algorithmic}
	\end{algorithm}

	\section{Fundamental Limit of VB-UCRL}
	\label{sec:result}
	In Section, we first derive the upper regret bound of VB-UCRL without variation-aware restarts and then extend it to VB-UCRL with restarts.
	\subsection{Upper Regret Bound of VB-UCRL without Variation-Aware Restarts}
	The following theorem gives the limits of regret bound in \eqref{eq:regret_bound_def} for VB-UCRL without variation-aware restarts\footnote{For simplicity of representation, in this part, we slightly abuse the notations for VB-UCRL with and without variation-aware restarts.}.
	\begin{theorem}
		\label{thm:regret_bound_vb-ucrl}
		There exists a numerical constant $\beta > 0 $ such that for any communicating MDP, if $\hat{V}_p$ and $\hat{V}_r$ are set as the true values $V_p^T$ and $V_r^T$, with probability at least $1 - \delta$, it holds that for all initial state distributions $\nu_1 \in \Delta_s$ ($\Delta_s$ denotes a $S$-dimensional simplex.) and for all time horizons $T \geq SA$ 
		\begin{align}
			& \Delta(\text{VB-UCRL}, T) \leq  \label{eq:regret_bound_vb-ucrl}\\
			& \max(r_{\max},Dr_{\max}) \bigg(86 \sqrt{T\ln\big(\frac{T}{\delta}\big) \sum_{s,a}\Gamma(s,a)}   + 144 S^2A \ln\big(\frac{T}{\delta}\big) \ln(T) \bigg) +  Dr_{\max}T V_p^T  +  2TV_r^T \nonumber
		\end{align}
		where $\Gamma(s,a) \defeq \Vert p(\cdot|s,a) \Vert_0 = \sum_{s'\in \mathcal{S}} \mathbbm{1}\{p(s'|s,a) >0\}$ and $\Gamma \defeq \max_{s,a\in\mathcal{S}\times\mathcal{A}}\Gamma(s,a)$.
	\end{theorem}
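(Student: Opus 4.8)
The plan is to adapt the episodic regret analysis of UCRL2 \cite{jaksch_near-optimal_2010} to the non-stationary setting, replacing the Hoeffding widths by the empirical Bernstein widths of Prop.~\ref{prop:bernstein} and carrying the variation budgets $V_r^T,V_p^T$ through the decomposition. First I would fix a \emph{good event} $\mathcal{G}$ on which, for every episode $k$ and every pair $(s,a)$, the optimistic parameters remain compatible with the \emph{time-varying} truth. The point is that the samples feeding $\hat r_k,\hat p_k$ are collected under drifting $r_t,p_t$, so the empirical means concentrate around a time-averaged parameter that differs from the current $r_t(s,a),p_t(\cdot|s,a)$ by at most the accumulated variation; the extra widths $\hat V_r\le V_r^T$ and $\hat V_p\le V_p^T$ are precisely what absorb this bias. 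Applying Prop.~\ref{prop:bernstein} entrywise to the reward and Bernoulli transition samples, with the union bound absorbed by the $6SAN_k^{+}(s,a)$ multiplier already built into each width, I would show $\mathbb{P}(\mathcal{G})\ge 1-\delta$ uniformly over episodes, so that the true per-step $r_t,p_t$ lie in $\mathcal{B}_r^{k},\mathcal{B}_p^{k}$ throughout.

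Next I would perform the regret decomposition. On $\mathcal{G}$ optimism gives $g_k\ge g_t^{*}-r_{\max}/t_k$ relative to the instantaneous optimum $g_t^{*}$ of $M_t$, so the total regret splits into the in-episode surrogate regret $\sum_k\sum_{t}(g_k - r_t(s_t,a_t))$ plus lower-order optimism slack. Invoking the near-fixed-point guarantee $\|L_k^{\alpha}h_k - h_k - g_k e\|_\infty\le r_{\max}/t_k$ of Prop.~\ref{prop:evi_gap}, each summand becomes a reward-estimation error $\tilde r_k(s_t,a_t)-r_t(s_t,a_t)$, a transition-estimation term $(\tilde p_k(\cdot|s_t,a_t)-p_t(\cdot|s_t,a_t))^{\!\top}h_k$, a martingale term $p_t(\cdot|s_t,a_t)^{\!\top}h_k - h_k(s_{t+1})$, and telescoping boundary residues, up to the summable slack $\sum_k \nu_k\, r_{\max}/t_k\le r_{\max}(1+\ln T)$. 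The martingale term is controlled by Azuma (Prop.~\ref{prop:azuma}) with increments bounded by $sp(h_k)\le D r_{\max}$ (via Prop.~\ref{prop:span_h_k}), giving an $O(D r_{\max}\sqrt{T\ln(1/\delta)})$ contribution, and the telescoping residues are absorbed by the episode count $K\le SA\log_2 T$. On $\mathcal{G}$ both $\tilde p_k$ and $p_t$ lie in $\mathcal{B}_p^{k}$, so the two estimation terms are bounded entrywise by the widths $\beta_{r,k}+\hat V_r$ and $\beta_{p,k}^{sas'}+\hat V_p$; the variation slacks $\hat V_r\le V_r^T,\hat V_p\le V_p^T$, summed over all $T$ steps, are exactly what produce the linear terms $2TV_r^T$ and $D r_{\max}T V_p^T$.

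The crux is the Bernstein part of the transition term, and this is where $\sqrt{\Gamma}$ replaces the $S$ of a Hoeffding-based analysis. With $|h_k(s')-c|\le sp(h_k)\le D r_{\max}$ for a centering constant $c$ and the lower-order $6\ln(\cdot)/N_k^{+}$ piece set aside, the dominant width is $\sum_{s'}2\sqrt{\hat\sigma_{p,k}^2(s'|s,a)/N_k^{+}(s,a)}\sqrt{\ln(6SAN_k^{+}/\delta)}$ with $\hat\sigma_{p,k}^2=\hat p_k(1-\hat p_k)$; applying Cauchy--Schwarz over the at most $\Gamma(s,a)$ nonzero coordinates and using $\sum_{s'}\hat p_k(1-\hat p_k)\le 1$ collapses $\sum_{s'}\sqrt{\hat\sigma_{p,k}^2}$ to $\sqrt{\Gamma(s,a)}$. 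Summing over visits via the pigeonhole inequality $\sum_k \nu_k(s,a)/\sqrt{N_k^{+}(s,a)}\le(\sqrt{2}+1)\sqrt{N_{K+1}(s,a)}$ and then Cauchy--Schwarz over $(s,a)$, $\sum_{s,a}\sqrt{\Gamma(s,a)N_{K+1}(s,a)}\le\sqrt{\sum_{s,a}\Gamma(s,a)}\sqrt{T}$, yields the leading $O\!\big(D r_{\max}\sqrt{T\,\sum_{s,a}\Gamma(s,a)\,\ln(T/\delta)}\big)$ term; the discarded $1/N_k^{+}$ pieces together with the episode count give the lower-order $S^2A\ln(T/\delta)\ln T$ term, and gathering all constants produces the stated $86$ and $144$.

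I expect the main obstacle to be the interaction between non-stationarity and the empirical Bernstein bound. Unlike the stationary analysis there is no single fixed $M$ inside every $\mathcal{M}_k$, so I must show that the empirical variance $\hat\sigma_{p,k}^2$ assembled from samples drawn under drifting $p_t$ still certifies the confidence guarantee once the variation slack $\hat V_p$ is added, and that the martingale/filtration structure underlying the Azuma and Bernstein steps survives the data-dependent episode boundaries. The delicate accounting is to keep the variation contributions linear in $T$ --- confined to the $\hat V_r,\hat V_p$ widths --- so that they do not contaminate the variance-based $\sqrt{T\Gamma}$ term, while simultaneously closing the high-probability concentration argument; getting this separation right is the part I would expect to demand the most care.
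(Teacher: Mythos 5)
Your overall architecture --- a good event built from entrywise empirical Bernstein bounds with variation-widened radii, the episode decomposition into reward-estimation, transition-estimation, martingale and telescoping terms, Azuma with increments bounded by $sp(h_k)\le D r_{\max}$, and the collapse of the transition widths to $\sqrt{\Gamma(s,a)}$ via Cauchy--Schwarz over the support followed by the pigeonhole sum $\sum_k \nu_k(s,a)/\sqrt{N_k^{+}(s,a)}$ and a final Cauchy--Schwarz over $(s,a)$ --- is the paper's proof almost step for step, including the way the budgets $\hat V_r=V_r^T$, $\hat V_p=V_p^T$ are confined to the widths and emerge as the linear terms $2TV_r^T$ and $Dr_{\max}TV_p^T$. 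However, there is one genuine gap, and it sits at the very first reduction. You justify replacing the comparator $v^{*,T}(s_1)$ by the surrogate $\sum_t g_{k_t}$ through ``optimism relative to the instantaneous optimum $g_t^{*}$ of $M_t$,'' i.e.\ you implicitly use $v^{*,T}(s_1)\le \sum_{t=1}^{T} g_t^{*}+O(Dr_{\max})$. That inequality is false for non-stationary MDPs. Counterexample: two states joined by a deterministic $2$-cycle, where state $1$ pays reward $1$ at odd $t$ and $0$ at even $t$, and state $2$ the reverse. Every frozen MDP $M_t$ forces the agent to alternate states, so $g_t^{*}=1/2$ for all $t$, yet the non-stationary optimal $T$-step policy collects reward $1$ at every step; hence $v^{*,T}=T$ while $\sum_t g_t^{*}+Dr_{\max}=T/2+1$. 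The frozen gains simply do not control the non-stationary value, and your per-step optimism claim $g_k\ge g_t^{*}-r_{\max}/t_k$ (which is true on the good event) cannot close this step; nor can the ``lower-order optimism slack'' you allow for, since it carries no variation dependence to absorb an error of order $T$.

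The paper bridges exactly this point with Lemma~\ref{lem:hete_tstep_average} (Lemma 10 of \cite{gajane_variational_2019}): under the good event \emph{all} the drifting MDPs $M_t$ lie simultaneously in the variation-widened plausible set $\mathcal{M}_k$, so the extended MDP can mimic any non-stationary trajectory across them, which yields $v^{*,T}(s)\le T g_k^{*}+D$ with $g_k^{*}$ the \emph{optimistic} gain of $\mathcal{M}_k$ rather than the instantaneous gain of $M_t$. (In the counterexample the optimistic MDP pays reward $1$ in both states, so $g_k^{*}=1$ and the lemma is consistent.) The comparison against $g_k^{*}$ is strictly stronger than your comparison against $g_t^{*}$, and the excess $T g_k^{*}-\sum_t r_t$ is then what the confidence widths --- which do carry the variation terms --- are able to control. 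Once you substitute this comparator lemma for your instantaneous-gain argument, the rest of your plan goes through essentially as in the paper's proof of Theorem~\ref{thm:regret_bound_vb-ucrl}.
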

	\begin{proof}
		By Lemma \ref{lem:hete_tstep_average}, which shows that under the event where the true MDP falls into the scope of plausible MDPs, the $T$-step reward in the changing MDP settings could be bounded by the optimistic average reward $g^{*}$. So, we have
		\begin{align}
			& \Delta(\text{VB-UCRL},T) \nonumber \\
			= & v^{*,T}(s_1) - \sum_{t=1}^T r_t(s_t,a_t) \\
			\leq & \sum_{t=1}^T \big( g^{*}  - r_t(s_t,a_t))\big) + Dr_{\max} \nonumber
		\end{align}
		where $g^{*} \defeq \min_k g^{*}_k $, and $g^{*}_k \defeq \max_{\pi,M\in \mathcal{M}_k}g^{*}_k(M)$.
	
		By Lemma \ref{lem:r_t_azuma}, which can be interpreted as removing all the randomness due to the stochasticity of the observed rewards and the executed policy, at the expense of $\tilde{\mathcal{O}}(\sqrt{T})$, with a probability at least $1-\frac{\delta}{6}$, $\Delta(\text{VB-UCRL},T)$ could be rewritten as 
		\begin{align}
			& \Delta(\text{VB-UCRL},T) \leq  \sum_{t=1}^T \big(g^{*} - r_t(s_t,a_t)\big) \nonumber \\
			& \leq \sum_{t=1}^T \bigg(g^{*} - \sum_{a\in A_{s_t}} \pi_{k_t}(s_t,a)r(s_t,a) \bigg) + 2r_{\max} \sqrt{T\ln \big(\frac{4T}{\delta}\big)} \label{eq:bound_ucrlb} \\
			& = \sum_{t=1}^{k_T} \sum_{s\in \mathcal{S}} \nu_k(s) \bigg(g^{*} - \sum_{a\in A_{s}} \pi_{k}(a|s)r(s,a) \bigg)  + 2r_{\max} \sqrt{T\ln \big(\frac{4T}{\delta}\big)} \nonumber\\
			& \overset{(a)}{=} \sum_{t=1}^{k_T} \Delta_k + 2r_{\max} \sqrt{T\ln \big(\frac{4T}{\delta}\big)}\nonumber
		\end{align}
		where the equation $(a)$ comes after $ \Delta_k \defeq \sum_{s\in \mathcal{S}} \nu_k(s) \bigg(g^{*} - \sum_{a\in A_{s}} \pi_{k}(a|s)r(s,a) \bigg)$, and $k_t \defeq \sup\{k \geq 1: t \geq t_k\}$ denotes the integer-valued random variable indexing the current episode at time $t$. By Prof. \ref{prop:bound_k_t}, $k_T \leq SA \log_2 \big( \frac{8T}{SA}\big)$ is bounded for $T \geq SA$.

		Next, we derive the bound for $\Delta_k$ with a high probability. By Lemma \ref{lem:delta_k_decompose_p_r}, if the true MDP falls into the scope of plausible MDPs ($M \in \mathcal{M}_k, \forall k$), $\Delta_k$ could be upper bounded by 
		\begin{equation}
			\Delta_k \leq \Delta_k^{p} + \Delta_k^{r} + \frac{3\epsilon_k}{2} \sum_{s\in\mathcal{S}}\nu_k(s)\label{eq:delta_k_p_r}
		\end{equation}
		where 
		\begin{displaymath}
			\Delta_k^{p} \defeq \alpha \sum\limits_{s \in \mathcal{S}} \nu_{k}(s)\bigg(\sum\limits_{\substack{a \in \mathcal{A}_{s}\\ s^{\prime} \in \mathcal{S}}} \pi_{k}(a | s) p_{k}\big(s^{\prime} | s, a\big) h_{k}\big(s^{\prime}\big)-h_{k}(s)\bigg)
		\end{displaymath}
		and
		\begin{displaymath}
			\Delta_{k}^{r} \defeq \sum\limits_{s \in \mathcal{S}} \sum\limits_{a \in \mathcal{A}_{s}} \nu_{k}(s) \pi_{k}(a | s)\big(r_{k}(s, a)-r(s, a)\big).
		\end{displaymath}
		
		We further decompose $\Delta_k^{p}$ into two parts $\Delta_k^{p} = \Delta_k^{p1} + \Delta_k^{p2}$, where 
		\begin{displaymath}
			\Delta_k^{p1} \defeq \alpha \sum\limits_{s,a,s'} \nu_{k}(s) \pi_{k}(a | s) \bigg(p_{k}\big(s^{\prime} | s, a\big) - p\big(s^{\prime} | s, a\big) \bigg) h_{k}\big(s^{\prime}\big) 
		\end{displaymath}
		and 
		\begin{displaymath}
			\Delta_k^{p2} \defeq \alpha \sum\limits_{s} \nu_{k}(s)\bigg(\sum\limits_{\substack{a,s^{\prime}}} \pi_{k}(a | s) p\big(s^{\prime} | s, a\big) h_{k}\big(s^{\prime}\big)-h_{k}(s)\bigg),
		\end{displaymath}
		and bound them in Lemma \ref{lem:delta_p1_p3}, Lemma \ref{lem:delta_p3}, and Lemma \ref{lem:delta_p2}. Accordingly, with a probability $1-\frac{\delta}{3}$, 
		\begin{align}
			\sum\limits_{k=1}^{k_T} \Delta_k^{p} 
			& \leq  Dr_{\max} \sum\limits_{k=1}^{k_T} \sum\limits_{s,a} \nu_{k}(s,a) (\beta_{p,k}^{sa} + V_p^T ) +  6 D r_{\max} \sqrt{T \ln\big(\frac{6T}{\delta}\big)}  + k_T D r_{\max} \label{eq:bound_p} \\
			& \leq  Dr_{\max} \sum\limits_{k=1}^{k_T} \sum\limits_{s,a} \nu_{k}(s,a) \beta_{p,k}^{sa} +   Dr_{\max}T V_p^T +  6 D r_{\max} \sqrt{T \ln\big(\frac{6T}{\delta}\big)}  + k_T D r_{\max} \nonumber 
		\end{align}
		where $\beta_{p,k}^{sa} \defeq \sum_{s^{\prime}}\beta_{p,k}^{sas^{\prime}}$. Similarly, by Lemma \ref{lem:delta_r}, with probability at least $1-\frac{\delta}{6}$, we have 
		\begin{align}
			\sum\limits_{k=1}^{k_T} \Delta_k^{r}
			\leq  4 r_{\max} \sqrt{T \ln\big(\frac{4T}{\delta}\big)} + 2 \sum\limits_{k=1}^{k_T} \sum\limits_{s,a} \nu_{k}(s,a) \beta_{r,k}^{sa} + 2TV_r^T \label{eq:bound_r} 
		\end{align}

		As proved in Thm \ref{thm:mdp_evi_true_range}, the event that the true MDP falls into the scope of plausible MDPs occurs with a probability at least $1-\frac{\delta}{3}$. Merging \eqref{eq:delta_k_p_r}, \eqref{eq:bound_p}, \eqref{eq:bound_r} into \eqref{eq:bound_ucrlb}, with a probability at least $1-\frac{5\delta}{6}$, for all $T \geq SA$, we have \eqref{eq:bound_ucrlb_comb} as
		\begin{align}
			& \Delta(\text{VB-UCRL},T) \nonumber \\
			& \leq 2r_{\max} \sqrt{T\ln \big(\frac{4T}{\delta}\big)} +  \sum\limits_{k=1}^{k_T} \frac{3\epsilon_k}{2} \sum_{s}\nu_k(s) + Dr_{\max} \sum\limits_{k=1}^{k_T} \sum\limits_{s,a} \nu_{k}(s,a) \beta_{p,k}^{sa} + Dr_{\max}T V_p^T\\
			&\quad + 6 D r_{\max} \sqrt{T \ln\big(\frac{6T}{\delta}\big)} + k_T D r_{\max} \nonumber \\
			&\quad  + 4 r_{\max} \sqrt{T \ln\big(\frac{4T}{\delta}\big)} + 2 \sum\limits_{k=1}^{k_T} \sum\limits_{s,a} \nu_{k}(s,a) \beta_{r,k}^{sa} + 2TV_r^T \label{eq:bound_ucrlb_comb} \\
			& \leq 6r_{\max} \sqrt{T\ln \big(\frac{4T}{\delta}\big)} + 6 D r_{\max} \sqrt{T \ln\big(\frac{6T}{\delta}\big)} + D r_{\max} SA \log_2\big(\frac{T}{SA}\big) + Dr_{\max}T V_p^T  +  2TV_r^T   \nonumber \\
			&\quad +  r_{\max}\sum\limits_{k=1}^{k_T} \frac{3}{2t_k} \sum_{s}\nu_k(s) + 2 \sum\limits_{k=1}^{k_T} \sum\limits_{s,a} \nu_{k}(s,a) \beta_{r,k}^{sa} + Dr_{\max} \sum\limits_{k=1}^{k_T} \sum\limits_{s,a} \nu_{k}(s,a) \beta_{p,k}^{sa} \nonumber 
		\end{align}

		As for the last three terms in \eqref{eq:bound_ucrlb_comb}, we have 
		\begin{itemize}
			\item Since $t_k \geq N_k^{+}(s,a)$ for all $(s,a)$, we have
			\begin{align}
				& r_{\max}\sum\limits_{k=1}^{k_T} \frac{3}{2t_k} \sum_{s}\nu_k(s) 
				= \frac{3r_{\max}}{2} \sum_{s,a} \sum\limits_{k=1}^{k_T} \frac{\nu_k(s,a)}{t_k} \nonumber \\
				\leq & \frac{3r_{\max}}{2} \sum_{s,a} \sum\limits_{k=1}^{k_T} \frac{\nu_k(s,a)}{N_k^{+}(s,a)} \nonumber\\
				\stackrel{(a)}{\leq} & \frac{3r_{\max}}{2} \sum_{s,a} 2 + 2 \ln \big(N_{k_T+1}^{+}(s,a)\big) \\
				\stackrel{(b)}{\leq} & \frac{3r_{\max}}{2} SA \bigg(2 + 2 \ln \big(\frac{\sum_{s,a} N_{k_T+1}^{+}(s,a)}{SA}\big)\bigg)\nonumber\\
				\stackrel{(c)}{\leq} & \frac{3r_{\max}SA}{2}  \bigg(2 + 2 \ln \big(\frac{T}{SA}\big)\bigg) \nonumber\\
				\leq & 3r_{\max}SA \bigg(1 +  \ln T\bigg) \nonumber
			\end{align}
			where the equation $(a)$ comes from Prop. \ref{prop:sum_to_ln}, while the inequality $(b)$ leverages the concavity of a logarithmic function and the Jensen inequality. The equation $(c)$ is due to that $\sum_{s,a} N_{k+1}^{+}(s,a) \leq T$.
			\item Taking account of the definition of $\beta_{r,k}^{sa}$,
			\begin{align}
				& 2 \sum\limits_{k=1}^{k_T} \sum\limits_{s,a} \nu_{k}(s,a) \beta_{r,k}^{sa} \nonumber \\
				= & 4 \sum\limits_{k=1}^{k_T} \sum\limits_{s,a} \bigg[ \nu_{k}(s,a) \sqrt{\frac{\hat{\sigma}_{r,k}^2(s,a)}{N_k^{+}(s,a)}\ln\big(\frac{6SAN_k^{+}(s,a)}{\delta}\big)} + 3r_{\max}\ln\big(\frac{6SAN_k^{+}(s,a)}{\delta}\big)\frac{\nu_{k}(s,a)}{N_k^{+}(s,a)}\bigg] \nonumber  \\
				\stackrel{(a)}{\leq} & 4 r_{\max} \sqrt{\ln\big(\frac{6SAT} {\delta}\big)} \sum\limits_{k=1}^{k_T} \sum\limits_{s,a} \bigg[  \frac{\nu_{k}(s,a)}{\sqrt{N_k^{+}(s,a)}} \bigg] + 12r_{\max} \ln\big(\frac{6SAT} {\delta}\big)  \sum\limits_{k=1}^{k_T} \sum\limits_{s,a} \frac{\nu_{k}(s,a)}{N_k^{+}(s,a)}\nonumber \\
				\stackrel{(b)}{\leq} & 4 r_{\max} \sqrt{\ln\big(\frac{6SAT} {\delta}\big)} \sum\limits_{k=1}^{k_T} \sum\limits_{s,a} \bigg[  \frac{\nu_{k}(s,a)}{\sqrt{N_k^{+}(s,a)}} \bigg] + 12r_{\max} \ln\big(\frac{6SAT} {\delta}\big)  \sum\limits_{k=1}^{k_T} \sum\limits_{s,a} \frac{\nu_{k}(s,a)}{N_k^{+}(s,a)}\nonumber \\
				\stackrel{(c)}{\leq} & 12 r_{\max} \sqrt{SAT \ln\big(\frac{6SAT} {\delta}\big)}  + 24 r_{\max} SA \ln\big(\frac{6SAT} {\delta}\big)  (1 + \ln T)\nonumber 
			\end{align}
			where the equation $(a)$ comes from $\hat{\sigma}_{r,k}^2(s,a) \leq r_{\max}^2$ and $\ln\big(\frac{6SAN_k^{+}(s,a)} {\delta}\big) \leq \ln\big(\frac{6SAT} {\delta}\big)$, the inequality $(b)$ comes from Prop. \ref{prop:sum_to_ln}, and the inequality $(c)$ comes from similar deduction as the previous term. 
			\item Similarly, by the definition of $\beta_{p,k}^{sa}$,
			\begin{align}
				& Dr_{\max} \sum\limits_{k=1}^{k_T} \sum\limits_{s,a} \nu_{k}(s,a) \beta_{p,k}^{sa} \nonumber\\
				\leq & 
				2Dr_{\max} \sqrt{\ln \left(\frac{6 S A T}{\delta}\right)} \sum_{s, a} \sum_{k=1}^{k_{T}} \frac{\nu_{k}(s, a)}{\sqrt{N_{k}^{+}(s, a)}}  \cdot \sum_{s^{\prime} \in S} \sqrt{\widehat{p}_{k}\left(s^{\prime} \mid s, a\right)\left(1-\hat{p}_{k}\left(s^{\prime} \mid s, a\right)\right)} \nonumber\\
				&+6Dr_{\max} S \ln \left(\frac{6 S A T}{\delta}\right) \sum_{s, a} \sum_{k=1}^{k_{T}} \frac{\nu_{k}(s, a)}{N_{t}^{+}(s, a)}\nonumber\\
				\stackrel{(a)}{\leq} & 6Dr_{\max} \sqrt{\ln \left(\frac{6 S A T}{\delta}\right)} \sum_{s, a}  \sqrt{(\Gamma(s,a)-1) N_{k_T+1}(s,a)} \nonumber\\
				&+12 Dr_{\max} S^2 A \ln \left(\frac{6 S A T}{\delta}\right) (1 + \ln T)\\
				\stackrel{(b)}{\leq} & 6Dr_{\max} \sqrt{\ln \left(\frac{6 S A T}{\delta}\right)}  \sqrt{ \sum_{s, a} \Gamma(s,a)  \sum_{s, a} N_{k_T+1}(s,a)} \nonumber\\
				&+12 Dr_{\max} S^2 A \ln \left(\frac{6 S A T}{\delta}\right) (1 + \ln T)\nonumber\\
				\leq & 6Dr_{\max} \sqrt{\ln \left(\frac{6 S A T}{\delta}\right)}  \sqrt{\left( \sum_{s, a} \Gamma(s,a)\right)  T } \nonumber\\
				&+12 Dr_{\max} S^2 A \ln \left(\frac{6 S A T}{\delta}\right) (1 + \ln T)\nonumber
			\end{align}
			where the inequality $(a)$ comes from Prop. \ref{prop:sum_to_ln} while the inequality $(b)$ comes from Cauchy-Schwartz inequality.
 		\end{itemize}
		In summary, \eqref{eq:bound_ucrlb_comb} could be written as \eqref{eq:bound_ucrlb_comb_f}.
		\begin{align}
			& \Delta(\text{VB-UCRL},T) \nonumber \\
			& \leq 6r_{\max} \sqrt{T\ln \big(\frac{4T}{\delta}\big)} + 6 D r_{\max} \sqrt{T \ln\big(\frac{6T}{\delta}\big)} + D r_{\max} SA \log_2\big(\frac{8T}{SA}\big) + Dr_{\max}T V_p^T  +  2TV_r^T   \nonumber \\
			& \quad +  3r_{\max}SA \bigg(1 +  \ln T\bigg) +  12 r_{\max} \sqrt{SAT \ln\big(\frac{6SAT} {\delta}\big)} + 24 r_{\max} SA \ln\big(\frac{6SAT} {\delta}\big)  (1 + \ln T)  \label{eq:bound_ucrlb_comb_f} \\
			&\quad + 6Dr_{\max} \sqrt{\ln \left(\frac{6 S A T}{\delta}\right)}  \sqrt{\left( \sum_{s, a} \Gamma(s,a)\right)  T } +12 Dr_{\max} S^2 A \ln \left(\frac{6 S A T}{\delta}\right) (1 + \ln T) \nonumber
		\end{align}

		By Prop. \ref{prop:bound_ucrlb_comb_s}, \eqref{eq:bound_ucrlb_comb_f} could be further simplified as
		\begin{align}
			& \Delta(\text{VB-UCRL}, T)  \nonumber\\
			\leq & \max(r_{\max},Dr_{\max}) \bigg(43 \sqrt{T\ln\big(\frac{T}{\delta}\big) \sum_{s,a}\Gamma(s,a)}  + 72 S^2A \ln\big(\frac{T}{\delta}\big) \ln(T) \bigg) +  Dr_{\max}T V_p^T  +  2TV_r^T \nonumber
		\end{align}
	\end{proof}
	
	\subsection{Upper Regret Bound of VB-UCRL}

	\begin{theorem}
		After any $T$ steps, the regret of VB-UCRL with restarting in Algorithm \ref{al:vb-ucrl-restart} is bounded by 
		\begin{align}
			& 155 \max(r_{\max},Dr_{\max}) (V_r^T + V_p^T)^{1/3} T^{2/3}   \cdot \sqrt{\ln\big(\frac{2T^3}{\delta}\big) \sum_{s,a}\Gamma(s,a)} \\
			& \ + 144 \max(r_{\max},Dr_{\max})  S^2A \ln\big(\frac{2T^3}{\delta}\big) \ln(2T^3)  \nonumber 
		\end{align} 
		\label{thm:vb-ucrl-restart}
	\end{theorem}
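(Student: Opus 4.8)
The plan is to reduce Theorem~\ref{thm:vb-ucrl-restart} to a phase-by-phase application of Theorem~\ref{thm:regret_bound_vb-ucrl}. Algorithm~\ref{al:vb-ucrl-restart} partitions $[1,T]$ into consecutive phases $i=1,2,\dots,m$, where phase $i$ has length $\theta_i=\lceil i^2/(2V_r^T+V_p^T)^2\rceil$ and is a fresh run of Algorithm~\ref{al:vb-ucrl} with confidence $\delta/(2\tau^2)$, $\tau$ being the phase start time. Since the plausible-MDP set is rebuilt from scratch at each restart, the only drift that can fool the confidence intervals of phase $i$ is the variation \emph{internal} to that phase; writing $V_r^{(i)},V_p^{(i)}$ for the reward/transition variation accumulated inside phase $i$, we have $\sum_i V_r^{(i)}=V_r^T$ and $\sum_i V_p^{(i)}=V_p^T$. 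First I would invoke Theorem~\ref{thm:regret_bound_vb-ucrl} on each phase as a standalone horizon-$\theta_i$ problem (phases with $\theta_i<SA$ contribute only the trivial $r_{\max}\theta_i$ and are lower order), so that the phase-$i$ regret $\Delta^{(i)}$ is bounded by a stationary term of order $\max(r_{\max},Dr_{\max})\big(\sqrt{\theta_i\ln(\cdot)\sum_{s,a}\Gamma(s,a)}+S^2A\ln(\cdot)\ln\theta_i\big)$ plus the phase-local penalty $Dr_{\max}\theta_i V_p^{(i)}+2\theta_i V_r^{(i)}$, and then sum to get $\Delta(\text{VB-UCRL with restart},T)\le\sum_{i=1}^m\Delta^{(i)}$.

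The heart of the argument is controlling the two resulting sums after pinning down $m$ from the budget constraint $\sum_{i=1}^m\theta_i=T$. Using $\theta_i\approx i^2/(2V_r^T+V_p^T)^2$ gives $\sum_{i=1}^m\theta_i\approx m^3/\big(3(2V_r^T+V_p^T)^2\big)$, hence $m\approx\big(3(2V_r^T+V_p^T)^2T\big)^{1/3}$. For the stationary part I would bound $\sum_{i=1}^m\sqrt{\theta_i}\approx\sum_{i=1}^m i/(2V_r^T+V_p^T)\approx m^2/\big(2(2V_r^T+V_p^T)\big)$, which after substituting $m$ collapses to order $T^{2/3}(V_r^T+V_p^T)^{1/3}$, exactly the target rate. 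For the variation part, monotonicity of $\theta_i$ gives $\sum_i\theta_i V^{(i)}\le\theta_m\sum_i V^{(i)}=\theta_m V^T$, and since $\theta_m\approx m^2/(2V_r^T+V_p^T)^2\approx T^{2/3}/(2V_r^T+V_p^T)^{2/3}$ this term is again of order $T^{2/3}(V_r^T+V_p^T)^{1/3}$. The logarithmic and $S^2A$ contributions follow from $\sum_{i=1}^m\ln\theta_i\le m\ln T$, which stays within the advertised $S^2A\ln(2T^3)\ln(2T^3/\delta)$ term. Merging the two order-$T^{2/3}(V_r^T+V_p^T)^{1/3}$ pieces and tracking the constants inherited from the $86$ and $144$ of Theorem~\ref{thm:regret_bound_vb-ucrl} produces the leading factor $155$.

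To make all phase bounds hold simultaneously I would take a union bound exploiting the decreasing confidence schedule: each phase starting at time $\tau$ fails with probability at most $\delta/(2\tau^2)$, and because the start times are distinct positive integers, $\sum_\tau\delta/(2\tau^2)\le\frac{\delta}{2}\cdot\frac{\pi^2}{6}<\delta$, so the global event holds with probability at least $1-\delta$. The same schedule explains the $\ln(2T^3/\delta)$ factor: within a phase the confidence argument of Theorem~\ref{thm:regret_bound_vb-ucrl} carries $\ln(\theta_i\cdot 2\tau^2/\delta)$, and bounding $\tau\le T$ and $\theta_i\le T$ gives $\ln(2T^3/\delta)$; likewise $\ln\theta_i\le\ln(2T^3)$.

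The step I expect to be the main obstacle is the variation accounting in the per-phase use of Theorem~\ref{thm:regret_bound_vb-ucrl}. That theorem charges a penalty proportional to the \emph{total} budget fed into the confidence radii, so to avoid a linear-in-$T$ term one must argue that, after a restart, only the phase-local $V_r^{(i)},V_p^{(i)}$ drive the phase regret while the plausible set still contains every true MDP $M_t$ visited inside the phase (which holds whenever the effective $\hat V$ dominates the phase-local drift). Getting this bookkeeping right, namely guaranteeing containment throughout the phase yet charging only $\theta_i V^{(i)}$ rather than $\theta_i V^T$, is precisely what turns the otherwise linear penalty into the sublinear $T^{2/3}(V_r^T+V_p^T)^{1/3}$ rate, and it is also where the harmless factor-of-two gap between the $(2V_r^T+V_p^T)$ in $\theta_i$ and the $(V_r^T+V_p^T)$ in the statement is absorbed into the constants. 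The remaining balancing of the $\sqrt{\theta_i}$ restart cost against the variation penalty is the standard trade-off that the schedule $\theta_i=\lceil i^2/(2V_r^T+V_p^T)^2\rceil$ is engineered to optimize.
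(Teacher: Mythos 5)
Your proposal is correct and takes essentially the same route as the paper's own proof: apply Theorem \ref{thm:regret_bound_vb-ucrl} phase-by-phase with only the phase-local variations $V_r^{(i)},V_p^{(i)}$ entering the penalty, bound the number of phases $N=O\big((V^2T)^{1/3}\big)$ from the schedule, control the restart cost $\sum_i\sqrt{\theta_i}$ and the drift cost $\sum_i\theta_iV^{(i)}\le\theta_N\sum_iV^{(i)}$, and union-bound over the $\delta/(2\tau^2)$ confidence schedule to get the $\ln(2T^3/\delta)$ factors. The only cosmetic difference is that you estimate $\sum_i\sqrt{\theta_i}$ by direct summation of $i/V$ whereas the paper uses Cauchy--Schwarz ($\sum_i\sqrt{\theta_i}\le\sqrt{NT}$); both yield the same $V^{1/3}T^{2/3}$ rate.
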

	\begin{proof}
		Inspired by the proof of Theorem 2 of \cite{gajane_variational_2019}, we write $V_r^{(i)}$ and $V_p^{(i)}$ for the variation of rewards and transition probabilities in Phase $i$ and abbreviate $V^{(i)} \defeq V_r^{(i)} + 2V_p^{(i)}$, $V \defeq 2V_r + V_p$ and $\theta_i \defeq \lceil \frac{i^2}{V^2}\rceil$.

		If the number of phases up to $T$ is $N$. We have
		\begin{equation}
			\sum_{i=1}^{N-1} \lceil \frac{i^2}{V^2}\rceil < T \leq \sum_{i=1}^{N} \lceil \frac{i^2}{V^2}\rceil \label{eq:phase_decom}
		\end{equation}

		Recalling that $\sum_{i=1}^N i^2 = \frac{1}{6} N(N+1)(2N+1) > \frac{1}{3}N^3$, we have 
		\begin{equation}
			T > \sum_{i=1}^{N-1} \lceil \frac{i^2}{V^2}\rceil > \sum_{i=1}^{N-1} \frac{i^2}{V^2} > \frac{(N-1)^3}{3V^2}
		\end{equation}
		In other words, $N < 1 + \sqrt[3]{3V^2T}$.

		Denoting $\tau_i$ as the initial step of phase $i$ and $s_{\tau_i}$ as the state visited by the optimal T-step policy at step $\tau_i$, we can decompose the regret as 
		\begin{align}
			\Delta(\text{VB-UCRL}, T)  & = v_{T}^{*}\left(s_{1}\right)-\sum_{t=1}^{T} r_{t} \nonumber \\
			& =\sum_{i=1}^{N}\left(\mathbb{E}\left[v_{\theta_{i}}^{*}\left(s_{\tau_{i}} \right)\right]-\sum_{t=\tau_{i}}^{\tau_{i}-1} r_{t}\right) 
		\end{align}

		By Theorem \ref{thm:regret_bound_vb-ucrl} and a union bound over all possible values for state $s_{\tau_i}$, the $i$-th summand ($i = 1, \cdots, N$) in \eqref{eq:regret_bound_vb-ucrl} with probability $1 - \frac{\delta}{2(\tau^{i})^2}$ is bounded by
		\begin{align}
			& \max(r_{\max},Dr_{\max}) \bigg(86 \sqrt{\ln\big(\frac{2T^3}{\delta}\big) \sum_{s,a}\Gamma(s,a)} \cdot \sqrt{\theta_i} \nonumber \\
			& \quad + 144 S^2A \ln\big(\frac{2T^3}{\delta}\big) \ln(2T^3) \bigg) +  Dr_{\max} V^{(i)} \theta_i \nonumber 
		\end{align}
		
		If $\sqrt[3]{3V^2T}<1$, we have $3V^2T<1$ and hence $3V^2T^2 < T$ and $VT < \sqrt{3}VT < \sqrt{T}$. Furthermore, in this case $N=1$ with $\theta_1 = T$ and $V^{(1)} = V$, so the regret bound is 
		\begin{align}
			& \max(r_{\max},Dr_{\max}) \bigg(86 \sqrt{\ln\big(\frac{2T^3}{\delta}\big) \sum_{s,a}\Gamma(s,a)} \cdot \sqrt{T}  \nonumber \\
			& \quad + 144 S^2A \ln\big(\frac{2T^3}{\delta}\big) \ln(2T^3) \bigg)  +  Dr_{\max} V T  \nonumber \\
			= & 86 \max(r_{\max},Dr_{\max})  \sqrt{\ln\big(\frac{2T^3}{\delta}\big) \sum_{s,a}\Gamma(s,a)} \cdot \sqrt{T}  \nonumber \\
			& \ +  Dr_{\max} V T  + 144 \max(r_{\max},Dr_{\max})   S^2A \ln\big(\frac{2T^3}{\delta}\big) \ln(2T^3)   \nonumber  
		\end{align}
		which is upper bounded by the claimed regret bound.

		On the other hand, if $\sqrt[3]{3V^2T} \geq 1$, then $N < 2\sqrt[3]{3V^2T}$ and summing over all $N$ phases yields from \eqref{eq:phase_decom} that with a probability $\sum_i \frac{\delta}{2(\tau^{(i)})^2} < \sum_t \frac{\delta}{2t^2} < \delta$, the regret is bounded by 
		\begin{align}
			& \max(r_{\max},Dr_{\max}) \bigg(86 \sqrt{\ln\big(\frac{2T^3}{\delta}\big) \sum_{s,a}\Gamma(s,a)} \cdot \sum_{i=1}^{N}\sqrt{\theta_i} \nonumber \\
			& + 144 S^2A \ln\big(\frac{2T^3}{\delta}\big) \ln(2T^3) \bigg) +  Dr_{\max} \sum_{i=1}^N V^{(i)} (\frac{i^2}{V^2} + 1) \nonumber 
		\end{align}
		Noting that using Jensen's inequality 
		$\sum_{i=1}^{N} \sqrt{\theta_{i}} \leq \sqrt{N T} \leq 1.7 \cdot V^{1 / 3} T^{2 / 3}$ and $\sum_{i=1}^N V^{(i)} (\frac{i^2}{V^2} + 1) \leq \sum_{i=1}^N V^{(i)} (\frac{N^2}{V^2} + 1) \leq \frac{N^2}{V} + V < 8.33 V^{1/3}T^{2/3}+V$, we have the bound.
	\end{proof}
	
	\section{Discussions}
	\label{sec:discussion}

	To our best knowledge, \cite{gajane_variational_2019} and \cite{cheung_reinforcement_2020} give the closest regret bound of RL in MDP with both endogeneous and exogeneous uncertainty. In particular, \cite{gajane_variational_2019} shows that if we ignore logarithmic terms (i.e.,regarding the logarithmic terms as a constant), up to a multiplicative numerical constant, the regret bound of variation-aware UCRL in \cite{gajane_variational_2019} is bounded by 
	\begin{displaymath}
		\tilde{O} \left( D r_{\max} (V_r^T + V_p^T)^{1/3} T^{2/3} S\sqrt{A} \right).
	\end{displaymath}
	Meanwhile, \cite{cheung_reinforcement_2020} shows that a regret bound of
	\begin{displaymath}
		\tilde{O} \left(D r_{\max} (V_r^T + V_p^T)^{1/4} S^{2/3} A^{1/2} T^{3/4}\right).
	\end{displaymath}
	
	Similarly, taking account of $\sum_{s,a}\Gamma(s,a) \leq \Gamma S A$, the regret bound of VB-UCRL could be 
	\begin{displaymath}
		\tilde{O}\left(Dr_{\max}(V_r^T + V_p^T)^{1/3} T^{2/3} \sqrt{\Gamma S A} \right)	
	\end{displaymath}
	as in Theorem \ref{thm:regret_bound_vb-ucrl}. Since by definition $\Gamma \leq S$, the regret bound of VB-UCRL is no greater than that in \cite{gajane_variational_2019}. However, as $\Gamma$ is usually equal to $\mathcal{O}(1)$ and significantly smaller than $S$, our bound is superior than \cite{gajane_variational_2019} and \cite{cheung_reinforcement_2020}. In particular, it can save at most $\sqrt{S}$ than \cite{gajane_variational_2019} and $S^{\frac{1}{6}}T^{\frac{1}{12}}$ than \cite{cheung_reinforcement_2020}, respectively.

	\section{Conclusion}
	\label{sec:conclusion}
	In this paper, we studied the problem of online RL for MDP with both endogeneous and exogeneous uncertainty, where the unknown reward and state transition distributions vary within some variation budgets. We first proposed a variation-aware Bernstein-based upper confidence reinforcement learning algorithm. In particular, we allowed UCRL to restart according to a schedule based on the variations, and replaced the commonly used Hoeffding inequality by Bernstein inequality. Our approach achieved tighter regret bounds than those in the literature. Given the wide application of RL, our approach could contribute to the understanding of RL-based optimization performance.
	\section*{Appendix}
	\subsection{Relationship between the True MDP and the Extended MDP}
	We use the Theorem 3.1 of \cite{fruit_exploration-exploitation_2019} to establish the relationship between the true MDP and the extended MDP.

	\begin{theorem}[Theorem 3.1 of \cite{fruit_exploration-exploitation_2019}]
		\label{thm:mdp_evi_true_range}
		The probability that the true MDP $M$ does not belong to the set of plausible MDPs $\mathcal{M}_k$ defined by \eqref{eq:extend_p} and \eqref{eq:extend_r} for any $k \geq 1$, is at most $\frac{\delta}{3}$, that is,
		\begin{equation}
			\mathbb{P}(\exists k \geq 1, s.t. M \notin \mathcal{M}_k) \leq \frac{\delta}{3}
		\end{equation}
	\end{theorem}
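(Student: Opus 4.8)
The plan is to bound separately the failure of the reward confidence interval $\mathcal{B}_r^k$ and of the transition confidence interval $\mathcal{B}_p^k$, and then to union bound the two events down to $\delta/3$. In each case the workhorse is the empirical Bernstein inequality of Prop.~\ref{prop:bernstein}; the only genuinely new feature relative to the stationary analysis is that the additive variation terms $\hat{V}_r$ and $\hat{V}_p$ appearing in \eqref{eq:extend_r} and \eqref{eq:extend_p} are precisely what is needed to absorb the drift of the time-varying means $r_t(s,a)$ and $p_t(\cdot|s,a)$ away from their empirical estimates $\hat{r}_k$ and $\hat{p}_k$.

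First I would fix a state-action pair $(s,a)$ and work with the sequence of time-steps $\tau < t_k$ at which $(s,a)$ was visited. Conditioned on this sequence, the observed rewards $r_\tau$ are independent variables in $[0,r_{\max}]$ with means $r_\tau(s,a)$, and the next-state indicators $\mathbbm{1}\{s_{\tau+1}=s'\}$ are independent Bernoulli with means $p_\tau(s'|s,a)$; the policy's adaptivity affects only which times enter the sequence, not the conditional law of the observations. I would then split the estimation error into a statistical part and a drift part,
\begin{align}
\big| \hat{r}_k(s,a) - r_t(s,a) \big| \leq \Big| \hat{r}_k(s,a) - \tfrac{1}{N_k(s,a)}\textstyle\sum_{\tau} r_\tau(s,a) \Big| + \Big| \tfrac{1}{N_k(s,a)}\textstyle\sum_{\tau} r_\tau(s,a) - r_t(s,a) \Big|,
\end{align}
and analogously in the $\ell_1$ norm for the transitions. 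By the triangle inequality and the definitions of $V_r^T$ and $V_p^T$, the second (drift) term is at most $V_r^T$ for rewards and at most $V_p^T$ for transitions, uniformly over $t \leq T$; hence it is covered exactly by the additive $\hat{V}_r$ (resp.\ $\hat{V}_p$) whenever these are set to the true budgets, which is the hypothesis under which Theorem~\ref{thm:regret_bound_vb-ucrl} invokes this result.

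The first (statistical) term I would control with Prop.~\ref{prop:bernstein}, applied to the centered observations with empirical variance $\hat{\sigma}_{r,k}^2(s,a)$ for rewards and $\hat{\sigma}_{p,k}^2(s'|s,a)=\hat{p}_k(s'|s,a)(1-\hat{p}_k(s'|s,a))$ for each transition component. Matching the confidence level $\ln(3/\delta')$ of the inequality to $\ln\!\big(\tfrac{6SAN_k^+(s,a)}{\delta}\big)$ reproduces the two-term structure of $\beta_{r,k}^{sa}$ and $\beta_{p,k}^{sas'}$, the leading constants $2$ and $6$ providing a conservative (and hence still valid) bound. Combining the statistical and drift bounds then yields $r_t(s,a)\in\mathcal{B}_r^k(s,a)$ and $p_t(\cdot|s,a)\in\mathcal{B}_p^k(s,a)$, i.e.\ $M\in\mathcal{M}_k$, on the corresponding good event.

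Finally I would discharge the union bound over all episodes $k\geq 1$, all $(s,a)$, and, for transitions, all $s'$. The factor $SA$ inside the logarithm accounts for the union over state-action pairs, and the factor $6$ splits the budget between the reward and transition events while absorbing numerical constants. The main obstacle is making the bound \emph{anytime}: it must hold uniformly over the unbounded range of possible visit counts $N_k^+(s,a)$, yet carry only $\ln N_k^+(s,a)$ — not $\ln\big((N_k^+)^2\big)$ — inside the logarithm, so that a naive union bound over each integer count (which diverges like $\sum_n 1/n$) is ruled out. I would resolve this with a peeling argument over dyadic blocks $[2^j,2^{j+1})$ of the count, applying a maximal inequality within each block with failure mass proportional to $2^{-j}$; since $\sum_j 2^{-j}$ converges and the block index contributes a $j\ln 2 \approx \ln N_k^+$ term, this reproduces exactly the $\ln\!\big(\tfrac{6SAN_k^+(s,a)}{\delta}\big)$ dependence while keeping the total failure probability below $\delta/3$. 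This is precisely the concentration argument of Theorem~3.1 of \cite{fruit_exploration-exploitation_2019}; the sole addition here is the drift term handled through the variation budget in the previous paragraph.
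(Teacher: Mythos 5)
The paper itself does not prove this statement: it imports it wholesale as Theorem 3.1 of \cite{fruit_exploration-exploitation_2019}, a result established for \emph{stationary} MDPs and for confidence sets without the $\hat{V}_r,\hat{V}_p$ widening, and leaves the adaptation to the time-varying setting implicit. Your reconstruction --- split each estimation error into a statistical term plus a drift term, absorb the drift into the variation budget, control the statistical term by empirical Bernstein, then union bound over $(s,a)$, $s'$, counts, and episodes --- is the right architecture for filling that hole, and your drift bound (any two means $r_\tau(s,a)$, $r_t(s,a)$ with $\tau,t\le T$ differ by at most $V_r^T$, and similarly in $\ell_1$ for transitions) is correct.

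The genuine gap is your independence claim. You assert that, conditioned on the sequence of visit times to $(s,a)$, the observed rewards are independent with means $r_\tau(s,a)$ and the next-state indicators are independent Bernoullis with means $p_\tau(s'|s,a)$. Conditioning on the visit times is not innocuous: they are trajectory-dependent stopping times, and in the non-stationary setting the mean of the $i$-th sample is $r_{\tau_i}(s,a)$, a \emph{random} quantity determined by when the visit occurs. So the summands in your statistical term are independent neither in the required sense nor identically distributed --- at best they form a martingale difference sequence --- and Prop.~\ref{prop:bernstein} (stated for i.i.d.\ variables) does not apply as invoked. In the stationary case this is repaired by the standard pre-generated ``stack of samples'' argument (the $i$-th sample drawn at $(s,a)$ has a fixed law regardless of when it is consumed), which is exactly what makes the cited Theorem 3.1 go through; that argument breaks under exogenous non-stationarity precisely because the law of the $i$-th sample depends on $\tau_i$. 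Closing the gap requires a Freedman-type (martingale) Bernstein inequality with empirical variance, not the i.i.d.\ version. A second, more minor, slip: the per-count union bound you dismiss as divergent is in fact the standard route --- allocate failure probability proportional to $\delta/(SAn^2)$ to each count $n$, which sums to $O(\delta)$ and reproduces the $\ln\bigl(\frac{6SAN_k^{+}(s,a)}{\delta}\bigr)$ radii, since $\ln(n^2)=2\ln n$ is absorbed by the constants $2$ and $6$ relative to Audibert's $\sqrt{2}$ and $3$; your dyadic peeling also works but is not needed for the stated form.
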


	\subsection{Stochastic Shortest Path of the Extended MDP}
	As shown in Prop. 8.5.8 of \cite{puterman_markov_1994}, the aperiodic transformation does not affect the gain of any stationary policy. In other words, for any $\pi \in \Pi^{\text{SR}}$, $g^{\alpha,\pi} = g^{\pi}$. We next explain the impact of aperiodic transformation on stochastic shortest path.
	\begin{proposition}[Theorem 2.1 of \cite{fruit_exploration-exploitation_2019}]
		\label{prop:stoc_short_path_aper_trans}
		Let time-homogeneous MDP $M$ satisfy the assumptions of Prop. \ref{prop:bellman_stoc_shortest}. Let $\alpha \in (0,1]$ and $M_{\alpha}$ be the MDP obtained after applying the aperiodic transformation of parameter $\alpha$ to $M$, which satisfies the assumptions of Prop. \ref{prop:bellman_stoc_shortest} as well. So, $h_{\mapsto s}^{\alpha *}$ is well-defined for all $a \in \mathcal{S}$. Moreover, $\alpha \cdot h_{\mapsto s}^{\alpha *} = h_{\mapsto s}^{*}$.
	\end{proposition}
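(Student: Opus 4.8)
The plan is to exhibit a bijection between the non-positive fixed points of the two shortest-path operators that simply rescales values by $\alpha$, and then to pin down the maximal element on each side using the domination result of Prop.~\ref{prop:stoc_shortest_dominance}. Writing $r' \in [-r_{\max},0]$ for the (non-positive) shortest-path rewards and fixing the target state $s$, the aperiodic transformation of parameter $\alpha$ used here (the variant of the footnote, which leaves rewards untouched) replaces the kernel $p(\cdot|x,a)$ by $\alpha\, p(\cdot|x,a) + (1-\alpha)\mathbbm{1}\{\cdot = x\}$, so the transformed Bellman shortest-path operator reads, for $x \neq s$,
\[
 L_{\mapsto s}^{\alpha} v(x) = \max_{a\in\mathcal{A}_x}\Big\{ r'(x,a) + \alpha\sum_{y} p(y|x,a)\, v(y)\Big\} + (1-\alpha)\, v(x),
\]
and $L_{\mapsto s}^{\alpha} v(s) = v(s)$. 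First I would verify that $M_\alpha$ still satisfies the hypotheses of Prop.~\ref{prop:bellman_stoc_shortest}: the rewards are unchanged and hence remain in $[-r_{\max},0]$, and since $\alpha>0$ every original transition keeps positive probability $\alpha\, p(y|x,a)$, so reachability, and therefore the communicating property, is preserved. Consequently the maximal non-positive fixed point $h_{\mapsto s}^{\alpha *}$ of $L_{\mapsto s}^{\alpha}$ is well-defined.

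The heart of the argument is a pair of one-line scaling identities. For the first, set $w \defeq \alpha\, h_{\mapsto s}^{\alpha *}$; subtracting $(1-\alpha)h_{\mapsto s}^{\alpha *}(x)$ from both sides of the transformed fixed-point equation and dividing by the surviving factor of $\alpha$ shows that $w(x) = \max_{a}\{r'(x,a) + \sum_y p(y|x,a)\, w(y)\}$ for $x\neq s$ and $w(s)=0$, i.e.\ $L_{\mapsto s} w = w$. For the second, set $u \defeq h_{\mapsto s}^{*}/\alpha$ and evaluate $L_{\mapsto s}^{\alpha} u$; using the original optimality equation $L_{\mapsto s} h_{\mapsto s}^{*} = h_{\mapsto s}^{*}$ the max-term collapses to $h_{\mapsto s}^{*}(x)$, and the algebra $h_{\mapsto s}^{*}(x) + (1-\alpha)h_{\mapsto s}^{*}(x)/\alpha = h_{\mapsto s}^{*}(x)/\alpha$ gives $L_{\mapsto s}^{\alpha} u = u$. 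Because $\alpha>0$ preserves the sign of each component, both $w$ and $u$ are non-positive.

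To extract the claimed equality I would then invoke Prop.~\ref{prop:stoc_shortest_dominance} twice. Since $w\le 0$ and $L_{\mapsto s} w = w \ge w$, the domination statement for $M$ yields $h_{\mapsto s}^{*}\ge w = \alpha\, h_{\mapsto s}^{\alpha *}$; since $u\le 0$ and $L_{\mapsto s}^{\alpha} u = u \ge u$, the same statement applied to $M_\alpha$ (whose hypotheses were checked above) yields $h_{\mapsto s}^{\alpha *}\ge u = h_{\mapsto s}^{*}/\alpha$, i.e.\ $\alpha\, h_{\mapsto s}^{\alpha *}\ge h_{\mapsto s}^{*}$. Combining the two inequalities gives $\alpha\, h_{\mapsto s}^{\alpha *} = h_{\mapsto s}^{*}$, as desired. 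I expect the only genuinely delicate points to be bookkeeping: confirming that $M_\alpha$ inherits the communicating and reward-range hypotheses so that the domination lemma is legitimately available on the transformed side, and handling the target state $s$ — where both operators act as the identity, leaving that coordinate free — so that the rescaling stays consistent with the normalization $h_{\mapsto s}^{*}(s)=h_{\mapsto s}^{\alpha *}(s)=0$.
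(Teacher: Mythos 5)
Your proof is correct: the two scaling identities showing that $w \defeq \alpha\, h_{\mapsto s}^{\alpha *}$ and $u \defeq h_{\mapsto s}^{*}/\alpha$ are non-positive fixed points of $L_{\mapsto s}$ and $L_{\mapsto s}^{\alpha}$ respectively, followed by two applications of Prop.~\ref{prop:stoc_shortest_dominance} (once for $M$, once for $M_{\alpha}$ after checking it inherits the hypotheses of Prop.~\ref{prop:bellman_stoc_shortest}), give exactly $\alpha\, h_{\mapsto s}^{\alpha *} = h_{\mapsto s}^{*}$, and you rightly use the paper's reward-preserving variant of the aperiodic transformation consistent with \eqref{eq:extended_opt_bellman_operator_definition}. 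The paper itself gives no proof, importing the statement as Theorem 2.1 of \cite{fruit_exploration-exploitation_2019}, and your argument is essentially the same fixed-point-scaling-plus-maximality route used in that source, so nothing further is needed.
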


	Under the event that the true MDP $M$ fall into the scope of plausible MDPs $\mathcal{M}_k$, $L^k_{\mapsto s} h_{\mapsto s}^{*} \geq L_{\mapsto s} h_{\mapsto s}^{*} = h_{\mapsto s}^{*}$, with the last equality comes from the definition in Prop. \ref{prop:bellman_stoc_shortest}. Hence, by Prop. \ref{prop:stoc_shortest_dominance}, $h_{\mapsto s}^{k*} \geq h_{\mapsto s}^{*}$. Together with the implications of Prop. \ref{prop:stoc_short_path_aper_trans}, $D_k^alpha = \frac{D}{\alpha} \leq D$. Since $g^{\alpha*}_k = g^{*} \leq r_{\max}$, together with Prop. \ref{prop:span_h_k}, $sp(h_k^{\alpha}) \leq \frac{D r_{\max}}{\alpha}$.

	\subsection{Relation between Optimal T-step Reward $v^{*,T}(s_1)$ and Average Reward $g^{*}$}

	The following lemma manifests that the $T$-step reward in the changing MDP settings could be bounded by the optimistic average reward $g^{*}$ as follows.

	\begin{lemma}[Lemma 10 of \cite{gajane_variational_2019}]
		\label{lem:hete_tstep_average}
		Under the event where the true MDP falls into the scope of plausible MDPs, for all $k$ and all $s$,
		\begin{align}
			v^{*,T}(s) \leq Tg^{*}_k + D
		\end{align}
		where $g^{*}_k \defeq \max_{\pi,M\in \mathcal{M}_k}g^{*}_k(M)$. 
	\end{lemma}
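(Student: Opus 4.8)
The plan is to bound the optimal $T$-step reward of the true (changing) MDP by comparing it, step by step, against the optimistic MDP $M_k$ whose optimal gain is $g^{*}_k$, and then to absorb the transient discrepancy into the span of the optimistic bias. I would let $\pi^{*}$ denote the (non-stationary) policy that attains $v^{*,T}(s)$ on the true sequence $\{M_t\}_{t=1}^{T}$, so that $v^{*,T}(s) = \mathbb{E}^{\pi^{*}}[\sum_{t=1}^{T} r_t(s_t,a_t) \mid s_1 = s]$. I then work with the \emph{exact} optimizer of $\max_{\pi, M' \in \mathcal{M}_k} g^{\pi}_{M'}$ (to which the statement's definition of $g^{*}_k$ refers) and its exact bias $h_k$, so that no EVI approximation error $\epsilon_k$ enters this lemma.

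First I would record the exact Bellman optimality equation of the optimistic MDP, $h_k + g^{*}_k e = L_k h_k$, where $L_k v(s) = \max_{a}\{\max_{r \in B_r^{k}(s,a)} r + \max_{p \in B_p^{k}(s,a)} p^{T} v\}$ is the optimal Bellman operator whose inner maximizations range over the confidence sets. Under the good event $M \in \mathcal{M}_k$ assumed in the statement, every true per-step reward $r_t(s,a)$ lies in $B_r^{k}(s,a)$ and every true transition $p_t(\cdot\mid s,a)$ lies in $B_p^{k}(s,a)$. Consequently, for the action $a_t = \pi^{*}(s_t)$ actually taken, $r_t(s_t,a_t) + p_t(\cdot\mid s_t,a_t)^{T} h_k \le \max_{r \in B_r^{k}} r + \max_{p \in B_p^{k}} p^{T} h_k \le (L_k h_k)(s_t) = h_k(s_t) + g^{*}_k$, which rearranges to the key pointwise bound $r_t(s_t,a_t) \le g^{*}_k + h_k(s_t) - p_t(\cdot\mid s_t,a_t)^{T} h_k$.

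Next I would sum this inequality over $t = 1,\dots,T$ and take the expectation under $\pi^{*}$. The transition term telescopes in expectation, $\mathbb{E}^{\pi^{*}}[\sum_{t=1}^{T} (h_k(s_t) - p_t^{T} h_k)] = \mathbb{E}^{\pi^{*}}[\sum_{t=1}^{T} (h_k(s_t) - h_k(s_{t+1}))] = \mathbb{E}^{\pi^{*}}[h_k(s_1) - h_k(s_{T+1})] \le sp(h_k)$, leaving $v^{*,T}(s) \le T g^{*}_k + sp(h_k)$. It then remains to bound $sp(h_k)$ by the diameter term, which I would take from the stochastic--shortest--path machinery: combining Prop.~\ref{prop:span_h_k} with the aperiodic-transformation identity $\alpha\, h_{\mapsto s}^{\alpha *} = h_{\mapsto s}^{*}$ of Prop.~\ref{prop:stoc_short_path_aper_trans} and the diameter bound $D_k^{\alpha} \le D$ established in the appendix gives $sp(h_k) = \alpha \cdot sp(h_k^{\alpha}) \le D r_{\max}$, yielding the claimed $v^{*,T}(s) \le T g^{*}_k + D$.

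The main obstacle I anticipate is the time-heterogeneity: the chain of per-step dominations must hold simultaneously for all $t$ in the horizon using a \emph{single} optimistic pair $(g^{*}_k, h_k)$. This is exactly what the conditioning event $M \in \mathcal{M}_k$ buys us --- it forces $r_t$ and $p_t$ into the fixed confidence sets for every $t$, so the same Bellman optimality equation dominates each term regardless of how the true MDP drifts within $\mathcal{M}_k$. A secondary care point is reconciling the span bound, which is naturally proven for the aperiodically transformed bias $h_k^{\alpha}$, with the untransformed $h_k$ used in the telescoping step; the identity $\alpha\, h^{\alpha *} = h^{*}$ resolves this cleanly so that the $\alpha$ factors cancel and the bound $D r_{\max}$ emerges.
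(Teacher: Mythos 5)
The paper does not actually prove this lemma itself --- it imports it verbatim as Lemma 10 of \cite{gajane_variational_2019} --- so there is no internal proof to compare against; your argument is a correct, self-contained reconstruction of the standard proof behind that cited result. All the key steps hold: reading the good event as ``$r_t(s,a)\in \mathcal{B}_r^{k}(s,a)$ and $p_t(\cdot|s,a)\in \mathcal{B}_p^{k}(s,a)$ for every $t$'' (which is precisely why the confidence sets are inflated by $\hat{V}_r,\hat{V}_p$) lets a \emph{single} exact optimality pair $(g_k^{*},h_k)$ of the extended MDP dominate every time step despite the drift; the transition terms telescope in expectation by the tower property even for a non-stationary comparator policy; and the span of the exact optimistic bias is controlled via Prop.~\ref{prop:span_h_k} and the fact that the extended MDP, containing the true transitions, has diameter at most $D$. (For the \emph{exact} bias you use here, the aperiodic-transformation detour via Prop.~\ref{prop:stoc_short_path_aper_trans} is not even needed --- Prop.~\ref{prop:span_h_k} applies directly; the transformation only matters for the EVI output $h_k$ in the main regret proof.) One detail to flag: your chain gives $v^{*,T}(s)\le Tg_k^{*}+sp(h_k)\le Tg_k^{*}+Dr_{\max}$, not the ``$+D$'' in the statement; the two coincide only under the normalization $r_{\max}\le 1$ used in \cite{gajane_variational_2019}. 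This is a discrepancy in the paper's statement rather than in your proof: when the lemma is invoked at the start of the proof of Theorem~\ref{thm:regret_bound_vb-ucrl}, the paper in fact uses the bound with $Dr_{\max}$, exactly as you derived.
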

	
	\subsection{Lemmas for Section \ref{sec:result}}
	\begin{lemma}
		\label{lem:r_t_azuma}
		With a probability at least $1 - \frac{\delta}{6}$, $\forall T \geq 1$,
		\begin{align}
			-\sum_{t=1}^{T} r_t \leq & -\sum_{t=1}^{T}\sum_{a\in A_{s_t}} \pi_{k_t}(s_t,a)r(s_t,a) + 2r_{\max} \sqrt{T\ln \big(\frac{4T}{\delta}\big)}
		\end{align}
	\end{lemma}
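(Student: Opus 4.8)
The plan is to interpret the gap between the realized reward and its policy-expected value as a martingale and to control it with Azuma's inequality (Prop.~\ref{prop:azuma}), followed by a union bound over the horizon $T$. First I would fix the filtration $(\mathcal{F}_t)_{t\geq 1}$ in which $\mathcal{F}_t$ records the whole history of states, actions and rewards up to and including $s_t$ but \emph{before} the action $a_t$ is drawn, and set
\[
X_t \defeq \sum_{a\in\mathcal{A}_{s_t}} \pi_{k_t}(s_t,a)\,r(s_t,a) - r_t .
\]
Since the episode policy $\pi_{k_t}$ is frozen at its starting time $t_{k_t}\leq t$, both $\pi_{k_t}$ and $s_t$ are $\mathcal{F}_t$-measurable, so the first summand is $\mathcal{F}_t$-measurable. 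Conditioned on $\mathcal{F}_t$ the action obeys $a_t\sim\pi_{k_t}(\cdot|s_t)$ and the realized reward $r_t$ has conditional mean $r(s_t,a_t)$, hence $\mathbb{E}[r_t\mid\mathcal{F}_t]=\sum_{a}\pi_{k_t}(s_t,a)r(s_t,a)$ and $\mathbb{E}[X_t\mid\mathcal{F}_t]=0$. Thus $(X_t)$ is a martingale difference sequence, and because $r(s_t,a)$ and $r_t$ both lie in $[0,r_{\max}]$ we have $|X_t|\leq r_{\max}$ almost surely, so Prop.~\ref{prop:azuma} applies with $a=r_{\max}$.

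Next I would invoke Azuma at each fixed horizon $T$ with the confidence level $\delta_T \defeq \delta^2/(16T^2)$, chosen precisely so that $r_{\max}\sqrt{2T\ln(1/\delta_T)}=r_{\max}\sqrt{2T\cdot 2\ln(4T/\delta)}=2r_{\max}\sqrt{T\ln(4T/\delta)}$. This yields
\[
\mathbb{P}\!\left(\sum_{t=1}^T X_t \geq 2r_{\max}\sqrt{T\ln\!\big(\tfrac{4T}{\delta}\big)}\right)\leq \delta_T ,
\]
and rearranging $\sum_t X_t \leq 2r_{\max}\sqrt{T\ln(4T/\delta)}$ into $-\sum_t r_t \leq -\sum_t\sum_a \pi_{k_t}(s_t,a)r(s_t,a)+2r_{\max}\sqrt{T\ln(4T/\delta)}$ reproduces exactly the claimed inequality at that $T$. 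A union bound over all $T\geq1$ then makes the event hold simultaneously for every horizon, with total failure probability $\sum_{T\geq1}\delta_T=\tfrac{\delta^2}{16}\cdot\tfrac{\pi^2}{6}\leq \tfrac{\delta}{6}$ for $\delta\in(0,1)$, which is the stated $1-\delta/6$ guarantee.

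The only genuinely delicate point is the adaptedness bookkeeping: I must argue carefully that $\pi_{k_t}$ is determined by information in $\mathcal{F}_t$ (it is, since the optimistic policy is computed at the episode start $t_{k_t}\leq t$), as otherwise $X_t$ would fail to be a martingale difference and Azuma would not apply. The remaining work—verifying the per-$T$ constant tightens to exactly $2$ and that the tail series $\sum_T 1/T^2=\pi^2/6$ keeps the aggregate failure budget below $\delta/6$—is routine arithmetic rather than a conceptual obstacle.
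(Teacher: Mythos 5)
Your proposal is correct and follows essentially the same route as the paper's own proof: both define the martingale difference sequence $X_t$ from the gap between the realized reward and its policy-expected value (the paper uses the opposite sign convention and indexes the filtration as $\mathcal{F}_{t-1}$ rather than $\mathcal{F}_t$, but these are cosmetic), apply Azuma's inequality at each fixed $T$ with per-horizon confidence $(\delta/4T)^2$, and conclude with a union bound using $\sum_{T\geq 1} T^{-2} = \pi^2/6$. Your measurability argument for $\pi_{k_t}$ and the constant-matching calculation are exactly the paper's, so there is nothing to add.
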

	\begin{proof}

		For any $t \geq 1$, the $\sigma$-algebra induced by the past history of state-action pairs and rewards up to time $t$ is denoted as $\mathcal{F} = \sigma(s_1,a_1,r_1,\cdots,s_t,a_t,r_t,s_{t+1})$ where by convention $\mathcal{F}_0 = \sigma(\emptyset)$ and $\mathcal{F}_{\infty} \defeq \cup_{t \geq 0} \mathcal{F}_t$. Trivially, for all $t \geq 0$, $\mathcal{F}_t \geq \mathcal{F}_{t+1}$ and the filtration $(F_t)_{t\geq0}$ is denoted by $\mathbb{F}$. Since $k_t$ is the integer-valued random variable indexing the current episode at time $t$, $k_t$ is $\mathcal{F}_{t-1}$-measurable i.e., the past sequence $(s_1,a_1,r_1,\cdots,s_t,a_t,r_t,s_{t+1})$ fully determines the ongoing episode at time $t$. Consequently, the stationary (randomize) policy executed at time $t$ is also $\mathcal{F}_{t-1}$-measurable.

		Let us consider a stochastic process $X_t \defeq r_t(s_t, a_t) - \sum_{a\in A_{s_t}} \pi_{k_t}(s_t,a)r(s_t,a) $. The term $\sum_{a\in A_{s_t}} \pi_{k_t}(s_t,a)r(s_t,a) $ is $\mathcal{F}_{t-1}$-measurable and $\mathbb{E}[r_t(s_t, a_t) | \mathcal{F}_{t-1}] = \sum_{a\in A_{s_t}} \pi_{k_t}(s_t,a)r(s_t,a)$. Since $|X_t| \leq r_{\max}$, $(X_t,\mathcal{F}_t)_{t\geq1}$ is an MDS and we can apply Azuma's inequality in Prop. \ref{prop:azuma}, namely
		\begin{align}
			& \mathbb{P} \bigg( \sum_{t=1}^{T} \big(r_t  - \sum_{a\in A_{s_t}} \pi_{k_t}(s_t,a)r(s_t,a) \big) \leq  - 2r_{\max} \sqrt{T\ln \big(\frac{4T}{\delta}\big)} \bigg) \leq (\frac{\delta}{4T})^2 \leq \frac{\delta}{16T^2}
		\end{align}
		
		Recalling that $\sum_{n=1}^{\infty} \frac{1}{n^2} = \frac{\pi^2}{6}$ and taking a union bound for all $T \geq 1$, we have the probability at least $1 - \sum_{T=1}^{\infty} \frac{\delta}{16T^2} = 1 - \frac{\pi^2 \delta}{96} \geq 1 - \frac{\pi}{6}$ and conclude the proof. Notably, the MDS-based proof is different from the proof in \cite{jaksch_near-optimal_2010}, where the authors claim (without proof) that given state-action counts $N(s,a)$ after $T$ steps, the $r_t$ are independent random variables and apply the Hoeffding inequality in Prof. \ref{prop:hoeffding}.
	\end{proof}

	\begin{lemma}
		\label{lem:delta_k_decompose_p_r}
		Under the event that the true MDP falls into the scope of plausible MDPs ($M \in \mathcal{M}_k, \forall k$), $\Delta_k$ could be upper bounded by 
		\begin{equation}
			\Delta_k \leq \Delta_k^{p} + \Delta_k^{r} + \frac{3\epsilon_k}{2} \sum_{s\mathcal{S}}\nu_k(s)
		\end{equation}
		where 
		\begin{displaymath}
			\Delta_k^{p} \defeq \alpha \sum\limits_{s \in \mathcal{S}} \nu_{k}(s)\bigg(\sum\limits_{\substack{a \in \mathcal{A}_{s}\\ s^{\prime} \in \mathcal{S}}} \pi_{k}(a | s) p_{k}\big(s^{\prime} | s, a\big) h_{k}\big(s^{\prime}\big)-h_{k}(s)\bigg)
		\end{displaymath}
		and 
		\begin{displaymath}
			\Delta_{k}^{r} \defeq \sum\limits_{s \in \mathcal{S}} \sum\limits_{a \in \mathcal{A}_{s}} \nu_{k}(s) \pi_{k}(a | s)\big(r_{k}(s, a)-r(s, a)\big).
		\end{displaymath}
	\end{lemma}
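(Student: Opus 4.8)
The plan is to bound $\Delta_k$ by replacing the optimistic average reward $g^*$ with a per-state quantity coming from the extended value iteration, and then to expand the extended Bellman operator so as to read off the reward and transition contributions separately. First I would recall that $g^* \defeq \min_k g^*_k \leq g^*_k$ holds by definition, and that Prop. \ref{prop:evi_gap}, applied to the EVI output on $M_k^+$, gives $g^*_k \leq g_k + \epsilon_k/2$; chaining these yields $g^* \leq g_k + \epsilon_k/2$ for every episode $k$. Substituting this into $\Delta_k = \sum_s \nu_k(s)\big(g^* - \sum_a \pi_k(a|s) r(s,a)\big)$ replaces $g^*$ by $g_k$ at the cost of an additive $\tfrac{\epsilon_k}{2}\sum_s \nu_k(s)$.

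Next I would invoke the approximate Bellman equation satisfied by the EVI output, $\Vert L^\alpha_k h_k - h_k - g_k e \Vert_\infty \leq \epsilon_k$, whose lower inequality gives the pointwise bound $g_k \leq L^\alpha_k h_k(s) - h_k(s) + \epsilon_k$ for all $s$. Inserting this adds a further $\epsilon_k \sum_s \nu_k(s)$, so together with the previous step the slack accumulates to exactly $\tfrac{3\epsilon_k}{2}\sum_s \nu_k(s)$, matching the claimed remainder term. What then remains is to identify $\sum_s \nu_k(s)\big(L^\alpha_k h_k(s) - h_k(s) - \sum_a \pi_k(a|s) r(s,a)\big)$ with $\Delta_k^p + \Delta_k^r$.

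For this, the key observation is that $\pi_k$ is greedy for the optimistic pair $(r_k, p_k)$, hence it attains the outer and inner maxima in the definition \eqref{eq:extended_opt_bellman_operator_definition} of $L^\alpha_k$. Writing the operator out at the greedy action gives $L^\alpha_k h_k(s) = \sum_a \pi_k(a|s)\big[r_k(s,a) + \alpha\, p_k(\cdot|s,a)^T h_k\big] + (1-\alpha) h_k(s)$. Subtracting $h_k(s)$ makes the $(1-\alpha)h_k(s)$ term combine with $-h_k(s)$ into $-\alpha h_k(s)$, so that $L^\alpha_k h_k(s) - h_k(s)$ splits cleanly into the reward piece $\sum_a \pi_k(a|s) r_k(s,a)$ and the transition piece $\alpha\big(\sum_{a,s'}\pi_k(a|s) p_k(s'|s,a) h_k(s') - h_k(s)\big)$. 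Pairing the reward piece with the already-present $-\sum_a \pi_k(a|s) r(s,a)$ produces $\Delta_k^r$, and the transition piece, once weighted by $\nu_k(s)$ and summed over $s$, is precisely $\Delta_k^p$.

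The main obstacle I anticipate is bookkeeping the aperiodic transformation: one must track how the coefficient $\alpha$ enters so that the $(1-\alpha)h_k(s)$ contribution of the operator merges with the subtracted bias to leave the $\alpha$-scaled transition term of $\Delta_k^p$ rather than an unweighted one. A secondary point is to confirm that the two error contributions (the $\epsilon_k/2$ from Prop. \ref{prop:evi_gap} and the $\epsilon_k$ from the Bellman residual) sum to $3\epsilon_k/2$, and that the event $M \in \mathcal{M}_k$ is what legitimizes treating $g^*$ as a valid upper bound through the optimistic construction of $\mathcal{M}_k$.
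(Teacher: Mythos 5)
Your proposal is correct. Note that the paper actually states Lemma~\ref{lem:delta_k_decompose_p_r} without any proof at all (it moves directly on to defining $\Delta_k^{p1}$ and $\Delta_k^{p3}$), so your argument fills a genuine gap rather than mirroring a written one; it is the standard UCRL-style decomposition and every step checks out. Specifically, the chain $g^* = \min_k g_k^* \le g_k^* \le g_k + \epsilon_k/2$ (from Prop.~\ref{prop:evi_gap}), the pointwise bound $g_k \le L_k^{\alpha}h_k(s) - h_k(s) + \epsilon_k$ from the EVI residual, and the greedy identity $L_k^{\alpha}h_k(s) = \sum_a \pi_k(a|s)\bigl[r_k(s,a) + \alpha\, p_k(\cdot|s,a)^T h_k\bigr] + (1-\alpha)h_k(s)$ together give exactly $\Delta_k \le \Delta_k^r + \Delta_k^p + \tfrac{3\epsilon_k}{2}\sum_s \nu_k(s)$, with the $(1-\alpha)h_k(s)$ term merging with $-h_k(s)$ to produce the $\alpha$-scaled transition term as you describe. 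One small remark: the hypothesis $M \in \mathcal{M}_k$ is never actually invoked in your derivation (nor does it need to be) --- the decomposition is purely algebraic in the EVI outputs; that event only matters elsewhere in the regret analysis (Lemma~\ref{lem:hete_tstep_average}, and the bounds on $\Delta_k^{p3}$ and $\Delta_k^r$), so your closing speculation about its role in this lemma can simply be dropped.
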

	
	Recalling that $\Delta_k^{p1} \defeq \alpha \sum_{s,a,s'} \nu_{k}(s) \pi_{k}(a | s) \big(p_{k}(s^{\prime} | s, a) - p(s^{\prime} | s, a) \big) h_{k}(s^{\prime})$. If we define $\Delta_k^{p3} \defeq \alpha \sum\limits_{s,a,s'} \nu_{k}(s,a) \bigg(p_{k}\big(s^{\prime} | s, a\big) - p\big(s^{\prime} | s, a\big) \bigg) h_{k}\big(s^{\prime}\big)$, $p_k(s^{\prime}| s) \defeq \sum_a \pi_k (a| s) p_k(s'|s,a)$, $\bar{p}_k(s^{\prime}| s) \defeq \sum_a \pi_k (a| s) p(s'|s,a)$, and $\bar{p}_k(s^{\prime}| s) \defeq \sum_a \pi_k (a| s) p(s'|s,a)$, we can have the following lemma.
	\begin{lemma}
		\label{lem:delta_p1_p3}
		Under the case where the true MDP falls into the scope of plausible MDPs ($M \in \mathcal{M}_k, \forall k$), with probability at least $1-\frac{\delta}{6}$, we have $\sum\limits_{k=1}^{k_T} \Delta_k^{p1} \leq \sum\limits_{k=1}^{k_T} \Delta_k^{p3} +  4 D r_{\max} \sqrt{T \ln\big(\frac{6T}{\delta}\big)}$.
	\end{lemma}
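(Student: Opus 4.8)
The plan is to recognise $\sum_{k=1}^{k_T}\big(\Delta_k^{p1}-\Delta_k^{p3}\big)$ as a telescoping martingale that measures the gap between the policy-expected action counts $\nu_k(s)\pi_k(a|s)$ and the realised counts $\nu_k(s,a)$, and then to apply Azuma's inequality. First I would rewrite both episodic sums as sums over individual time steps. Since $\nu_k(s,a)=\sum_{t_k\le t<t_{k+1}}\mathbbm{1}\{(s_t,a_t)=(s,a)\}$ and $\nu_k(s)=\sum_{t_k\le t<t_{k+1}}\mathbbm{1}\{s_t=s\}$, writing $Z_t(a)\defeq\alpha\sum_{s'}\big(p_{k_t}(s'|s_t,a)-p(s'|s_t,a)\big)h_{k_t}(s')$ gives
\begin{align}
\sum_{k=1}^{k_T}\Delta_k^{p3} &= \sum_{t=1}^{T} Z_t(a_t),\nonumber\\
\sum_{k=1}^{k_T}\Delta_k^{p1} &= \sum_{t=1}^{T}\sum_{a}\pi_{k_t}(a|s_t)\,Z_t(a),\nonumber
\end{align}
so that $\sum_{k=1}^{k_T}\big(\Delta_k^{p1}-\Delta_k^{p3}\big)=\sum_{t=1}^{T}Y_t$ with $Y_t\defeq\sum_{a}\pi_{k_t}(a|s_t)Z_t(a)-Z_t(a_t)$.

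Next I would establish the martingale structure exactly as in Lemma \ref{lem:r_t_azuma}. The episode index $k_t$ is $\mathcal{F}_{t-1}$-measurable, hence the optimistic transitions $p_{k_t}$, the bias $h_{k_t}$, and the randomised policy $\pi_{k_t}$ are all $\mathcal{F}_{t-1}$-measurable; since $a_t\sim\pi_{k_t}(\cdot|s_t)$ we obtain $\mathbb{E}[Z_t(a_t)\mid\mathcal{F}_{t-1}]=\sum_{a}\pi_{k_t}(a|s_t)Z_t(a)$, so $\mathbb{E}[Y_t\mid\mathcal{F}_{t-1}]=0$ and $(Y_t,\mathcal{F}_t)_{t\ge1}$ is an MDS.

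Then I would bound the increments. Because $p_{k_t}(\cdot|s_t,a)$ and $p(\cdot|s_t,a)$ are both probability vectors, $\sum_{s'}\big(p_{k_t}-p\big)=0$, so $h_{k_t}$ may be recentred around $\tfrac12(\max_s h_{k_t}+\min_s h_{k_t})$, whence
\begin{align}
|Z_t(a)| &\le \alpha\,\|p_{k_t}(\cdot|s_t,a)-p(\cdot|s_t,a)\|_1\cdot\tfrac12\,sp(h_{k_t})\nonumber\\
&\le \alpha\cdot sp(h_{k_t}^{\alpha})\le Dr_{\max},\nonumber
\end{align}
using $\|p_{k_t}-p\|_1\le2$ together with the span bound $sp(h_k^{\alpha})\le Dr_{\max}/\alpha$ derived in the stochastic-shortest-path analysis. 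Consequently $|Y_t|\le 2Dr_{\max}$.

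Finally, I would invoke Azuma (Prop. \ref{prop:azuma}) with $a=2Dr_{\max}$ at each fixed horizon $T$, choosing the confidence level $\delta'=(\delta/(6T))^2$ so that $2Dr_{\max}\sqrt{2T\ln(1/\delta')}=4Dr_{\max}\sqrt{T\ln(6T/\delta)}$, and then union-bound over all $T\ge1$ via $\sum_{T\ge1}T^{-2}=\pi^2/6$; the total failure probability is at most $\tfrac{\pi^2}{216}\delta^2\le\tfrac{\delta}{6}$ for $\delta\le1$, yielding the claimed inequality. The main subtlety is precisely the increment bound: one must exploit that $\sum_{s'}(p_{k_t}-p)=0$ annihilates the constant shift so that only the span of $h_{k_t}^{\alpha}$ enters, replacing the naive factor $Dr_{\max}/\alpha$ by the clean $Dr_{\max}$; everything else is a routine martingale argument paralleling Lemma \ref{lem:r_t_azuma}.
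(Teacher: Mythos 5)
Your proof is correct and takes essentially the same approach as the paper's: both recognize $\sum_{k}(\Delta_k^{p1}-\Delta_k^{p3})$ as a sum of martingale differences adapted to $\mathcal{F}_t$ (using that $k_t$, $\pi_{k_t}$, $p_{k_t}$, $h_{k_t}$ are $\mathcal{F}_{t-1}$-measurable), recenter $h_{k_t}$ by its span so the increments are bounded by a multiple of $Dr_{\max}$, apply Azuma's inequality at each fixed $T$, and union-bound over all $T$. The only cosmetic difference is bookkeeping: the paper splits the quantity into two martingales (one with $p_k$, one with $p$, each with increments at most $Dr_{\max}$) and applies Azuma twice, whereas you keep the single difference martingale with increments at most $2Dr_{\max}$ and apply Azuma once with a squared confidence parameter --- both yield the same $4Dr_{\max}\sqrt{T\ln(6T/\delta)}$ bound.
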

	\begin{proof}
		The proof is similar to that of Lemma \ref{lem:r_t_azuma}.
		
		Let us consider a stochastic process $X_t$ defined as
		\begin{align}
			X_t &\defeq \alpha \sum_{a,s^{\prime}} \pi_{k_t}(a|s_t) p_{k_t} (s^{\prime}|s_t,a) h_{k_t}(s^{\prime})  -\alpha \sum_{s^{\prime}} p_{k_t} (s^{\prime}|s_t,a_t) h_{k_t}(s^{\prime})\\
			& \stackrel{(a)}{=} \alpha \sum_{a,s^{\prime}} \pi_{k_t}(a|s_t) p_{k_t} (s^{\prime}|s_t,a) w_t(s^{\prime})  -\alpha \sum_{s^{\prime}} p_{k_t} (s^{\prime}|s_t,a_t) w_t (s^{\prime})\nonumber
		\end{align}
		where $w_t \defeq h_{k_t} + \lambda_t e$ with $\lambda_t$ being any constant. The equation $(a)$ comes from the fact that for a given $s_t$, $\sum_{a,s^{\prime}} \pi_{k_t}(a|s_t) p_{k_t} (s^{\prime}|s_t,a) =1$ and $\sum_{s^{\prime}} p_{k_t} (s^{\prime}|s_t,a_t) =1$. If we take $\lambda_t \defeq -\frac{1}{2} (\min h_{k_t} + \max h_{k_t})$, by Prop. \ref{prop:span_h_k} and Prop. \ref{prop:stoc_short_path_aper_trans}, $\Vert w_t (s^{\prime}) \Vert_{\infty} \leq \frac{sp(h_{k_t}^{\alpha})}{2} = \frac{D r_{\max}}{2\alpha} $, we have $\vert X_t \vert \leq 2\alpha \Vert w_t (s^{\prime}) \Vert_{\infty} \leq D r_{\max}$ almost sure for all $t$. Notably, 
		\begin{align}
			\sum_{t=1}^{T} X_t 
			= \alpha \sum\limits_{k=1}^{k_T}\sum\limits_{s,a,s'} \left( \nu_{k}(s) \pi_{k}(a | s) - \nu_{k}(s,a) \right) p_{k}(s^{\prime} | s, a) h_{k}(s^{\prime})  
		\end{align}
		
		Similar to Lemma \ref{lem:r_t_azuma}, by Azuma's inequality, 
		\begin{align}
			\mathbb{P} \left( \sum_{t=1}^T X_t \geq 2 D r_{\max} \sqrt{T \ln\big(\frac{6T}{\delta}\big)} \right) \leq \frac{\delta}{36T^2}
		\end{align}

		Similarly, replacing $p_k$ in $X_t$ by $p$, we have a similar bound. 

		Therefore, for a fixed $T$, with a probability at least $\frac{\delta}{36T^2}$, \begin{align}
			\sum\limits_{k=1}^{k_T} \Delta_k^{p1} \leq \sum\limits_{k=1}^{k_T} \Delta_k^{p3} +  4 D r_{\max} \sqrt{T \ln\big(\frac{6T}{\delta}\big)}
		\end{align}
		
		Taking a union bound on $T$, we have with a probability at least $1 - \sum_{T=1}^{\infty}\frac{\delta}{36T^2} \geq 1 - \frac{\delta}{6}$, we have \begin{align}
			\sum\limits_{k=1}^{k_T} \Delta_k^{p1} \leq \sum\limits_{k=1}^{k_T} \Delta_k^{p3} +  4 D r_{\max} \sqrt{T \ln\big(\frac{6T}{\delta}\big)}
		\end{align}
	\end{proof}
	\begin{lemma}
		\label{lem:delta_p3}
		Under the case where the true MDP falls into the scope of plausible MDPs ($M \in \mathcal{M}_k, \forall k$), if $\hat{V}_p$ is set as the true value $V_p^T$, $\Delta_k^{p3} \leq Dr_{\max} \sum\limits_{s,a} \nu_{k}(s,a) \beta_{p,k}^{sa}$, where $\beta_{p,k}^{sa} \defeq \sum_{s^{\prime}}\beta_{p,k}^{sas^{\prime}}$.
	\end{lemma}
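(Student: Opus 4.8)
The plan is to bound $\Delta_k^{p3}$ separately on each state--action pair, factoring the inner sum $\sum_{s'}\big(p_k(s'|s,a)-p(s'|s,a)\big)h_k(s')$ into a transition-deviation (``width'') factor and a bias (``height'') factor. First I would exploit the \emph{centering} freedom of the expression: since $p_k(\cdot|s,a)$ and $p(\cdot|s,a)$ are both probability vectors, their difference integrates to zero, i.e.\ $\sum_{s'}\big(p_k(s'|s,a)-p(s'|s,a)\big)=0$, so replacing $h_k(s')$ by $h_k(s')-c_{s,a}$ for any scalar $c_{s,a}$ leaves each inner sum invariant. Choosing $c_{s,a}=\tfrac{1}{2}\big(\max_{s'}h_k(s')+\min_{s'}h_k(s')\big)$ produces a recentered vector $w_k$ with $\Vert w_k\Vert_\infty\le \tfrac{1}{2}\,sp(h_k^{\alpha})$, the tightest uniform bound obtainable from the span alone.

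Next I would bound each recentered summand by H\"older's inequality in the form $\big|\sum_{s'}(p_k(s'|s,a)-p(s'|s,a))\,w_k(s')\big|\le \Vert p_k(\cdot|s,a)-p(\cdot|s,a)\Vert_1\cdot\Vert w_k\Vert_\infty$. For the height factor I would invoke the stochastic--shortest--path bound established in the appendix (via Prop.~\ref{prop:span_h_k} and Prop.~\ref{prop:stoc_short_path_aper_trans}), which gives $sp(h_k^{\alpha})\le Dr_{\max}/\alpha$ under the event $M\in\mathcal M_k$; hence $\Vert w_k\Vert_\infty\le Dr_{\max}/(2\alpha)$, and the prefactor $\alpha$ in front of $\Delta_k^{p3}$ is exactly cancelled. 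For the width factor I would use that $M\in\mathcal M_k$ (Theorem~\ref{thm:mdp_evi_true_range}) places both the optimistic coordinate $p_k(s'|s,a)$ and the true coordinate $p(s'|s,a)$ inside the interval $\mathcal B_p^{k}(s,a,s')$ about $\hat p_k(s'|s,a)$, so that $|p_k(s'|s,a)-p(s'|s,a)|\le 2\beta_{p,k}^{sas'}$ and therefore $\Vert p_k(\cdot|s,a)-p(\cdot|s,a)\Vert_1\le 2\sum_{s'}\beta_{p,k}^{sas'}=2\beta_{p,k}^{sa}$. Substituting both factors and summing over $(s,a)$ with multiplicities $\nu_k(s,a)$ yields $\Delta_k^{p3}\le \alpha\cdot\tfrac{Dr_{\max}}{2\alpha}\cdot 2\sum_{s,a}\nu_k(s,a)\beta_{p,k}^{sa}=Dr_{\max}\sum_{s,a}\nu_k(s,a)\beta_{p,k}^{sa}$, as claimed.

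The delicate point, and the step I expect to require the most care, is the bookkeeping of the variation budget in the width factor. The confidence interval $\mathcal B_p^{k}(s,a,s')$ in fact has half-width $\beta_{p,k}^{sas'}+\hat V_p$, so a naive containment argument would only give $|p_k(s'|s,a)-p(s'|s,a)|\le 2(\beta_{p,k}^{sas'}+\hat V_p)$ and would leave a spurious $\hat V_p$ contribution. The statement of the lemma (with $\beta_{p,k}^{sa}$ alone) is consistent precisely because the variation mass $\hat V_p=V_p^T$ is not charged here but is accounted for separately in the companion bound on $\Delta_k^{p2}$, where it ultimately produces the $Dr_{\max}T V_p^T$ term of \eqref{eq:bound_p}. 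I would therefore make explicit that in the decomposition $\Delta_k^{p}=\Delta_k^{p1}+\Delta_k^{p2}$ the statistical half-width $\beta_{p,k}^{sas'}$ is the part attributed to $\Delta_k^{p3}$, and verify that this split is exhaustive so that no variation mass is either double-counted or dropped.
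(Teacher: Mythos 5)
Your first two paragraphs reproduce the paper's argument essentially verbatim (centering $h_k$ by its midpoint, H\"older's inequality, the span bound $sp(h_k^{\alpha}) \le D r_{\max}/\alpha$ cancelling the prefactor $\alpha$, and containment of both $p_k$ and $p$ in the confidence set). The gap is in your third paragraph, exactly at the point you flag as delicate. Your resolution --- that the variation mass $\hat V_p = V_p^T$ ``is not charged here but is accounted for separately in the companion bound on $\Delta_k^{p2}$'' --- is false. $\Delta_k^{p2}$ involves only the \emph{true} kernel $p$ and the bias $h_k$; no confidence set enters it at all, and accordingly Lemma \ref{lem:delta_p2} bounds $\sum_k \Delta_k^{p2}$ by $2Dr_{\max}\sqrt{T\ln(6T/\delta)} + k_T D r_{\max}$ with no variation term whatsoever. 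Likewise, $\Delta_k^{p3}$ is not a ``part'' of the split $\Delta_k^{p} = \Delta_k^{p1} + \Delta_k^{p2}$: that split separates occupancy from drift, not statistical width from variation width; $\Delta_k^{p3}$ is the counts-weighted surrogate of $\Delta_k^{p1}$ obtained by replacing $\nu_k(s)\pi_k(a|s)$ with $\nu_k(s,a)$ via the martingale argument of Lemma \ref{lem:delta_p1_p3}. So there is nowhere else for the variation mass to go: it must be charged inside the bound on $\Delta_k^{p3}$ itself.

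That is in fact what the paper does. Since each of $p_k(\cdot|s,a)$ and $p(\cdot|s,a)$ lies in an interval of half-width $\beta_{p,k}^{sas'} + \hat V_p$ around $\hat p_k(\cdot|s,a)$, the triangle inequality gives only $\Vert p_k(\cdot|s,a) - p(\cdot|s,a)\Vert_1 \le 2\beta_{p,k}^{sa} + 2V_p^T$ (with $\hat V_p = V_p^T$), hence $\Delta_k^{p3} \le Dr_{\max}\sum_{s,a}\nu_k(s,a)\big(\beta_{p,k}^{sa} + V_p^T\big)$; nothing prevents the optimistic $p_k$ (chosen at the boundary of the confidence set by EVI) and the true $p$ from sitting at opposite ends of the interval, so the $V_p^T$ term cannot be dropped. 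This is precisely the form consumed by the first line of \eqref{eq:bound_p}, whose $Dr_{\max} T V_p^T$ term arises from summing $\nu_k(s,a) V_p^T$ over $k$, $s$, $a$ --- that is, from this lemma, not from $\Delta_k^{p2}$. (The lemma statement as printed, which omits $+V_p^T$, is a typo in the paper: the paper's own proof derives the bound with the variation term, and Theorem \ref{thm:regret_bound_vb-ucrl} uses it in that corrected form.) As written, your argument establishes the stated inequality only through the unjustified width bound $|p_k(s'|s,a) - p(s'|s,a)| \le 2\beta_{p,k}^{sas'}$, so it does not go through.
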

	\begin{proof}
		We bound $\Delta_k^{p3}$ as
		\begin{align}
			\Delta_k^{p3} & = \alpha \sum\limits_{s,a,s'} \nu_{k}(s,a) \bigg(p_{k}\big(s^{\prime} | s, a\big) - p\big(s^{\prime} | s, a\big) \bigg) h_{k}\big(s^{\prime}\big)\nonumber\\
			& \stackrel{(a)}{=} \alpha \sum\limits_{s,a,s'} \nu_{k}(s,a) \big(p_{k}(s^{\prime} | s, a) - p(s^{\prime} | s, a) \big) w_{k}^{s}\big(s^{\prime}\big)
			\nonumber\\
			& \stackrel{(b)}{\leq} \alpha \sum\limits_{s,a} \nu_{k}(s,a) \vert p_{k}(\cdot | s, a) - p(\cdot | s, a) \vert_1 \Vert w_{k}^{s}(\cdot)\Vert_{\infty} 
			\nonumber\\
			& \stackrel{(c)}{\leq} \frac{Dr_{\max}}{2} \sum\limits_{s,a} \nu_{k}(s,a) \vert p_{k}(\cdot | s, a) - p(\cdot | s, a) \vert_1 
			\nonumber
			\\
			& \stackrel{(d)}{\leq} Dr_{\max} \sum\limits_{s,a} \nu_{k}(s,a) \beta_{p,k}^{sa}
			\nonumber
		\end{align} 
		where the equation $(a)$ comes by applying a constant shift same as in Proof of Lemma \ref{lem:delta_p1_p3} with 
		$\lambda_k \defeq -\frac{1}{2} (\min h_{k_t} + \max h_{k_t})$ and $w_k \defeq h_{k} + \lambda_k e$. The inequality $(b)$ comes from the H\"older inequality and the inequality $(c)$ is due to $\Vert w_k \Vert_{\infty} \leq \frac{Dr_{\max}}{2}$. Based on the triangle inequality, we have the inequality $(d)$ as
		\begin{align}
			& \Vert p_{k}(\cdot | s, a) - p(\cdot | s, a) \Vert_1 \nonumber \\
			\stackrel{(e)}{\leq} &  \Vert p_{k}(\cdot | s, a) - \hat{p}_k(\cdot | s, a) \Vert_1 + \Vert p(\cdot | s, a) - \hat{p}_k(\cdot | s, a) \vert_1\nonumber \\
			\stackrel{(f)}{\leq} & 2\beta_{p,k}^{sa} + \hat{V}_p + V_p^T
			= 2\beta_{p,k}^{sa} + 2V_p^T
		\end{align}
		where the inequality $(e)$ comes from the triangle inequality. Meanwhile, by construction $p_{k}(\cdot | s, a) \in \mathcal{B}_p^{k}$, for any $s^{\prime} \in \mathcal{S}$, $\vert p_{k}(s^{\prime} | s, a) - \hat{p}_k(s^{\prime} | s, a) \vert < \beta_{p,k}^{sas^{\prime}}$. Hence, $\Vert p_{k}(\cdot | s, a) - \hat{p}_k(\cdot | s, a) \Vert_1 < \beta_{p,k}^{sa} + \hat{V}_p$. On the other hand, since the true MDP falls into the scope of plausible MDPs ($M \in \mathcal{M}_k, \forall k$), $p(\cdot | s, a) \in \mathcal{B}_p^{k}$, we have $\Vert p(\cdot | s, a) - \hat{p}_k(\cdot | s, a) \Vert_1 < \beta_{p,k}^{sa} + V_p$. Therefore, if $\hat{V}_p$ is set as the true value $V_p^T$, we obtain the inequality $(f)$.

		The conclusion comes.
	\end{proof}
	\begin{lemma}
		\label{lem:delta_p2}
		Under the case where the true MDP falls into the scope of plausible MDPs ($M \in \mathcal{M}_k, \forall k$), with probability at least $1-\frac{\delta}{6}$,  we have 
		\begin{align}
			\sum_{k=1}^{k_{T}} \Delta_{k}^{p2} & \leq 2 D r_{\max} \sqrt{T \ln\big(\frac{6T}{\delta}\big)} + k_T D r_{\max} \nonumber
		\end{align}
	\end{lemma}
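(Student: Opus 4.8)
The plan is to reproduce the martingale argument already used in Lemma \ref{lem:r_t_azuma} and Lemma \ref{lem:delta_p1_p3}, now applied to the bias function $h_k$ evaluated along the observed trajectory. First I would convert the state-sum in $\Delta_k^{p2}$ into a time-sum over the steps of episode $k$: since $\nu_k(s)$ counts the visits to $s$ during episode $k$ and, writing $\bar p_k(s'|s)\defeq\sum_a \pi_k(a|s)\,p(s'|s,a)$, we have
\[
\Delta_k^{p2}=\alpha\sum_{t=t_k}^{t_{k+1}-1}\Big(\sum_{s'}\bar p_k(s'|s_t)\,h_k(s')-h_k(s_t)\Big).
\]
The key observation is that, conditioned on $\mathcal{F}_{t-1}$ (which makes $s_t$, $k_t$, $\pi_{k_t}$ and $h_{k_t}$ measurable, exactly as argued in Lemma \ref{lem:r_t_azuma}), the expected next-state bias equals $\mathbb{E}[h_{k_t}(s_{t+1})\,|\,\mathcal{F}_{t-1}]=\sum_{s'}\bar p_{k_t}(s'|s_t)\,h_{k_t}(s')$.

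Next I would split each summand, over the whole horizon $t=1,\dots,T$, into a martingale part and a telescoping part:
\[
\sum_{s'}\bar p_{k_t}(s'|s_t)h_{k_t}(s')-h_{k_t}(s_t)=\underbrace{\big(\mathbb{E}[h_{k_t}(s_{t+1})\,|\,\mathcal{F}_{t-1}]-h_{k_t}(s_{t+1})\big)}_{-Y_t}+\big(h_{k_t}(s_{t+1})-h_{k_t}(s_t)\big).
\]
Here $Y_t\defeq h_{k_t}(s_{t+1})-\mathbb{E}[h_{k_t}(s_{t+1})\,|\,\mathcal{F}_{t-1}]$ is $\mathcal{F}_t$-measurable with $\mathbb{E}[Y_t\,|\,\mathcal{F}_{t-1}]=0$, so $(Y_t,\mathcal{F}_t)$ is an MDS. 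Because $\mathbb{E}[h_{k_t}(s_{t+1})\,|\,\mathcal{F}_{t-1}]$ is a convex combination of values of $h_{k_t}$, both terms of $Y_t$ lie inside the span of $h_{k_t}$, whence $|\alpha Y_t|\leq \alpha\, sp(h_{k_t}^{\alpha})\leq D r_{\max}$ almost surely, using the span bound $sp(h_k^{\alpha})\leq Dr_{\max}/\alpha$ established in the stochastic-shortest-path subsection.

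Finally I would bound the two parts separately. For the martingale sum $\alpha\sum_{t=1}^{T}(-Y_t)$, Azuma's inequality (Prop. \ref{prop:azuma}) with increment bound $D r_{\max}$ gives, for a fixed $T$, the bound $2Dr_{\max}\sqrt{T\ln(6T/\delta)}$ with failure probability $(\delta/(6T))^2\leq \delta/(36T^2)$, precisely as in Lemma \ref{lem:delta_p1_p3}. For the telescoping sum, $k_t=k$ is constant within episode $k$, so $\alpha\sum_{t=t_k}^{t_{k+1}-1}(h_k(s_{t+1})-h_k(s_t))=\alpha(h_k(s_{t_{k+1}})-h_k(s_{t_k}))\leq \alpha\, sp(h_k^{\alpha})\leq D r_{\max}$; summing over the $k_T$ episodes produces the $k_T D r_{\max}$ term. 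A union bound over all $T\geq1$ (via $\sum_{T}1/T^2=\pi^2/6$) then upgrades the fixed-$T$ estimate to a uniform one with total failure probability at most $\tfrac{\pi^2\delta}{216}\leq\tfrac{\delta}{6}$, yielding the claim. The hard part will be getting the filtration and measurability right so that $(Y_t)$ is a genuine MDS with span-bounded increments; once that is in place, the per-episode telescoping collapse and the union-bound bookkeeping are routine.
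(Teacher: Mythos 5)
Your proposal is correct and follows essentially the same route as the paper's proof: the same split of $\Delta_k^{p2}$ into a martingale-difference part (bounded via Azuma's inequality with increment bound $Dr_{\max}$ and a union bound over $T$) plus a per-episode telescoping part bounded by the span $sp(h_k^{\alpha}) \leq Dr_{\max}/\alpha$, giving the $k_T D r_{\max}$ term. The only cosmetic difference is that you bound the increments directly by the span of $h_{k_t}$, whereas the paper first recenters via $w_k = h_k + \lambda_k e$ and bounds through $\Vert w_k\Vert_{\infty}$; these are equivalent.
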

	\begin{proof}
		\begin{align}
			& \sum_{k=1}^{k_{T}} \Delta_{k}^{p2} \\
			=& \alpha \sum_{k=1}^{k_{T}} \sum\limits_{s} \nu_{k}(s)\bigg(\sum\limits_{\substack{a,s^{\prime}}} \pi_{k}(a | s) p\big(s^{\prime} | s, a\big) h_{k}\big(s^{\prime}\big)-h_{k}(s)\bigg)\nonumber\\
			\stackrel{(a)}{=} & \alpha \sum_{k=1}^{k_{T}} \sum\limits_{s} \nu_{k}(s)\bigg(\sum\limits_{\substack{a,s^{\prime}}} \pi_{k}(a | s) p\big(s^{\prime} | s, a\big) w_{k}\big(s^{\prime}\big)-w_{k}(s)\bigg)\nonumber\\
			= & \alpha \sum_{k=1}^{k_{T}} \sum_{t = t_k}^{t_{k+1}-1} \bigg(\sum\limits_{\substack{a, s^{\prime}}} \pi_{k}(a | s) p\big(s^{\prime} | s, a\big) w_{k}\big(s^{\prime}\big)-w_{k}(s_{t}) \bigg) \nonumber\\
			= & \alpha \sum_{k=1}^{k_{T}} \sum_{t = t_k}^{t_{k+1}-1} \bigg(\sum\limits_{\substack{a, s^{\prime}}} \pi_{k}(a | s) p\big(s^{\prime} | s, a\big) w_{k}\big(s^{\prime}\big)-w_{k}(s_{t+1}) + w_{k}(s_{t+1}) - w_{k}(s_{t}) \bigg) \nonumber\\
			= & \alpha \sum_{k=1}^{k_{T}} \sum_{t = t_k}^{t_{k+1}-1} \bigg(\sum\limits_{\substack{a, s^{\prime}}} \pi_{k}(a | s) p\big(s^{\prime} | s, a\big) w_{k}\big(s^{\prime}\big)-w_{k}(s_{t+1}) \bigg)  + \alpha \sum_{k=1}^{k_{T}} \big( w_{k}(s_{t_{k+1}}) - w_{k}(s_{t}) \big) \nonumber
		\end{align}
		where the equation $(a)$ comes by applying a constant shift same as in Proof of Lemma \ref{lem:delta_p1_p3} with 
		$\lambda_k \defeq -\frac{1}{2} (\min h_{k_t} + \max h_{k_t})$ and $w_k \defeq h_{k} + \lambda_k e$.

		Applying similar methodology in Lemma \ref{lem:r_t_azuma} and \ref{lem:delta_p1_p3}, we have 
		\begin{displaymath}
			\alpha \sum_{k=1}^{k_{T}} \sum_{t = t_k}^{t_{k+1}-1} \bigg(\sum\limits_{\substack{a, s^{\prime}}} \pi_{k}(a | s) p\big(s^{\prime} | s, a\big) w_{k}\big(s^{\prime}\big)-w_{k}(s_{t+1}) \bigg) \leq 2 D r_{\max} \sqrt{T \ln\big(\frac{6T}{\delta}\big)}
		\end{displaymath}
		with a probability at least $1-\frac{\delta}{6}$. On the other hand, $w_{k}(s_{t_{k+1}}) - w_{k}(s_{t}) \leq \frac{Dr_{\max}}{\alpha}$.

		The conclusion comes.
	\end{proof}

	\begin{lemma}
		\label{lem:delta_r}
		Under the case where the true MDP falls into the scope of plausible MDPs ($M \in \mathcal{M}_k, \forall k$), if $\hat{V}_r$ is set as the true value $V_r^T$, with probability at least $1-\frac{\delta}{6}$, we have $\sum\limits_{k=1}^{k_T} \Delta_k^{r} \leq 4 r_{\max} \sqrt{T \ln\big(\frac{4T}{\delta}\big)} + 2 \sum\limits_{k=1}^{k_T}\sum\limits_{s,a} \nu_{k}(s,a) (\beta_{r,k}^{sa}+V_r^T)$.
	\end{lemma}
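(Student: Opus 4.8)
The plan is to mirror the two-stage argument used for the transition term in Lemmas \ref{lem:delta_p1_p3} and \ref{lem:delta_p3}: first replace the policy-averaged reward gap by the reward gap evaluated at the \emph{actually executed} actions via a martingale concentration, and then bound the resulting per-pair gap using the width of the reward confidence interval $\mathcal{B}_r^k$. Note that $\Delta_k^r$ involves only the optimistic mean $r_k$ and the true mean $r$, not the observed rewards $r_t$, so no extra concentration on reward observations is needed here --- that randomness was already absorbed in Lemma \ref{lem:r_t_azuma}, and the concentration of $\hat{r}_k$ around $r$ is exactly the event $M\in\mathcal{M}_k$ guaranteed by Theorem \ref{thm:mdp_evi_true_range}.

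First I would rewrite the episodic sum as a sum over time steps, $\sum_{k=1}^{k_T}\Delta_k^r = \sum_{t=1}^T \sum_{a\in\mathcal{A}_{s_t}} \pi_{k_t}(a|s_t)\big(r_{k_t}(s_t,a)-r(s_t,a)\big)$, using the identity $\sum_k\sum_s \nu_k(s) f_k(s) = \sum_{t=1}^T f_{k_t}(s_t)$. I would then introduce the process $X_t \defeq \sum_{a} \pi_{k_t}(a|s_t)\big(r_{k_t}(s_t,a)-r(s_t,a)\big) - \big(r_{k_t}(s_t,a_t)-r(s_t,a_t)\big)$. Exactly as in Lemma \ref{lem:r_t_azuma}, the policy $\pi_{k_t}$, the optimistic reward $r_{k_t}$, and the state $s_t$ are $\mathcal{F}_{t-1}$-measurable while $a_t\sim\pi_{k_t}(\cdot|s_t)$, so $\mathbb{E}[X_t\mid\mathcal{F}_{t-1}]=0$ and $(X_t,\mathcal{F}_t)$ is an MDS; since $r_{k_t},r\in[0,r_{\max}]$ we have $|X_t|\leq 2r_{\max}$. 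Applying Azuma's inequality (Prop. \ref{prop:azuma}) with the calibration $\ln(1/\delta')=2\ln(4T/\delta)$ yields $\mathbb{P}\big(\sum_{t=1}^T X_t \geq 4r_{\max}\sqrt{T\ln(4T/\delta)}\big)\leq (\delta/4T)^2\leq \delta/(16T^2)$, and a union bound over $T\geq 1$ with $\sum_{T\geq1}T^{-2}=\pi^2/6$ gives, with probability at least $1-\delta/6$, the inequality $\sum_{k=1}^{k_T}\Delta_k^r \leq \sum_{k=1}^{k_T}\sum_{s,a}\nu_k(s,a)\big(r_k(s,a)-r(s,a)\big) + 4r_{\max}\sqrt{T\ln(4T/\delta)}$.

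Second I would bound the per-pair gap by the interval width, exactly analogous to inequalities $(e)$--$(f)$ in the proof of Lemma \ref{lem:delta_p3}. By the triangle inequality $|r_k(s,a)-r(s,a)| \leq |r_k(s,a)-\hat{r}_k(s,a)| + |r(s,a)-\hat{r}_k(s,a)|$. Because $r_k(s,a)\in\mathcal{B}_r^k(s,a)$ by construction, the first term is at most $\beta_{r,k}^{sa}+\hat{V}_r$; because the event $M\in\mathcal{M}_k$ places the true reward in $\mathcal{B}_r^k(s,a)$, the second term is at most $\beta_{r,k}^{sa}+V_r^T$. Setting $\hat{V}_r=V_r^T$ gives $r_k(s,a)-r(s,a)\leq 2(\beta_{r,k}^{sa}+V_r^T)$, which I would substitute into the previous display to recover the claimed bound $\sum_{k=1}^{k_T}\Delta_k^r \leq 4r_{\max}\sqrt{T\ln(4T/\delta)} + 2\sum_{k=1}^{k_T}\sum_{s,a}\nu_k(s,a)(\beta_{r,k}^{sa}+V_r^T)$.

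The martingale bookkeeping and the triangle inequality are routine; the only point needing care is the constant in the Azuma step. Here the bound $|X_t|\leq 2r_{\max}$ (rather than $r_{\max}$ as in Lemma \ref{lem:r_t_azuma}) propagates to the leading factor $4r_{\max}$, and the choice $\delta'=(\delta/4T)^2$ must be matched so that, after $\delta\leq1$ and the union bound over all horizons, the failure probability still closes at $\delta\pi^2/96\leq\delta/6$.
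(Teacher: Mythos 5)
Your proposal is correct and follows essentially the same route as the paper's own proof: the paper also defines $\Delta_k^{r1} \defeq \sum_{s,a}\nu_k(s,a)\big(r_k(s,a)-r(s,a)\big)$, uses an Azuma/MDS argument (as in Lemma \ref{lem:delta_p1_p3}) to get $\sum_k \Delta_k^r \leq \sum_k \Delta_k^{r1} + 4r_{\max}\sqrt{T\ln(4T/\delta)}$, and then bounds $\Delta_k^{r1}$ by the triangle inequality through $\hat{r}_k$ exactly as in Lemma \ref{lem:delta_p3}. Your write-up merely fills in the martingale construction and probability calibration that the paper leaves implicit, with matching constants throughout.
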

	\begin{proof}
		The proof is similar to that of Lemma \ref{lem:delta_p1_p3}.
		
		Denote $\Delta_k^{r1} \defeq \sum\limits_{s \in \mathcal{S}, a \in \mathcal{A}_{s}} \nu_{k}(s,a) \big(r_{k}(s, a)-r(s, a)\big)$. By Azuma's inequality in Prop. \ref{prop:azuma}, $\sum\limits_{k=1}^{k_T} \Delta_k^{r}  \leq \sum\limits_{k=1}^{k_T} \Delta_k^{r1} + 4 r_{\max} \sqrt{T \ln\big(\frac{4T}{\delta}\big)}$. Meanwhile, analogously to Lemma \ref{lem:delta_p3}, $\Delta_k^{r1} \leq 2\sum\limits_{s,a} \nu_{k}(s,a) (\beta_{r,k}^{sa}+V_r^T)$.
	\end{proof}


	\begin{proposition}
		\label{prop:sum_a_inequaility}
		For any proof $n \geq 1$ and any $n$-tuple $(a_1,\cdots,a_n) \in \mathbb{R}^{n}$, $(\sum_i^n a_i)^2 \leq n\sum_i^n (a_i)^2$. 
	\end{proposition}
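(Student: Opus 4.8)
The plan is to recognize the inequality as the special case of Cauchy--Schwarz obtained by pairing the tuple $(a_1,\dots,a_n)$ with the all-ones vector. First I would write $\sum_{i=1}^{n} a_i = \sum_{i=1}^{n} a_i\cdot 1$ and apply the Cauchy--Schwarz inequality to the two vectors $(a_1,\dots,a_n)$ and $(1,\dots,1)$ in $\mathbb{R}^n$, which gives $\left(\sum_{i=1}^{n} a_i\right)^2 \le \left(\sum_{i=1}^{n} a_i^2\right)\left(\sum_{i=1}^{n} 1^2\right)$. Since $\sum_{i=1}^{n} 1^2 = n$, the right-hand side equals $n\sum_{i=1}^{n} a_i^2$, which is exactly the claimed bound.

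As a self-contained alternative that avoids invoking Cauchy--Schwarz, I would instead expand the square directly and estimate the cross terms. Writing $\left(\sum_{i=1}^{n} a_i\right)^2 = \sum_{i=1}^{n} a_i^2 + 2\sum_{1\le i<j\le n} a_i a_j$, I would bound each cross term using the elementary inequality $2a_i a_j \le a_i^2 + a_j^2$, which follows immediately from $(a_i-a_j)^2 \ge 0$. Summing over all $\binom{n}{2}$ pairs, the off-diagonal contribution is at most $\sum_{1\le i<j\le n}(a_i^2 + a_j^2) = (n-1)\sum_{i=1}^{n} a_i^2$, so the total is at most $\sum_{i=1}^{n} a_i^2 + (n-1)\sum_{i=1}^{n} a_i^2 = n\sum_{i=1}^{n} a_i^2$.

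There is essentially no obstacle here: the statement is a standard consequence of Cauchy--Schwarz, or equivalently of the convexity of $x\mapsto x^2$ through Jensen's inequality applied to the uniform average $\frac{1}{n}\sum_{i=1}^{n} a_i$. The only minor point worth checking is the edge case $n=1$, where the inequality reduces to $a_1^2 \le a_1^2$ and holds with equality; both proposed routes dispose of this trivially, so I would favor the one-line Cauchy--Schwarz argument in the write-up.
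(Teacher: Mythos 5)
Your proof is correct, but both of your routes differ from the paper's own argument, which proceeds by induction on $n$: after checking $n=2$ by hand via $(a_1+a_2)^2 \le 2(a_1^2+a_2^2)$, the paper expands $\left(\sum_{i=1}^{n}a_i + a_{n+1}\right)^2$, applies the inductive hypothesis to $\left(\sum_{i=1}^{n}a_i\right)^2$, and absorbs the cross term using $2a_{n+1}a_i \le a_{n+1}^2 + a_i^2$. Your second argument is in essence the paper's proof with the induction unrolled: you expand the square once and apply the same elementary bound $2a_ia_j \le a_i^2 + a_j^2$ to all $\binom{n}{2}$ cross terms simultaneously; your count that each index appears in exactly $n-1$ pairs, giving the off-diagonal total $(n-1)\sum_{i=1}^{n}a_i^2$, is correct, and this version is arguably cleaner since it needs no inductive bookkeeping. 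Your first argument, Cauchy--Schwarz against the all-ones vector, is shorter still and sits naturally with the paper's toolkit (Cauchy--Schwarz is already invoked elsewhere, e.g.\ in bounding the transition-probability terms in the proof of Theorem \ref{thm:regret_bound_vb-ucrl}); what the paper's induction buys instead is self-containedness, deriving the proposition from nothing beyond $(x-y)^2 \ge 0$ rather than quoting a standard inequality. All routes, yours and the paper's, dispose of the edge case $n=1$ trivially, so there is no gap in your proposal.
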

	\begin{proof}
		The statement is trivially true for $n=1$. For $n=2$, we have $(a_1 + a_2)^2 = a_1^2 + a_2^2 + 2a_1a_2 \leq 2(a_1^2 + a_2^2)$. Next we prove results for $n \geq 2$ by induction. Assume that for $n \geq 2$, $(\sum_{i=1}^n a_i)^2 \leq n\sum_{i=1}^n (a_i)^2$. Then for $n+1$, we have
		\begin{align}
			(\sum_{i=1}^{n+1} a_i)^2 & = (\sum_{i=1}^{n} a_i + a_{n+1})^2 \nonumber\\
			& = (\sum_{i=1}^{n} a_i)^2  + a_{n+1}^2 + 2 a_{n+1}\sum_i^{n} a_i\nonumber\\
			& \leq n\sum_{i=1}^n (a_i)^2 + a_{n+1}^2 + 2 a_{n+1}\sum_{i=1}^{n} a_i\\
			& \leq n\sum_{i=1}^n (a_i)^2 + a_{n+1}^2 + \sum_{i=1}^{n} (a_{n+1}^2 + a_i^2) \nonumber\\
			& = (n+1) \sum_{i=1}^{n+1} (a_i)^2\nonumber
		\end{align}
		We conclude the proof.
	\end{proof}
	\begin{proposition}
		\label{prop:sum_to_ln}
		For any sequence of numbers $z_1, \cdots, z_i, \cdots$ with $0 \leq z_i \leq Z_{i-1} \defeq \max \big\{1,\sum_{i=1}^{k-1} z_i\big\}$, we have for $n \geq 1$,
		\begin{align}
			\sum_{i=1}^{n} \frac{z_i}{Z_i} \leq 2 + 2\ln(Z_{n+1})
		\end{align}
		and
		\begin{align}
			\sum_{i=1}^{n} \frac{z_i}{\sqrt{Z_i}} \leq 3\sqrt{Z_{i+1}}
		\end{align}
	\end{proposition}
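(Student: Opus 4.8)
The plan is to prove both inequalities by the same device: a comparison inequality that telescopes, applied after isolating the regime where the truncation $\max\{1,\cdot\}$ is active. Throughout I write $S_k \defeq \sum_{i=1}^{k} z_i$ for the raw partial sums, so that $Z_k = \max\{1,S_k\}$ with $Z_0 = 1$; the sequence $(Z_k)$ is then non-decreasing with $Z_k \ge 1$, and the hypothesis $z_i \le Z_{i-1}$ will be used only to control the single term at which $S$ first crosses $1$.

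First I would partition the index set. Let $m$ be the largest index with $S_m \le 1$ (equivalently $Z_i = 1$ for all $i \le m$), so that $\{1,\dots,m\}$ is the \emph{flat segment} and $m+1$ is the lone \emph{crossing} index at which the cumulative sum first exceeds $1$. On the flat segment $Z_i = 1$, whence $z_i/Z_i = z_i/\sqrt{Z_i} = z_i$ and $\sum_{i\le m} z_i = S_m \le 1$, so this piece contributes at most $1$ to either sum. For the crossing term the hypothesis gives $z_{m+1} \le Z_m = 1 \le Z_{m+1}$, so that $z_{m+1}/Z_{m+1} \le 1$, while $z_{m+1} \le S_{m+1} = Z_{m+1}$ yields $z_{m+1}/\sqrt{Z_{m+1}} \le \sqrt{Z_{m+1}}$.

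The substance is the tail $i \ge m+2$, where $S_{i-1}$ and $S_i$ both exceed $1$, hence $Z_i = S_i$, $Z_{i-1} = S_{i-1}$, and crucially $z_i = Z_i - Z_{i-1}$. For the first sum I use that $1/x$ is decreasing to get $\frac{Z_i - Z_{i-1}}{Z_i} \le \int_{Z_{i-1}}^{Z_i}\frac{dx}{x} = \ln Z_i - \ln Z_{i-1}$, which telescopes to $\ln Z_n - \ln Z_{m+1} \le \ln Z_n$. For the second I use $\frac{Z_i - Z_{i-1}}{\sqrt{Z_i}} \le 2(\sqrt{Z_i} - \sqrt{Z_{i-1}})$, which follows from $\sqrt{Z_i}-\sqrt{Z_{i-1}} = (Z_i - Z_{i-1})/(\sqrt{Z_i}+\sqrt{Z_{i-1}}) \ge (Z_i - Z_{i-1})/(2\sqrt{Z_i})$, and telescopes to $2(\sqrt{Z_n} - \sqrt{Z_{m+1}})$.

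Adding the three pieces closes both claims. For the first, $\sum_{i=1}^{n} z_i/Z_i \le 1 + 1 + \ln Z_n = 2 + \ln Z_n \le 2 + 2\ln Z_{n+1}$, using $Z_n \le Z_{n+1}$ and $Z_{n+1}\ge 1$. For the second, $\sum_{i=1}^{n} z_i/\sqrt{Z_i} \le 1 + \sqrt{Z_{m+1}} + 2(\sqrt{Z_n} - \sqrt{Z_{m+1}}) = 1 - \sqrt{Z_{m+1}} + 2\sqrt{Z_n} \le 2\sqrt{Z_n} \le 3\sqrt{Z_{n+1}}$, since $\sqrt{Z_{m+1}} \ge 1$. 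The delicate point, and the one I would flag as the main obstacle, is exactly this truncation boundary: the telescoping identity $z_i = Z_i - Z_{i-1}$ fails while $S_i \le 1$, so the flat segment and the single crossing index must be peeled off and bounded separately before the comparison inequalities apply; the degenerate cases $m = n$ (the sum never exceeds $1$) and $m = 0$ (no flat segment) are routine and follow by reading the same bounds with one piece empty.
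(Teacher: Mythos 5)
Your proof is correct, and it reaches the stated bounds by a mechanism genuinely different from the paper's. The paper argues by induction: after a base case covering the flat regime (which contributes the additive constant $2$), the inductive step absorbs each increment $\frac{z_{k+1}}{Z_{k+1}}=\frac{Z_{k+2}-Z_{k+1}}{Z_{k+1}}$ into the logarithm via the elementary inequality $t-1\leq 2\ln t$, valid only for $t\in[1,2]$; the hypothesis $z_i\leq Z_{i-1}$ is thus needed at \emph{every} step to keep the ratio $t=Z_{k+2}/Z_{k+1}$ below $2$, and this restriction is precisely where the factor $2$ in $2+2\ln(\cdot)$ comes from. Your telescoping step instead puts the \emph{right} endpoint in the denominator, $\frac{Z_i-Z_{i-1}}{Z_i}\leq \ln Z_i-\ln Z_{i-1}$ (equivalently $1-1/t\leq\ln t$, valid for all $t\geq 1$), so no ratio control is needed, the hypothesis is consumed only at the single crossing index, and you obtain the sharper constants $2+\ln Z_n$ and $2\sqrt{Z_n}$; you also treat the square-root inequality explicitly, which the paper waves off with ``similarly.'' One caveat is worth flagging: the statement's definition of $Z$ is garbled, and the two proofs resolve it differently. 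Your identity $z_i=Z_i-Z_{i-1}$ reads $Z_i$ as the truncated partial sum \emph{including} $z_i$, whereas the paper's identity $z_{k+1}=Z_{k+2}-Z_{k+1}$, as well as the application in Theorem \ref{thm:regret_bound_vb-ucrl} to sums of the form $\sum_k \nu_k(s,a)/N_k^{+}(s,a)$ (whose denominators are counts \emph{before} episode $k$), reads $Z_i$ as the truncated sum of $z_1,\dots,z_{i-1}$. Your convention places the larger quantity in the denominator, so what you prove is the formally weaker variant; it still implies the version the paper actually uses, since the hypothesis gives $Z_i\leq Z_{i-1}+z_i\leq 2Z_{i-1}$, hence $\sum_i z_i/Z_{i-1}\leq 2\sum_i z_i/Z_i\leq 4+2\ln Z_n$ and $\sum_i z_i/\sqrt{Z_{i-1}}\leq \sqrt{2}\sum_i z_i/\sqrt{Z_i}\leq 2\sqrt{2}\,\sqrt{Z_n}\leq 3\sqrt{Z_n}$, but you should state explicitly which convention you adopt so the reader can check the transfer.
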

	\begin{proof}
		We first show that the proposition holds for all $n$ with $Z_{n} \leq 1$. In this case, $\sum_{i=1}^{n + 1} \frac{z_i}{Z_i} = \sum_{i=1}^{n} \frac{z_i}{Z_i} + \frac{z_{n+1}}{Z_{n+1}} \leq 1 + 1 < 2 + 2 \ln(Z_{n+1})$. Notably, this implies the proposition holds for $n = 1$, since $Z_n = \max\{1,\sum_{i=1}^{n-1} z_i\} =\max\{1,0\}=1$.

		Next, we show that the proposition holds by induction. Assume that $\sum_{i=1}^{n + 1} \frac{z_i}{Z_i} = \sum_{i=1}^{n} \frac{z_i}{Z_i} + \frac{z_{n+1}}{Z_{n+1}} \leq 1 + 1 < 2 + 2 \ln(Z_{n+1})$ holds for $n = k \geq 1$ with $Z_{n} \geq 1$. For $n = k+1$, we have 
		\begin{align}
			\sum_{i=1}^{k + 1} \frac{z_i}{Z_i} & = \sum_{i=1}^{k} \frac{z_i}{Z_i} + \frac{z_{k+1}}{Z_{k+1}} \nonumber\\
			& \leq 2 + 2 \ln(Z_{k+1}) +  \frac{z_{k+1}}{Z_{k+1}}\nonumber\\
			& = \leq 2 + 2 \ln(Z_{k+1}) +  \frac{Z_{k+2} - Z_{k+1}}{Z_{k+1}}\\
			& \stackrel{(a)}{\leq} 2 + 2 \ln(Z_{k+2})\nonumber
		\end{align}
		where the equation $(a)$ holds since $\frac{Z_{k+2}}{Z_{k+1}} \in [1,2]$ and $2\ln(t) \geq t - 1$ for $t \in [1,2]$.

		Similarly, we can have the proof of the latter part.
	\end{proof}
	\begin{proposition}
		\label{prop:bound_p3_prop1}
		It holds almost surely that for all $k \geq 1$ and for all $(s,a,s') \in \mathcal{S}\times \mathcal{A} \times \mathcal{S}$:
		\begin{align}
			& \sum_{s'\in \mathcal{S}} \sqrt{\hat{p}_k(s'|s,a)(1- \hat{p}_k(s'|s,a))} 
			\leq  \sqrt{\Gamma(s,a)-1} 
		\end{align}
	\end{proposition}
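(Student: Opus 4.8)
The plan is to fix an arbitrary state-action pair $(s,a)$ and episode index $k$, abbreviate $p_{s'} \defeq \hat{p}_k(s'|s,a)$, and reduce the claim to an elementary inequality about a finite probability vector. The first step is a \emph{support} argument: since $\hat{p}_k(\cdot|s,a)$ is an empirical frequency built only from observed successor states, it can be positive only on states that are genuinely reachable from $(s,a)$, i.e. on states with $p(s'|s,a)>0$. Hence, writing $m$ for the number of indices $s'$ with $p_{s'}>0$, we have $m \leq \Gamma(s,a)$ because $\Gamma(s,a) = \Vert p(\cdot|s,a)\Vert_0$ counts exactly those reachable states; moreover the terms with $p_{s'}=0$ contribute $\sqrt{0\cdot 1}=0$, so the sum effectively ranges over these $m$ supporting indices only. (If $N_k(s,a)=0$, then $\hat p_k(\cdot|s,a)\equiv 0$, the left-hand side vanishes, and the bound is trivial, so I may assume $m\geq 1$, in which case $\hat p_k(\cdot|s,a)$ is a genuine distribution with $\sum_{s'}p_{s'}=1$.)

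Next I would invoke Proposition \ref{prop:sum_a_inequaility} twice. Applying it with $a_{s'}=\sqrt{p_{s'}(1-p_{s'})}$ over the $m$ supporting indices gives
\[
	\Big( \sum_{s'} \sqrt{p_{s'}(1-p_{s'})} \Big)^2 \leq m \sum_{s'} p_{s'}(1-p_{s'}) = m\Big(1 - \sum_{s'} p_{s'}^2\Big),
\]
where the last equality uses $\sum_{s'} p_{s'}=1$. Applying the same proposition with $a_{s'}=p_{s'}$ yields $1 = \big(\sum_{s'} p_{s'}\big)^2 \leq m \sum_{s'} p_{s'}^2$, that is $\sum_{s'} p_{s'}^2 \geq 1/m$. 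Combining the two bounds,
\[
	\Big( \sum_{s'} \sqrt{p_{s'}(1-p_{s'})} \Big)^2 \leq m\Big(1 - \frac{1}{m}\Big) = m-1 \leq \Gamma(s,a)-1,
\]
and taking square roots delivers the claim.

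The argument is essentially a double application of the Cauchy-Schwarz inequality (packaged here as Proposition \ref{prop:sum_a_inequaility}), so there is no serious computational obstacle. The only point demanding care is the support argument, namely justifying that $\hat{p}_k(\cdot|s,a)$ is supported on at most $\Gamma(s,a)$ states; this requires no concentration inequality, following purely from the definition of $\Gamma(s,a)$ together with the fact that an observed successor is necessarily reachable. Because this reasoning holds for every realized trajectory and uniformly in $k$, the bound holds almost surely for all $k\geq 1$ and all $(s,a,s')$, as stated.
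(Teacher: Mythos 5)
Your proof is correct, and it reaches the paper's bound by a different packaging of the same two ingredients (support counting plus a Cauchy--Schwarz-type estimate), so a brief comparison is worthwhile. The paper restricts the sum to the empirical support $\mathcal{S}_k^{(s,a)} = \{s' : \hat{p}_k(s'|s,a) > 0\}$, of cardinality $m$, and applies Cauchy--Schwarz once in product form, splitting $\sqrt{\hat{p}_k(1-\hat{p}_k)}$ as $\sqrt{1-\hat{p}_k}\cdot\sqrt{\hat{p}_k}$, which yields $\sqrt{\left(\sum_{s'}(1-\hat{p}_k)\right)\left(\sum_{s'}\hat{p}_k\right)} = \sqrt{(m-1)\cdot 1}$ in one shot. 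You instead keep $\sqrt{\hat{p}_k(1-\hat{p}_k)}$ intact and apply Proposition~\ref{prop:sum_a_inequaility} twice: first to obtain the bound $m\left(1-\sum_{s'}\hat{p}_k^2\right)$, then to $\hat{p}_k$ itself to obtain $\sum_{s'}\hat{p}_k^2 \ge 1/m$, and the two recombine to the same quantity $m-1$. Your route costs one extra step, but it reuses a lemma the paper has already proved rather than invoking Cauchy--Schwarz in a second form, and --- more importantly --- it makes explicit the two facts the paper's own write-up leaves implicit: that $\sum_{s'}\hat{p}_k = 1$ on the support (after dispatching the $N_k(s,a)=0$ case separately, as you do), and that $m \le \Gamma(s,a)$ holds almost surely because every observed successor of $(s,a)$ lies in the support of $p(\cdot|s,a)$. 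That second point is precisely where the paper's displayed chain is garbled: its penultimate inequality enlarges the factor $\sum_{s'}(1-\hat{p}_k)$ from $\mathcal{S}_k^{(s,a)}$ to all of $\mathcal{S}$, which would give $\sqrt{S-1}$ rather than $\sqrt{\Gamma(s,a)-1}$; the intended step is exactly the support-cardinality argument you spelled out, so your version is in fact the cleaner record of the proof.
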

	\begin{proof}
		Define $\mathcal{S}_k^{(s,a)} = \{s^{\prime} \in \mathcal{S}: \hat{p}_k(s^{\prime}|s,a) > 0\}$. Using Cauchy-Schwarz inequality, we have 
		\begin{align}
			& \sum_{s'\in \mathcal{S}} \sqrt{\hat{p}_k(s'|s,a)(1- \hat{p}_k(s'|s,a))} \nonumber\\
			= & \sum_{s'\in \mathcal{S}_k^{(s,a)}} \sqrt{(1- \hat{p}_k(s'|s,a)) \hat{p}_k(s'|s,a)} \nonumber\\
			\leq &  \sqrt{\sum_{s'\in \mathcal{S}_k^{(s,a)}}(1- \hat{p}_k(s'|s,a)) \sum_{s'\in \mathcal{S}_k^{(s,a)}}\hat{p}_k(s'|s,a)} \nonumber\\
			\leq &  \sqrt{\sum_{s'\in \mathcal{S}}(1- \hat{p}_k(s'|s,a)) \sum_{s'\in \mathcal{S}_k^{(s,a)}}\hat{p}_k(s'|s,a)} \nonumber\\
			\leq & \sqrt{\Gamma(s,a)-1} \nonumber
		\end{align}
		
		The lemma comes.
	\end{proof}
	\begin{proposition}[Prop. 18 of \cite{jaksch_near-optimal_2010}]
		\label{prop:bound_k_t}
		For all $T \geq SA$, the number of episodes $k_T$ up to step $T \geq SA$ is upper bounded as $SA \log_2 \big( \frac{8T}{SA}\big)$.
	\end{proposition}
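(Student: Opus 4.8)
The plan is to bound the number of episodes by a counting argument over state-action pairs that exploits the doubling nature of the episode-termination rule. First I would observe that the stopping rule of Alg.~\ref{al:vb-ucrl} ends episode $k$ exactly when the pair $(s_t,a_t)$ just played satisfies $\nu_k(s_t,a_t) = N_k^{+}(s_t,a_t)$; since a single pair is incremented at each step, every completed episode has a unique \emph{trigger} pair, and I would assign the episode to it. Writing $m_{s,a}$ for the number of episodes triggered by $(s,a)$, the total count then satisfies $k_T \leq 1 + \sum_{s,a} m_{s,a}$, the extra $1$ accounting for the episode still in progress at time $T$.

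The key step is to quantify how fast a trigger pair's visit count grows. When $(s,a)$ triggers episode $k$, the update $N_{k+1}(s,a) = N_k(s,a) + \nu_k(s,a) = N_k(s,a) + N_k^{+}(s,a)$ gives $N_{k+1}(s,a) = 2N_k(s,a)$ whenever $N_k(s,a) \geq 1$, and $N_{k+1}(s,a) = 1$ when $N_k(s,a) = 0$. Hence a pair's count at least doubles at every trigger after its first visit, so after $m_{s,a}$ triggers it satisfies $N_{k_T+1}(s,a) \geq 2^{m_{s,a}-1}$, i.e. $m_{s,a} \leq 1 + \log_2 N_{k_T+1}^{+}(s,a)$.

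Summing over all pairs and invoking concavity of the logarithm through Jensen's inequality (as in step $(b)$ of the proof of Theorem~\ref{thm:regret_bound_vb-ucrl}),
\begin{align*}
	\sum_{s,a} m_{s,a} &\leq SA + \sum_{s,a}\log_2 N_{k_T+1}^{+}(s,a) \\
	&\leq SA + SA\log_2\Big(\tfrac{1}{SA}\sum_{s,a} N_{k_T+1}^{+}(s,a)\Big).
\end{align*}
Since each of the $T$ steps increments exactly one visit count, $\sum_{s,a} N_{k_T+1}^{+}(s,a)$ is of order $T$, so the right-hand side has the form $SA\big(c + \log_2(T/SA)\big)$ for a small numerical constant $c$; the claimed $SA\log_2(8T/SA) = SA\big(3 + \log_2(T/SA)\big)$ then follows for all $T \geq SA$.

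The argument is elementary, and I expect the only real care to lie in the boundary bookkeeping. A pair's first visit raises its count from $0$ to $1$ rather than doubling it, the episode in progress at time $T$ has not triggered and so is unaccounted for by the $m_{s,a}$, and replacing $N_{k_T+1}$ by $N_{k_T+1}^{+}$ overcounts never-visited pairs; each of these contributes only an additive $O(SA)$ term or a bounded multiplicative factor inside the logarithm. Collecting them is where the constant matters, and the generous factor $8$ in the statement is precisely what absorbs them --- indeed this is Proposition~18 of \cite{jaksch_near-optimal_2010}, whose constant can be quoted directly.
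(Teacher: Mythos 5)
Your proof is correct, and it is essentially the argument behind the cited result: the paper itself offers no proof of this proposition, quoting it directly as Proposition 18 of \cite{jaksch_near-optimal_2010}, whose proof is exactly your trigger-pair doubling argument followed by Jensen's inequality. The bookkeeping you flag does close cleanly: with $\sum_{s,a} N^{+}_{k_T+1}(s,a) \leq T + SA \leq 2T$ for $T \geq SA$, one gets $k_T \leq 1 + SA + SA\log_2\big(\tfrac{2T}{SA}\big) \leq SA\log_2\big(\tfrac{8T}{SA}\big)$, the last step using only $SA \geq 1$.
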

	
	\begin{proposition}
	\label{prop:bound_ucrlb_comb_s}		
	\eqref{eq:bound_ucrlb_comb_f} could be further simplified as 
	\begin{align}
		& \Delta(\text{VB-UCRL}, T)  \nonumber\\
		\leq & \max(r_{\max},Dr_{\max}) \bigg(43 \sqrt{T\ln\big(\frac{T}{\delta}\big) \sum_{s,a}\Gamma(s,a)}  \\
		& \quad + 72 S^2A \ln\big(\frac{T}{\delta}\big) \ln(T) \bigg) +  Dr_{\max}T V_p^T  +  2TV_r^T \nonumber
	\end{align}
	\end{proposition}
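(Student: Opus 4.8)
The plan is to treat Proposition~\ref{prop:bound_ucrlb_comb_s} as a pure bookkeeping step: the right-hand side of \eqref{eq:bound_ucrlb_comb_f} is a sum of roughly eight ``noise'' terms together with the two variation terms $Dr_{\max}TV_p^T$ and $2TV_r^T$, and I want to fold the eight noise terms into just two representatives, namely a $\sqrt{T\ln(T/\delta)\sum_{s,a}\Gamma(s,a)}$ term and an $S^2A\ln(T/\delta)\ln T$ term. First I would factor $\max(r_{\max},Dr_{\max})$ out of every noise term; this is legitimate because each coefficient is a numerical constant times either $r_{\max}$ or $Dr_{\max}$, and both are bounded by $\max(r_{\max},Dr_{\max})$. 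The two variation terms already match the target verbatim and simply pass through.

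Before collecting, I would record three elementary inequalities that power all the simplifications. Since $\delta\in(0,1)$ we have $-\ln\delta\ge 0$, hence $\ln T\le \ln(T/\delta)$ and, for $T$ past a fixed threshold, $1+\ln T\le 2\ln T$ (the finitely many small-$T$ cases are absorbed into the constant). Using the standing hypothesis $T\ge SA$ together with $\delta\le 1$, every logarithmic argument of the form $\ln(6SAT/\delta)$ (and likewise $\ln(4T/\delta)$, $\ln(6T/\delta)$, $\log_2(8T/SA)$) satisfies $6SAT\le 6T^2\le T^3$, so $\ln(6SAT/\delta)\le\ln(T^3/\delta)=3\ln T-\ln\delta\le 3\ln(T/\delta)$. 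Finally, because $\Gamma(s,a)\ge 1$ for every pair, $SA\le\sum_{s,a}\Gamma(s,a)$, which lets me push the bare $SA$ (and $1$) factors under the square roots.

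With these in hand the grouping is mechanical. For the square-root block --- the terms $6r_{\max}\sqrt{T\ln(4T/\delta)}$, $6Dr_{\max}\sqrt{T\ln(6T/\delta)}$, $12r_{\max}\sqrt{SAT\ln(6SAT/\delta)}$ and $6Dr_{\max}\sqrt{(\sum_{s,a}\Gamma(s,a))T\ln(6SAT/\delta)}$ --- I use $\sum_{s,a}\Gamma(s,a)\ge 1$ on the first two and $SA\le\sum_{s,a}\Gamma(s,a)$ on the third to give each the common shape $\sqrt{T\ln(T/\delta)\sum_{s,a}\Gamma(s,a)}$, apply the log conversion to replace every $\ln(\cdot/\delta)$ by a constant multiple of $\ln(T/\delta)$, and add the resulting numerical constants to check they do not exceed $43$. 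For the logarithmic block, the dominant term is the Bernstein remainder $12Dr_{\max}S^2A\ln(6SAT/\delta)(1+\ln T)$: applying $\ln(6SAT/\delta)\le 3\ln(T/\delta)$ and $1+\ln T\le 2\ln T$ turns it into $72\,\max(r_{\max},Dr_{\max})S^2A\ln(T/\delta)\ln T$, i.e.\ the $72=12\cdot 3\cdot 2$ appearing in the target. The remaining logarithmic terms $Dr_{\max}SA\log_2(8T/SA)$, $3r_{\max}SA(1+\ln T)$ and $24r_{\max}SA\ln(6SAT/\delta)(1+\ln T)$ all carry only $SA\le S^2A$ with smaller numerical constants, so each is dominated by this term.

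I expect the only real obstacle to be the constant accounting rather than any new idea: I must verify that after the loose log conversions the aggregated coefficient of the square-root block genuinely stays at or below $43$, and that every lower-order logarithmic term is dominated by $72\,\max(r_{\max},Dr_{\max})S^2A\ln(T/\delta)\ln T$ once the $SA\le S^2A$ slack and the $1+\ln T\le 2\ln T$ step are used. Since all conversions go in the safe direction (enlarging the bound) and the two variation terms are untouched, collecting everything yields exactly the claimed inequality.
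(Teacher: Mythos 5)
Your overall plan --- factor out $\max(r_{\max},Dr_{\max})$, convert every logarithm into a multiple of $\ln(T/\delta)$, use $SA\le\sum_{s,a}\Gamma(s,a)$ and $1+\ln T\le 2\ln T$, and collect \eqref{eq:bound_ucrlb_comb_f} into a square-root block and a log block --- is exactly the paper's strategy, but the constant verification you defer to the end is precisely where the argument breaks, and it does break under your conversions. Your inequality $\ln(6SAT/\delta)\le 3\ln(T/\delta)$, derived from $T\ge SA$ alone, is too weak: the square-root block consists of four terms with coefficients $6,6,12,6$ (namely $6r_{\max}\sqrt{T\ln(4T/\delta)}$, $6Dr_{\max}\sqrt{T\ln(6T/\delta)}$, $12r_{\max}\sqrt{SAT\ln(6SAT/\delta)}$, and $6Dr_{\max}\sqrt{(\sum_{s,a}\Gamma(s,a))\,T\ln(6SAT/\delta)}$), so after your factor-$3$ conversion it totals $30\sqrt{3}\approx 52>43$. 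The paper avoids this by a case split on $T$ that your proposal lacks: for $T<6SA$ the trivial bound $\Delta(\text{VB-UCRL},T)\le r_{\max}T\le r_{\max}\sqrt{6SAT}\le r_{\max}\sqrt{6T\sum_{s,a}\Gamma(s,a)}$ already implies the claim, while for $T\ge 6SA$ one has $6SAT\le T^2$, hence the sharper conversion $\ln(6SAT/\delta)\le 2\ln(T/\delta)$ and a block total $30\sqrt{2}\approx 42.4\le 43$; a second threshold, $T\ge 43^2A\ln(T/\delta)$, is used to fold $Dr_{\max}SA\log_2(8T/SA)$ into the square-root block with coefficient $2/43$, whereas you push that term into the log block and thereby inflate that budget too.

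The log-block step has a second, logical, flaw: showing that each remaining logarithmic term is individually ``dominated by'' the Bernstein remainder $12Dr_{\max}S^2A\ln(6SAT/\delta)(1+\ln T)$ does not bound their \emph{sum} by the target. Under your own conversion the Bernstein remainder alone already converts to $72\max(r_{\max},Dr_{\max})S^2A\ln(T/\delta)\ln T$, i.e.\ it exhausts the entire budget; adding $24r_{\max}SA\ln(6SAT/\delta)(1+\ln T)$ (which converts to roughly $144\,r_{\max}SA\ln(T/\delta)\ln T$), $3r_{\max}SA(1+\ln T)$, and your relocated $\log_2$ term then necessarily overshoots $72\,S^2A\ln(T/\delta)\ln T$ by a multiplicative factor. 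To land on $72$ you need the factor-$2$ conversion coming from the $T\ge 6SA$ case split together with an explicit redistribution of the $SA\le S^2A$ slack across the lower-order terms (the paper's accounting is the sum $24+48=72$), not a pointwise domination argument. So the structure of your proof matches the paper's, but without the case analysis on $T$ the claimed constants $43$ and $72$ are not actually achieved.
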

	\begin{proof}
		For $T < 6SA$, we can directly have 
		\begin{align}
			\Delta(\text{VB-UCRL}, T) & \leq r_{\max}T = r_{\max} \sqrt{T} \sqrt{T} \\
			& \leq r_{\max} \sqrt{6SAT} \leq \sqrt{6T \sum_{s,a}\Gamma(s,a)} \nonumber
		\end{align}
		Also, if $1 \leq T \leq 43^2 A \log \big(\frac{T}{\delta} \big)$, we have $T^2 \leq 43^2 AT \log \big(\frac{T}{\delta}\big)$ (or $T\leq \sqrt{AT \log \big(\frac{T}{\delta}\big)}$), thus  $\Delta(\text{VB-UCRL}, T) \leq r_{\max}\sqrt{AT \log \big(\frac{T}{\delta}\big)}$.

		For $T \geq 6SA$, we have $6SAT \leq T^2$, thus $\ln \big(\frac{6SAT}{\delta}\big) \leq \ln \big(\frac{T^2}{\delta}\big) \leq 2\ln \big(\frac{T}{\delta}\big)$. Similarly, if $T \geq 43^2 A \log \big(\frac{T}{\delta} \big)$, we have $A \leq \frac{\sqrt{A T \log \big(\frac{T}{\delta} \big)}}{43 \log \big(\frac{T}{\delta} \big)}$. Together with $\log(8T) \leq 2\log(T)$, $\log (\frac{8T}{SA}) \leq \frac{2}{43} \sqrt{A T \log \big(\frac{T}{\delta} \big)} $ 

		Thus, we can simplify \eqref{eq:bound_ucrlb_comb_f} as 
		\begin{align}
			& \Delta(\text{VB-UCRL}, T)  \nonumber\\
			\leq & \max(r_{\max},Dr_{\max}) \bigg(\sqrt{T\ln\big(\frac{T}{\delta}\big) \sum_{s,a}\Gamma(s,a)} \times  (6\sqrt{2}+ 6\sqrt{2} + 12\sqrt{2} + \frac{2}{43}) \\
			& \quad + S^2A \ln\big(\frac{T}{\delta}\big) \ln(T) (24 + 48) \bigg) +  Dr_{\max}T V_p^T  +  2TV_r^T \nonumber\\
			\leq & \max(r_{\max},Dr_{\max}) \bigg(43 \sqrt{T\ln\big(\frac{T}{\delta}\big) \sum_{s,a}\Gamma(s,a)}  \nonumber \\
			& \quad + 72 S^2A \ln\big(\frac{T}{\delta}\big) \ln(T) \bigg) +  Dr_{\max}T V_p^T  +  2TV_r^T \nonumber
		\end{align}

		Replacing $\delta^{\prime} = \frac{5}{6}\delta$, and taking account of $\log \big(\frac{T}{\delta^{\prime}} \big) = \log \big(\frac{6T}{5\delta}\big) < 2 \log \big(\frac{T}{\delta}\big)$, we have the conclusion.
	\end{proof}
	\bibliographystyle{IEEEtran}
	\bibliography{bib}
\end{document}